\documentclass{article}

\usepackage{microtype}

\usepackage{amssymb}
\usepackage{pifont}

\usepackage{graphicx}
\usepackage{subcaption}
\usepackage{booktabs} % for professional tables
\usepackage{tikz}

\usepackage[colorlinks=true]{hyperref}
\usepackage{tabularx,colortbl}
\usepackage{xcolor}

\usepackage[accepted]{icml2026}

\usepackage{url} 
\hypersetup{colorlinks=true,linkcolor=blue,citecolor=blue,linktocpage,urlcolor=blue}

\usepackage{amsmath}
\usepackage{mathtools}
\usepackage{amsthm}
\usepackage{enumitem}
\setlist{nosep} % Applies to all itemize, enumerate, and description

\usepackage[capitalize,noabbrev]{cleveref}

\usepackage{balance}

\theoremstyle{plain}
\newtheorem{theorem}{Theorem}[section]
\newtheorem{proposition}[theorem]{Proposition}
\newtheorem{lemma}[theorem]{Lemma}

\theoremstyle{definition}
\newtheorem{definition}[theorem]{Definition}
\newtheorem{assumption}[theorem]{Assumption}
\theoremstyle{remark}
\newtheorem{remark}[theorem]{Remark}

\newcommand{\R}{\mathbb{R}}

\newcommand{\E}{\mathbb{E}}
\newcommand{\Prob}{\mathbb{P}}

\newcommand{\cM}{\mathcal{M}}

\newcommand{\cT}{\mathcal{T}}

\newcommand{\norm}[1]{\left\| #1 \right\|}

\newcommand{\dimE}{\mathrm{dim}_E}

\newcommand{\argmax}{\mathop{\mathrm{argmax}}}

\newcommand{\muref}{\mu^{\mathrm{ref}}}

\icmltitlerunning{Minimax-Optimal Policy Regret in Partially Observable Markov Games}

\begin{document}
% --- Global vertical-space tightening (proceedings layout) ---
\setlength{\abovedisplayskip}{5pt plus 1pt minus 3pt}
\setlength{\belowdisplayskip}{5pt plus 1pt minus 3pt}
\setlength{\abovedisplayshortskip}{1pt plus 1pt minus 1pt}
\setlength{\belowdisplayshortskip}{3pt plus 1pt minus 2pt}
\setlength{\jot}{4pt}
\setlength{\parskip}{4pt plus 1pt minus 2pt}
\setlength{\textfloatsep}{9pt plus 2pt minus 4pt}
\setlength{\abovecaptionskip}{4pt}

\twocolumn[
  \icmltitle{Minimax-Optimal Policy Regret in Partially Observable Markov Games}

  % It is OKAY to include author information, even for blind submissions: the
  % style file will automatically remove it for you unless you've provided
  % the [accepted] option to the icml2026 package.

  % List of affiliations: The first argument should be a (short) identifier you
  % will use later to specify author affiliations Academic affiliations
  % should list Department, University, City, Region, Country Industry
  % affiliations should list Company, City, Region, Country

  % You can specify symbols, otherwise they are numbered in order. Ideally, you
  % should not use this facility. Affiliations will be numbered in order of
  % appearance and this is the preferred way.
  \icmlsetsymbol{equal}{*}

  \begin{icmlauthorlist}
    \icmlauthor{Raman Arora}{yyy}
    
    %\icmlauthor{}{sch}
    %\icmlauthor{}{sch}
  \end{icmlauthorlist}

  \icmlaffiliation{yyy}{Department of Computer Science, Johns Hopkins University, Baltimore, MD, USA}
  
  \icmlcorrespondingauthor{Raman Arora}{arora@cs.jhu.edu}
  
  % You may provide any keywords that you find helpful for describing your
  % paper; these are used to populate the ``keywords'' metadata in the PDF but
  % will not be shown in the document
  \icmlkeywords{Machine Learning, ICML}

  \vskip 0.3in
]

% this must go after the closing bracket ] following \twocolumn[ ...

% This command actually creates the footnote in the first column listing the
% affiliations and the copyright notice. The command takes one argument, which
% is text to display at the start of the footnote. The \icmlEqualContribution
% command is standard text for equal contribution. Remove it (just {}) if you
% do not need this facility.

% Use ONE of the following lines. DO NOT remove the command.
% If you have no special notice, KEEP empty braces:
\printAffiliationsAndNotice{}  % no special notice (required even if empty)
% Or, if applicable, use the standard equal contribution text:
% \printAffiliationsAndNotice{\icmlEqualContribution}

% ============================================================================

% ============================================================================

\begin{abstract}
We study sequential decision-making in partially observable environments against strategic, adaptive opponents, modeled as partially observable Markov games (POMGs). The central challenge is to learn latent dynamics from partial observations while facing an adversary whose behavior depends on the learner's strategy, making standard regret notions inadequate. We prove that an epoch-based optimistic maximum-likelihood algorithm achieves \(\tilde O(\sqrt{T})\) policy regret for fixed problem parameters, with explicit dependence on the horizon, adversary memory, confidence radius, and the aggregate Eluder dimension of the observable-operator error classes. The algorithm deploys one policy per epoch, with geometrically capped epoch lengths, confidence sets built cumulatively from past data, and a statistical termination test that ends an epoch as soon as the data refute the deployed optimistic model. We also prove a matching lower bound,  and extend the framework to horizon-adaptive guarantees and geometrically fading adversary memory.

\end{abstract}

\vspace{-15pt}
\section{Introduction}
\label{sec:intro}
% ============================================================================
\vspace{-2pt}

Sequential decision-making under uncertainty lies at the heart of reinforcement learning \citep{sutton2018reinforcement, bertsekas1995dynamic}. While remarkable progress has been made in understanding single-agent learning \citep{azar2017minimax, jaksch2010near, jin2018q, dann2017unifying}, many applications of practical importance involve multiple agents whose interests may conflict. A self-driving car navigates among human drivers who adapt their behavior in response to the autonomous vehicle's actions \citep{sadigh2016planning}. An algorithmic trader operates in markets where other participants learn and respond to observed trading patterns \citep{hendershott2011does}. A cybersecurity system defends against attackers who probe and adapt to defensive measures \citep{zhu2015game}. In each case, the learner faces an environment that is not only partially observable but also strategically responsive to the learner's own behavior.

\vspace{-2pt}
These problems go beyond the standard MDP framework \citep{puterman2014markov, bertsekas1995dynamic} in two ways. First, under partial observability the agent cannot directly observe the latent state, requiring belief-based reasoning and history-dependent strategies \citep{kaelbling1998planning, spaan2005perseus}. Second, with an adaptive opponent the environment's response depends on the learner's past actions \citep{littman1994markov, shapley1953stochastic}, introducing counterfactual dependence that standard regret notions do not capture.

\vspace{-2pt}
We develop a unified framework for learning in partially observable Markov games against adaptive adversaries, with the goal of characterizing when efficient learning is possible and providing algorithms with provable guarantees.

\vspace*{-8pt}
\paragraph{The Challenge of Adaptive Adversaries.} Classical online-learning regret benchmarks implicitly assume the environment would behave the same under alternative learner actions \citep{cesa2006prediction, hazan2016introduction}, which is appropriate for oblivious adversaries \citep{auer2002nonstochastic, freund1997decision} but fails under adaptation. \emph{Policy regret} \citep{arora2012bandit} instead evaluates performance against the counterfactual outcome had the learner committed to a fixed policy $\pi^*$ from the start. %, inducing an eventual stationary response $R_\infty(\pi^*)$. 
Sublinear policy regret is impossible against adversaries with unbounded memory, so attention restricts to memory-bounded adversaries, for which \citet{arora2012bandit} achieve sublinear policy regret via mini-batching. Their results are limited to bandit settings, 
and extending them to structured partially observable dynamics requires new ideas.

\vspace*{-8pt}
\paragraph{From POMDPs to POMGs.}
Recent progress on sample-efficient learning in partially observable MDPs (POMDPs) shows that tractability depends on how informative observations are about latent states \citep{liu2022partially, liu2023optimistic}. In particular, \emph{weakly revealing} POMDPs exclude degenerate instances and admit optimistic maximum-likelihood methods with polynomial sample complexity. These approaches rely on observable operator models (OOMs) \citep{jaeger2000observable}, which represent observable dynamics via operators indexed by observation-action pairs, enabling tractable likelihood computation even with large latent state spaces; related spectral methods were developed in \citet{boots2011online, hsu2012spectral}. See Appendix~\ref{sec:related} for extended discussion.

\vspace*{-2pt}
Our work extends this operator-based viewpoint to the game-theoretic setting. In a POMG, transitions depend on both learner and adversary actions, and the adversary's behavior is itself a strategic response to the learner's policy. This coupling creates challenges beyond the single-agent case. The observable operators now entangle world dynamics with the adversary response, and disentangling these components is essential for learning and regret analysis.

\vspace*{-7pt}
\subsection{Our Contributions}
\vspace*{-3pt}

This paper provides a complete theoretical treatment of policy regret minimization in POMGs. % Our contributions can be summarized as follows.

\vspace*{-1pt}
\textbf{Formalization and structural assumptions.} We formalize learning in POMGs
against posterior-Lipschitz adversaries, whose responses vary smoothly with the
learner's policy. This captures structured
strategic behavior while ruling out discontinuous responses to infinitesimal policy
changes. We also introduce a causal decomposition separating world dynamics from adversary behavior, allowing separate likelihood-based estimation of both. 
%We also introduce a causal decomposition separating world dynamics from adversary behavior, which allows likelihood-based world estimation and % posterior-Lipschitz 
%adversary control to be handled separately.

\vspace*{-1pt}
\textbf{Upper bound via epoch-based optimistic MLE.}
Our main algorithmic result establishes that policy regret has the desired \(\sqrt{T}\) dependence up to problem-dependent and logarithmic factors. In particular,
we show that
\vspace*{-3pt}
\[
PR(T)
=
\tilde O\left(H\sqrt{\bar\beta_T d_E T}+mH\log T\right),
\]
\vspace*{-10pt}

for $\bar\beta_T=H\beta_T$.
Here \(H\) is the horizon, \(m\) is the adversary's memory bound, \(d_E\) is the
uniform aggregate Eluder dimension, and \(\beta_T\) is the MLE confidence radius
after \(T\) episodes. The algorithm computes an optimistic MLE confidence set at
the start of each epoch and executes a single optimistic policy throughout the
epoch; epoch lengths are capped geometrically at \(T_e=2^e\), and an epoch is
terminated early as soon as the accumulated data refute the deployed optimistic
model. This statistical termination test makes committing to one policy
per epoch safe, and its warm-up overhead is dominated by the leading term; a
companion proposition shows the test cannot be dropped
(Appendix~\ref{app:necessity}).

\vspace*{-1pt}
\textbf{Matching lower bound.}
We prove that any algorithm must incur \(\Omega(\sqrt{d_E T})\) policy regret, even in a fully revealing POMG with a fixed memoryless adversary. % Thus our upper bound is optimal up to problem-dependent and log factors.

\vspace*{-1pt}
\textbf{Extensions.} 
We show that our algorithm can adapt to unknown time horizons and handle adversaries with unbounded but geometrically fading memory. Finally, we instantiate our bounds for tabular~settings.

% ============================================================================ ============================================================================

\vspace*{-8pt}
\paragraph{Paper Organization.}
\Cref{sec:formulation} presents the POMG model, \Cref{sec:algorithm} the algorithm and upper bound, \Cref{sec:lower} the lower bound, and \Cref{sec:extensions} the extensions. Proofs and supplementary results are deferred to the appendices.
%The paper is organized as follows. \Cref{sec:formulation} introduces the POMG
%model and structural assumptions. \Cref{sec:algorithm} presents the epoch-based
%algorithm and policy-regret upper bound. \Cref{sec:lower} gives the lower bound,
%and \Cref{sec:extensions} covers horizon-adaptivity and fading-memory adversaries.
%Proofs, two summary tables, and an empirical validation are deferred to the appendices.

% ============================================================================
\vspace*{-7pt}
\section{Problem Setup and Structural Results}
\label{sec:formulation}
\vspace*{-3pt}
% ============================================================================

We first define the POMG model and learning objective. 

% We begin by defining partially observable Markov games and the associated learning objective.

% We begin by formally defining partially observable Markov games and the learning objective. Our formulation captures two-player games where a learner faces a strategic opponent, both operating under partial observability.

\vspace*{-7pt}
\subsection{Partially Observable Markov Games}
\label{sec:pomg}
\vspace*{-2pt}

A partially observable Markov game is specified by \(\cM = (S,A,B,O_A,O_B,T,E^A,E^B,r,H,\rho_0)\). Here, $S$ is the state space, $A$ and $B$ 
are the learner's and adversary's action spaces, and $O_A$ and $O_B$ are their observation spaces that reveal partial information about the state. At each step $h\in[H]$, the dynamics are given by transition kernels $T_h:S\times A\times B\to\Delta(S)$ and emission kernels $E_h^A:S\to\Delta(O_A)$ and $E_h^B:S\to\Delta(O_B)$. The learner receives rewards $r_h:O_A\to[0,1]$, $\rho_0\in\Delta(S)$ is the initial state distribution, and all spaces are finite. Throughout, $\log$ denotes the natural logarithm and $\log_2$ the base-2 logarithm.

%A partially observable Markov game is specified by a tuple \(\cM = (S,A,B,O_A,O_B,T,E^A,E^B,r,H,\rho_0)\). Here $S$ denotes the state space, $A$ the learner's action space, and $B$ the adversary's action space. Both players receive partial information about the state through observations. The learner observes signals from $O_A$ and the adversary from $O_B$. The dynamics are governed by transition kernels $T_h\!: S \times A \times B \to \Delta(S)$ and emission kernels $E^A_h\!: S \to \Delta(O_A)$, $E^B_h\!: S \to \Delta(O_B)$ for each step $h \in [H]$, where $H$ is the episode horizon. The learner receives rewards according to $r_h: O_A \to [0,1]$, and $\rho_0 \in \Delta(S)$ specifies the initial state distribution. All component spaces $S, A, B, O_A, O_B$ are finite. %; throughout, $\log$ denotes the natural logarithm and $\log_2$ the base-2 logarithm.

The game proceeds in episodes. At the beginning of each episode, an initial state $s_1$ is drawn from $\rho_0$. At each step $h$, both players simultaneously observe signals $o^A_h \sim E^A_h(\cdot|s_h)$ and $o^B_h \sim E^B_h(\cdot|s_h)$, drawn conditionally independently given $s_h$, then select actions $a_h$ and $b_h$ based on their respective histories. The state transitions according to $s_{h+1} \sim T_h(\cdot|s_h, a_h, b_h)$, and the learner receives reward $r_h(o^A_h)$. The episode ends after $H$ steps.

A learner policy $\pi = (\pi_1, \ldots, \pi_H)$ maps observation-action histories to distributions over actions: $\pi_h: \cT^A_{h-1} \times O_A \to \Delta(A)$, where $\cT^A_h$ denotes the set of learner post-action histories of length $h$, that is, sequences $(o^A_1,a_1,\ldots,o^A_h,a_h)$, with $\cT^A_0$ containing the empty history; the adversary history spaces $\cT^B_h$ are defined analogously, and for either player $P\in\{A,B\}$ we write $\cT^P_{\leq H}:=\bigcup_{h=0}^{H}\cT^P_h$. Similarly, an adversary response \(g=(g_1,\ldots,g_H)\) maps adversary decision
histories to action distributions:
\(g_h\!:\!\cT^B_{h-1}\!\times\! O_B\!\to\!\Delta(B)\). For notational convenience,
we write \(\eta^B_h\!=\!(\tau^B_{h-1},o^B_h)\) and denote the response by
\(g_h(\cdot\!\mid\! \eta^B_h)\). Given a learner policy $\pi$ and adversary response $g$, we write $V^\pi(g)$ for the expected cumulative reward over an episode.

\vspace*{-7pt}
\subsection{Adaptive Adversaries and Policy Regret}
\vspace*{-2pt}
\label{sec:adapt}

We consider adversaries who adapt their behavior based on the learner's recent policies. Formally, an adversary is \emph{$m$-memory bounded} if there exists a fixed, time-invariant map $R: \Pi^m \to \mathcal{G}$, where $\Pi$ and $\mathcal{G}$ denote the learner's and adversary's policy spaces, respectively, such that the adversary's policy in episode $t$ is $g^t = R(\pi^{t-m+1}, \ldots, \pi^t)$. Consequently, once the learner has played a fixed policy $\pi$ for $m$ consecutive episodes, the response equals the \emph{stationary response} $R_\infty(\pi) := R(\pi, \ldots, \pi)$. 
% We consider adversaries who adapt their behavior based on the learner's recent policies. Formally, an adversary is \emph{$m$-memory bounded} if there is a fixed, time-invariant reaction rule $R$ such that its response at episode $t$ depends only on the learner's policies from the most recent $m$ episodes, $g^t = R(\pi^{t-m+1}, \ldots, \pi^t)$. Consequently, once the learner has played a fixed policy $\pi$ for $m$ consecutive episodes, the response equals the \emph{stationary response} $R_\infty(\pi) := R(\pi, \ldots, \pi)$.

The standard notion of external regret compares the learner's performance to the best fixed policy assuming the observed adversary sequence would have been the same. 
This notion is inadequate for adaptive adversaries. Policy regret instead accounts for counterfactual adversary responses.

\begin{definition}[Policy Regret]
The policy regret over $T$ episodes is
\vspace*{-7pt}
\[
PR(T) = \max_{\pi^* \in \Pi} \sum_{t=1}^T \left[V^{\pi^*}(R_\infty(\pi^*)) - V^{\pi^t}(g^t)\right],
\]

\vspace*{-5pt}
\noindent where $\pi^1, \ldots, \pi^T$ are the learner's policies, $g^1, \ldots, g^T$ are the adversary's responses, $R_\infty(\pi^*)$ is the stationary response to $\pi^*$, and $\Pi$ is the learner's policy class, a fixed set of history-dependent policies mapping learner histories to action distributions.
\end{definition}
\vspace*{-2pt}

Policy regret compares the learner to the best fixed policy~under the adversary
response that policy would have induced. It is the natural benchmark for
strategic adaptive opponents.

\vspace*{-7pt}
\subsection{Model Parameterization and Assumptions}
\label{sec:model}
\vspace*{-3pt}

To enable efficient learning, we parameterize the world dynamics by
\(\theta \in \Theta\) and the adversary's response function by
\(\Phi \in \Psi\), under which the stationary response to a deployed policy \(\pi\) selects actions according to a response model \(g^\Phi_h(\cdot\mid\eta^B_h,\pi)\). We write
\(\xi=(\theta,\Phi)\) for the joint parameter and 
\(\Xi=\Theta\times\Psi\) for the joint parameter space. Our algorithm maintains
confidence sets over these parameters based on observed data. Throughout, we write \(\tau_B\) for the adversary's decision history \(\eta^B_h\) defined in Section~\ref{sec:pomg}, dropping the step index when it is immaterial. The key structural
assumption on the adversary is that its response varies smoothly with the learner's
policy. To make this precise, we introduce a decoupled formulation that avoids
subtle circularity issues.

\begin{definition}[Reference Posterior-Predictive]\label{def:sref}
Fix a reference adversary policy $\muref$ independent of the learner's strategy, e.g., uniform over $B$. For a learner policy $\pi$ and fixed world model $\theta \in \Theta$, let $P^{\pi,\muref,\theta}$ denote the joint history law under $\pi$ and $\muref$, and $P^{\pi,\muref,\theta}(\cdot\mid\tau_B)$ its conditional law given the adversary's decision history $\tau_B$. The reference posterior-predictive policy at $\tau_B$ is
\vspace*{-1pt}
\begin{equation}
\label{eq:sref-def}
S_{\tau_B,h}^{\text{ref},\theta}(\pi)
:=
\E_{\tau_A \sim P^{\pi, \muref, \theta}(\cdot|\tau_B)}
\left[\pi_h(\cdot|\tau_A)\right],
\end{equation}
the expected step-$h$ learner action distribution given $\tau_B$ under the reference dynamics. Note that \(\pi\) enters through the conditioning measure, so \(S_{\tau_B,h}^{\text{ref},\theta}\) depends on the whole policy and not only on \(\pi_h\).
\end{definition}

\vspace*{-4pt}
The reference measure \(\muref\) provides a fixed baseline for computing how the
learner's policy appears from the adversary's perspective. This decoupling is
essential because the adversary's actual response depends on the learner's policy,
but the smoothness condition constraining the adversary must be stated in terms of
a quantity that does not itself depend on the adversary. Since
\(S_{\tau_B,h}^{\mathrm{ref},\theta}(\pi)\) depends on the world model \(\theta\),
the condition below must also specify how this dependence on \(\theta\) is handled.

\vspace*{-1pt}
\begin{assumption}[Posterior-Lipschitz Adversary]\label{ass:pl} 
There exists $L \geq 0$ such that, \emph{uniformly over all} $\theta \in \Theta$, for all
policy blocks $\pi^{1:m}:=(\pi^1,\ldots,\pi^m)\in\Pi^m$ and $\nu^{1:m}:=(\nu^1,\ldots,\nu^m)\in\Pi^m$, where $m$ is the adversary's memory bound, all steps $h \in [H]$, and all adversary
histories $\tau_B$:
\vspace*{-2pt}
\begin{eqnarray}
\label{eq:posterior-lipschitz}
\norm{
g_h(\cdot|\tau_B,\pi^{1:m})
-
g_h(\cdot|\tau_B,\nu^{1:m})
}_1 \qquad\qquad\qquad \nonumber\\
\qquad\qquad\qquad
\le
L \max_{i \in [m]}
\norm{
S_{\tau_B,h}^{\text{ref},\theta}(\pi^i)
-
S_{\tau_B,h}^{\text{ref},\theta}(\nu^i)
}_1 .
\end{eqnarray}
Here $g_h(\cdot|\tau_B,\pi^{1:m})$ denotes the reaction rule's response to the
policy block; it determines the response model by stationarity, $g^\Phi_h(\cdot|\eta^B,\pi) = g_h(\cdot|\tau_B,\pi,\ldots,\pi)$.
\end{assumption}

Requiring \eqref{eq:posterior-lipschitz} uniformly over \(\theta\in\Theta\) keeps the adversary class well-defined without knowledge of the true \(\theta^*\) and is the form used in the analysis. %; Remark~\ref{rem:sref-theta} in Appendix~\ref{app:upper} discusses this choice and gives examples of adversaries satisfying the assumption.}
See Remark~\ref{rem:sref-theta} for examples of adversaries satisfying the assumption.
% See Remark~\ref{rem:sref-theta} for further discussion and examples of adversaries satisfying the assumption.

\vspace*{-5pt}
Our second structural assumption concerns the informativeness of observations.
It is stated through the \emph{world channel}, which for each step \(h\) and learner
observation-action~pair \((o^A,a)\) is a linear map
\(W_h^\theta(o^A,a)\colon \mathbb R^{|\bar S^+|}\!\to\!
\mathcal V:=\mathbb R^{|\bar S^+|\times B}\)
determined by the transition and emission kernels of \(\theta\) alone; \(\bar S^+ \!:=\! S \!\times\! \cT^A_{\leq H} \!\times\! \cT^B_{\leq H}\) is the augmented state space pairing a latent state with both players\textquoteright\ histories, and a predictive state is a probability vector over \(\bar S^+\). For a predictive state~\(q\), the \(b\)-indexed component of \(W_h^\theta(o^A,a)q\) is
the unnormalized successor of \(q\) that results if the adversary plays~\(b\). 
Theorem~\ref{thm:decomp} constructs \(W_h^\theta\) explicitly and shows that 
observable dynamics factor through it. The
weak-revealing~condition below requires that discrepancies in this world
channel are detectable from short windows of learner observations, after the
adversary's private observations and actions are marginalized according to the
response model $g^\Phi$.

\begin{assumption}[Multi-step weak revealing]\label{ass:reveal}
There exist an integer \(\kappa\ge 1\) and a constant \(\alpha_\kappa>0\) such that
the following holds. For any two world parameters \(\theta,\theta'\in\Theta\), any
learner policy \(\pi\), any adversary response model \(\Phi\), any step \(h\le H-\kappa\), any
learner observation-action pair \((o^A,a)\), and any normalized predictive state
\(q\), let $\mathcal O^{\theta,\Phi,\pi}_{h+1:h+\kappa}(\cdot\mid q,o^A,a)$
denote the conditional distribution of the next \(\kappa\) learner observations
\(o^A_{h+1},\ldots,o^A_{h+\kappa}\),
generated from \(q\), conditional on observing \(o^A\) and taking action \(a\) at
step \(h\). Then
\vspace*{-3pt}
\begin{align}
\label{eq:weak-revealing}
    \left\|
\mathcal O^{\theta,\Phi,\pi}_{h+1:h+\kappa}(\cdot\mid q,o^A,a)
-
\mathcal O^{\theta',\Phi,\pi}_{h+1:h+\kappa}(\cdot\mid q,o^A,a)
\right\|_1 & \nonumber \\
& \hspace*{-170pt} 
\ge
\alpha_\kappa
\left\|
W_h^\theta(o^A,a)q
-
W_h^{\theta'}(o^A,a)q
\right\|_{1,\mathcal V},
\end{align}

\vspace*{-7pt}
\noindent where \(\|\cdot\|_{1,\mathcal V}\) is the \(\ell_1\) norm on the
intermediate space \(\mathcal V\). Equivalently, world-channel differences
affecting the controlled predictive state are detectable from \(\kappa\)-step
learner-observation windows with signal strength at least
\(\alpha_\kappa\). For the final steps \(h>H-\kappa\) the condition is not
imposed; the standard convention pads the episode with \(\kappa\)
fully revealing dummy steps \citep{liu2022partially}.
\end{assumption}

\vspace*{-5pt}
This condition, adapted from \citet{liu2022partially}, rules out POMGs where observations are so uninformative that exponentially many samples would be needed to identify the dynamics. The parameter $\kappa$ represents the window length needed for observations to become revealing.

% % ============================================================================

% ============================================================================

\vspace*{-8pt}
\subsection{OOM and Causal Decomposition}
\label{sec:oom}
\vspace*{-1pt}

Our analysis builds on the observable operator model (OOM) framework. For a POMG with
parameters $\xi = (\theta, \Phi)$ and learner policy $\pi$, the dynamics of the learner's
observation process can be represented by operators $J^{\xi,\pi}_h(o^A,a) \in \R^{d \times d}$,
where $d := |\bar S^+|$ is the dimension of the augmented state space (Section~\ref{sec:model}), such that
\vspace*{-2pt}
\begin{small}
\[
\Prob^{\xi,\pi}(o^A_1, \ldots,\! o^A_h\! \mid\! a_1,\ldots,\!a_h)
\!=\!
\mathbf{1}^\top\!\!  J^{\xi,\pi}_h\!(o^A_h\!,\! a_h)\! \cdots \!J^{\xi,\pi}_1(o^A_1\!,\! a_1)\, q_0.
\]
\end{small}

\vspace*{-5pt}
\noindent Here $q_0$ is an initial predictive state. The operators describe the \emph{controlled} conditional process, i.e., the probability of the learner's observations given its actions. The adversary is not visible in this representation; its private observations and actions are marginalized out according to the response model $g^\Phi(\cdot\mid\eta^B,\pi)$. 
Learner action probabilities under $\pi$ are handled separately.
A crucial insight is that these operators admit a causal decomposition, a key structural ingredient of our analysis. It separates the world-channel component from the adversary-aggregation component of the observable operator.

% The following POMG-specific decomposition is a key structural ingredient of our analysis. It separates the world-channel component from the adversary-aggregation component of the observable operator.

\begin{theorem}[Causal Decomposition]\label{thm:decomp}
% \textcolor{red}{Consider the augmented state space $\bar{S}^+ = S \times \cT^A_{\leq H} \times \cT^B_{\leq H}$ and the intermediate space $\mathcal{V} = \R^{|\bar{S}^+| \times B}$ of Section~\ref{sec:model}.}
For any adversary response model $g^\Phi_h(\cdot\mid\eta^B_h,\pi)$ and for each step $h$, learner observation $o^A$, and learner
action $a$, there exist linear maps
$W^\theta_h(o^A,a) : \R^{|\bar{S}^+|} \to \mathcal{V},$ and 
$G^{\Phi,\pi}_h(o^A,a) : \mathcal{V} \to \R^{|\bar{S}^+|}$
%\vspace*{-2pt}
%\[
%W^\theta_h(o^A,a) : \R^{|\bar{S}^+|} \to \mathcal{V},~~~~
%G^{\Phi,\pi}_h(o^A,a) : \mathcal{V} \to \R^{|\bar{S}^+|}
%\]
%\vspace*{-7pt}
% \noindent 
such that the controlled observable operator satisfies

\vspace*{-14pt}
\[
J^{\xi,\pi}_h(o^A,a) \;=\; G^{\Phi,\pi}_h(o^A,a) \;\circ\; W^\theta_h(o^A,a).
\]
The map $W^\theta_h$ depends only on the world kernels $(T_h, E^A_h,$ $ E^B_h)$,
while $G^{\Phi,\pi}_h$ depends only on $g^\Phi_h$ and $\pi$, not on $\theta$.
Both maps are entrywise nonnegative, and $G^{\Phi,\pi}_h$ is
$\ell_1$-nonexpansive for arbitrary signed inputs (each output coordinate is a
single input coordinate scaled by a response probability, so
$\|G^{\Phi,\pi}_h u\|_1\le\|u\|_{1,\mathcal V}$).
\end{theorem}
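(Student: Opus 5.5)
The plan is to construct both maps explicitly on the augmented state space $\bar S^+ = S\times\cT^A_{\le H}\times\cT^B_{\le H}$, splitting one step of the controlled dynamics into a world half-step and an adversary half-step. A predictive state $q\in\R^{|\bar S^+|}$ is read as an unnormalized measure over triples $(s,\tau^A,\tau^B)$, where $s$ is the current latent state and $\tau^A,\tau^B$ are the two players' histories up to step $h-1$. Given $(o^A,a)$, the operator $J^{\xi,\pi}_h(o^A,a)$ must output the unnormalized measure over $(s',\tau^A\!\cdot\!(o^A,a),\tau^B\!\cdot\!(o^B,b))$, obtained by marginalizing over $s$, $o^B$ and $b$ with the appropriate probability weights. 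Learner action probabilities are excluded, consistent with the controlled process.

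\textbf{Step 1 (world channel).} For $b\in B$, define the $b$-component of $W^\theta_h(o^A,a)q$ at the augmented coordinate $(s',\tau^A\!\cdot\!(o^A,a),\tau^B\!\cdot\!(o^B,b))$ as
\[
\sum_{s} q(s,\tau^A,\tau^B)\,E^A_h(o^A\mid s)\,E^B_h(o^B\mid s)\,T_h(s'\mid s,a,b),
\]
and set it to zero on coordinates not of this form. The entries are products of kernel values, so the map is linear and nonnegative, and it depends only on $(T_h,E^A_h,E^B_h)$. The sum runs only over $s$: the histories and $o^B$ are recorded in the output coordinate rather than summed out. This is the point that makes the next step work.

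\textbf{Step 2 (adversary aggregation).} Define $G^{\Phi,\pi}_h(o^A,a)$ on $v=(v_b)_{b\in B}\in\mathcal V$ by
\[
(Gv)(s',\tau'^A,\tau'^B)=g^\Phi_h\big(b\mid\eta^B_h,\pi\big)\,v_b(s',\tau'^A,\tau'^B),
\]
where $\tau'^B=\tau^B\!\cdot\!(o^B,b)$ and $\eta^B_h=(\tau^B,o^B)$. Here $b$ and $\eta^B_h$ are read off the output coordinate; coordinates whose last adversary action is not defined map to zero.

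Each output coordinate therefore equals exactly one input coordinate times a probability in $[0,1]$. This gives nonnegativity, independence from $\theta$, and
\[
\|Gu\|_1\le\sum_b\|u_b\|_1=\|u\|_{1,\mathcal V}
\]
for signed $u$. The map is injective on the coordinates it uses, so distinct output coordinates draw on distinct input coordinates and no cancellation or double counting occurs.

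\textbf{Step 3 (verification).} Compose the two maps and check that $G\circ W$ matches the one-step recursion of the controlled law: draw $o^A,o^B$ from $s$, draw $b\sim g^\Phi_h(\cdot\mid\eta^B_h,\pi)$, draw $s'\sim T_h$, and fix $a$. Conditional independence of the emissions given $s$ makes the product of $E^A_h$ and $E^B_h$ correct. Telescoping over steps and summing with $\mathbf 1^\top$ then gives the OOM identity stated before the theorem, which I take as the definition of $J^{\xi,\pi}_h$.

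\textbf{Main obstacle.} The delicate point is the dependence of $g^\Phi$ on $\pi$, which may enter through the adversary's history. The adversary's probability must be a function of the output coordinate alone, so that $G$ acts diagonally. This is why the full adversary history $\tau^B$ and the private observation $o^B$ are kept in $\bar S^+$ instead of being summed out inside $W$. If the sum over $o^B$ were performed in $W$, the adversary weight could not be applied afterward, and $G$ would mix coordinates and lose the one-coordinate structure that gives nonexpansiveness.
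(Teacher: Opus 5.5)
Your proposal is correct and matches the paper's proof essentially step for step. It uses the same full-history augmented space, the same $b$-indexed world channel whose sum runs only over $s$ (with $o^B$ and both histories pinned by the output index), and the same diagonal aggregation that reads $(\eta^B_h,b)$ off the output coordinate, which gives $\theta$-independence, nonnegativity, and signed $\ell_1$-nonexpansiveness. The paper adds only an explicit length-matching convention (step-$h$ operators vanish on coordinates whose histories have the wrong length, so $g^\Phi_h$ is only ever evaluated at well-typed $\eta^B_h$), and your restriction to step-$(h-1)$ input histories handles this implicitly.
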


\vspace*{-3pt}
The decomposition holds for any adversary response function. 
% Assumption~\ref{ass:pl} is not needed for the factorization and enters later only to control how \(G^{\Phi,\pi}_h\) varies with \(\pi\). 
The world channel \(W^\theta_h\) maps a
predictive state, using only world kernels, to a \(B\)-indexed family of
hypothetical unnormalized successors, one for each adversary action~\(b\).
The adversary aggregation \(G^{\Phi,\pi}_h\) mixes these successors using
\(g^\Phi_h(\cdot|\eta^B_h,\pi)\). The next augmented state stores the~post-action history
\(\tau^B_h=(\eta^B_h,b_h)\). The proof tracks both players' full
observation-action histories, uses \(\mathcal V\) for type-correctness, and shows
that \(G^{\Phi,\pi}_h\) is \(\theta\)-free. Details in
Appendix~\ref{app:decomposition}.

\vspace*{-7pt}
\subsection{Uniform Eluder Dimension}
\label{sec:eluder}
\vspace*{-4pt}

We next define the complexity measure used in the regret bound, the aggregate
Eluder dimension of the channel error classes introduced below, and record its
scaling for several model families. For a joint parameter \(\xi=(\theta,\Phi)\)
and a learner policy \(\pi\), write \(P^\pi_\xi\) for the law of the learner's
observation-action trajectory when the world follows \(\theta\) and the
adversary plays the stationary response \(g^\Phi(\cdot|\cdot\,,\pi)\) to
\(\pi\), and \(V^\pi(\xi) := \E_{P^\pi_\xi}\bigl[\sum_{h=1}^H
r_h(o^A_h)\bigr]\) for the corresponding value; for the true parameter
\(\xi^*\), \(V^\pi(\xi^*)=V^\pi(R_\infty(\pi))\), the value entering the
policy-regret benchmark. Write \(q^{\xi}_{h-1}\) for the normalized predictive
state under \(\xi\), the conditional distribution over the augmented state
given the learner's observation-action history through step \(h-1\), so that
\(\|q^{\xi}_{h-1}\|_1=1\). The classes below are built from these predictive
states and the observable operators \(J^{\xi,\pi}_h\). %\textcolor{red}{ of Section~\ref{sec:oom}}.

The Eluder dimension, introduced by \citet{russo2013eluder}, measures the sequential
complexity of a function class. For a class \(\mathcal F\) of nonnegative functions
and a scale \(\alpha>0\), say a query \(x\) is \emph{\(\alpha\)-dependent} on a
finite set \(S\) of queries if every \(f\in\mathcal F\) with
\(\sum_{z\in S}f(z)^2\le\alpha^2\) also satisfies \(f(x)\le\alpha\);
\(\dimE(\mathcal F,\alpha)\) is the length of the longest sequence of queries
such that, for some single \(\alpha'\ge\alpha\), every element is
\(\alpha'\)-independent of its predecessors; under this standard convention
\citep{russo2013eluder}, \(\dimE(\mathcal F,\cdot)\) is nonincreasing in the
scale. For our regret
analysis, we track the two channels of the causal factorization
% \(J^{\xi,\pi}_h = G^{\Phi,\pi}_h\circ W^{\theta}_h\) 
(Theorem~\ref{thm:decomp})
separately, through the \emph{channel error classes}
$\mathcal{F}^{W,\xi^*}_{\rm agg}:=\{F^W_\theta:\theta\in\Theta\}$ and
$\mathcal{F}^{G,\xi^*}_{\rm agg}:=\{F^G_\Phi:\Phi\in\Psi\}$, where (writing
\(\E^\pi:=\E_{\tau\sim P^\pi_{\xi^*}}\) for brevity)

\vspace*{-12pt}
\begin{align*}
F^{W}_{\theta}(\pi) &:= \textstyle\sum_{h=1}^H\E^{\pi}\bigl[\|(W^{\theta}_h - W^{\theta^*}_h)\, q^{\xi^*}_{h-1}\|_{1,\mathcal V}\bigr], \\
F^{G}_{\Phi}(\pi) &:= \textstyle\sum_{h=1}^H\E^{\pi}\bigl[\|(G^{\Phi,\pi}_h - G^{\Phi^*,\pi}_h)\, W^{\theta^*}_h q^{\xi^*}_{h-1}\|_1\bigr].
\end{align*}
\vspace*{-12pt}

Here and throughout, operators at step $h$ are evaluated at the realized
observation-action pair of the trajectory, and both classes depend on the true
$\xi^*=(\theta^*,\Phi^*)$ through the reference predictive states
$q^{\xi^*}_{h-1}$ and the trajectory distribution $P^{\pi}_{\xi^*}$. We adopt
\emph{uniform} definitions by taking suprema over $\xi^*$, evaluated at the
resolution scale \(1/(2T)\):
\vspace*{-3pt}
\begin{align*}
d^{W}_{\rm agg} &:= \sup\nolimits_{\xi^*\in\Xi}\dimE\!\left(\mathcal{F}^{W,\xi^*}_{\rm agg}, 1/(2T)\right)\!,\\
d^{G}_{\rm agg} &:= \sup\nolimits_{\xi^*\in\Xi}\dimE\!\left(\mathcal{F}^{G,\xi^*}_{\rm agg}, 1/(2T)\right)\!.
\end{align*}

\vspace*{-4pt}
\noindent 
% Our complexity parameter is the channel sum $d_E := d^{W}_{\rm agg} + d^{G}_{\rm agg}$.
We use the channel sum $d_E := d^{W}_{\rm agg} + d^{G}_{\rm agg}$ as the complexity parameter.  
% The complexity parameter $d_E\! :=\! d^{W}_{\rm agg} \!+\! d^{G}_{\rm agg}$ that we use is the channel sum.
% The complexity parameter of the paper is the channel sum $d_E := d^{W}_{\rm agg} + d^{G}_{\rm agg}$.
The \emph{composed} aggregate error
$F_{\xi}(\pi):=\sum_{h}\mathbb E_{\tau\sim P^\pi_{\xi^*}}[\|(J^{\xi,\pi}_h -
J^{\xi^*,\pi}_h)q^{\xi^*}_{h-1}\|_1]$, which is what the regret decomposition
measures, is controlled by the two channels. Since $G^{\Phi,\pi}_h$ is
$\ell_1$-nonexpansive, 
%(each output coordinate is one intermediate coordinate scaled by a response probability $g^\Phi_h(b\mid\eta^B,\pi)$; Theorem~\ref{thm:decomp}), 
the exact identity $(J^{\xi,\pi}_h-J^{\xi^*,\pi}_h)q = G^{\Phi,\pi}_h\bigl((W^\theta_h-W^{\theta^*}_h)q\bigr)
+ (G^{\Phi,\pi}_h-G^{\Phi^*,\pi}_h)\bigl(W^{\theta^*}_hq\bigr)$ yields, for every policy $\pi$ and model $\xi=(\theta,\Phi)$, the pointwise \emph{channel-triangle} inequalities
\begin{equation}\label{eq:channel-triangle}
F_{\xi}\le F^{W}_{\theta}+F^{G}_{\Phi},
\qquad
F^{G}_{\Phi}\le F_{\xi}+F^{W}_{\theta}.
\end{equation}
The analysis in Appendix~\ref{app:upper} runs the Eluder
summability argument once per channel and combines the two via \eqref{eq:channel-triangle}. Per-step (stepwise) variants of these classes can be defined
analogously; see Remark~\ref{app:stepwise} in Appendix~\ref{app:upper}.

\begin{proposition}[Uniform Eluder Dimension Bounds]\label{prop:eluder}
Consider model classes satisfying Assumptions~\ref{ass:pl} and~\ref{ass:reveal}, using the augmented state space $\bar{S}^+_{\rm red} = S\times O_A^{\kappa-1}\times O_B^{\kappa-1}$ (dimension $d_W = |S|\cdot|O_A|^{\kappa-1}\cdot|O_B|^{\kappa-1}$). Assume that the learner policy class and the adversary response class depend on histories only through the $\kappa$-window statistics retained in $\bar{S}^+_{\rm red}$, so that the causal factorization descends to the reduced space (see the discussion in Appendix~\ref{app:decomposition}). Assume the world model class $\Theta$ and adversary model class $\Psi$ are parameterized such that the observable operators $\xi\mapsto J^{\xi,\pi}_h$ are Lipschitz in $\xi$ with respect to the $\ell_\infty$ parameter norm, and that each model class has a finite $\epsilon$-covering number in operator norm, with logarithm polynomial in the stated parameter counts. Then, 
\vspace*{-4pt}
\begin{itemize}[leftmargin=8pt]
\item For \textbf{tabular} settings, $\!d_E\! =\! \tilde{O}(H(|S|^2 |A| |B| |O_A|^\kappa |O_B|^\kappa \!\!$ $+ d_{\rm adv} |B|))$, where $d_{\rm adv}$ is the feature dimension of the (linear) adversary-response model. 
\item For \textbf{linear world model}, $d_E \!=\! \tilde{O}(H(d_{\rm lin} |O_A|^\kappa |O_B|^\kappa +$ $\! d_{\rm adv} |B|))$, where $d_{\rm lin}$ is the linear dimension.
\item For \textbf{low-rank transitions}, $d_E\!=\! \tilde{O}(H(r^2(|A|{+}|B|) |O_A|^\kappa$ $ |O_B|^\kappa+d_{\rm adv} |B|))$, where $r$ is the rank. 
\end{itemize}
\end{proposition}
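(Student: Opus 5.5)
Since the channel sum $d_E=d^W_{\rm agg}+d^G_{\rm agg}$ separates the two channels, I will bound each aggregate Eluder dimension on its own and add the results. For each channel the argument reduces the error class to a generalized-linear class whose dimension equals the number of free parameters in that channel, counted per step. I then invoke the standard bound $\dimE(\mathcal F,\epsilon)=\tilde O(p)$ for classes of the form $x\mapsto\|\langle\phi(x),\vartheta\rangle\|$ with $p$-dimensional parameter, as in \citet{russo2013eluder}, evaluated at scale $\epsilon=1/(2T)$. Here the $\tilde O$ absorbs $\log T$ and the log-covering numbers, which the Lipschitz and covering hypotheses make polynomial in the parameter counts.

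\textbf{Step 1 (reduction to the reduced space).} By the assumption that policies and responses depend on histories only through $\kappa$-window statistics, the factorization of Theorem~\ref{thm:decomp} descends to $\bar S^+_{\rm red}$. There $W^\theta_h(o^A,a)$ is a $d_W\times(d_W|B|)$ nonnegative matrix and $G^{\Phi,\pi}_h(o^A,a)$ is a diagonal reweighting by $g^\Phi_h(b\mid\eta^B,\pi)$.

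\textbf{Step 2 (the $W$ channel).} For fixed $\xi^*$, write $F^W_\theta(\pi)=\sum_h\E^\pi\|(W^\theta_h-W^{\theta^*}_h)q^{\xi^*}_{h-1}\|_{1,\mathcal V}$. The map $\theta\mapsto W^\theta_h(o^A,a)$ is linear in the kernel entries in each model family:
\begin{itemize}[leftmargin=8pt]
\item \emph{Tabular.} The entries of $T_h$ (there are $|S|^2|A||B|$ of them) combine with the emission entries, and the window coordinates inflate the lifted state by $|O_A|^{\kappa}|O_B|^{\kappa}$ after accounting for the appended observation.
\item \emph{Linear.} The kernel is $\sum_i\theta_i\psi_i$ with $d_{\rm lin}$ terms, so the lifted parameter has dimension $d_{\rm lin}|O_A|^\kappa|O_B|^\kappa$.
\item \emph{Low-rank.} $T_h=\Phi_h\Lambda_h$ with factors of size $|S|\times r$ and $r$-dimensional action embeddings. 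Linearizing the bilinear factor in the standard way gives $r^2(|A|+|B|)$ effective coordinates.
\end{itemize}
Summing over $h$ gives the factor $H$, since the parameters at different steps are separate blocks of one linear parameter vector. The expectation of an $\ell_1$ norm of a linear function is handled by writing $\|v\|_1=\max_{s\in\{\pm1\}^{\dim}}\langle s,v\rangle$. Instead of paying for the sign vectors, I would bound the Eluder dimension of the $\ell_1$-norm class by that of the underlying vector-valued linear class, using the usual elliptical-potential argument with the $\ell_2$ surrogate and $\|\cdot\|_1\le\sqrt{\dim}\|\cdot\|_2$. The extra $\sqrt{\dim}$ enters only inside the logarithm at scale $1/(2T)$, because it multiplies the scale rather than the count.

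\textbf{Step 3 (the $G$ channel).} Here $(G^{\Phi,\pi}_h-G^{\Phi^*,\pi}_h)W^{\theta^*}_hq$ has $b$-coordinates $(g^\Phi_h-g^{\Phi^*}_h)(b\mid\eta^B,\pi)$ multiplied by fixed nonnegative vectors. With a linear response model $g^\Phi_h(b\mid\cdot)=\langle\phi_b(\cdot,\pi),\Phi_h\rangle$ with $d_{\rm adv}$ features per action, the parameter dimension is $d_{\rm adv}|B|$ per step. This gives $H\,d_{\rm adv}|B|$ by the same linear-class bound. Assumption~\ref{ass:pl} guarantees that the features, viewed as functions of $\pi$ through $S^{\mathrm{ref},\theta}$, are bounded, so the norm parameter of the Eluder bound is controlled.

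\textbf{Main obstacle.} I expect the delicate step to be Step 2's treatment of the expected $\ell_1$ norm. The class $F^W_\theta$ is a sum over $h$ of expectations of norms, not a linear functional, and the Eluder dimension does not pass directly through the expectation of a norm. My first approach is to view each query $\pi$ as a distribution over lifted feature vectors $\phi=q^{\xi^*}_{h-1}\otimes e_{(o^A,a)}$, and to dominate $F^W_\theta(\pi)^2$ by a quadratic form $\|\theta-\theta^*\|^2_{\Sigma_\pi}$ using Jensen, with $\Sigma_\pi$ the second-moment matrix. I then apply the elliptical-potential lemma to $\sum\Sigma_{\pi_i}$, which yields the $\tilde O(p)$ count at scale $1/(2T)$. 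The lower side needed for independence follows from $\ell_1\ge\ell_2$. If that domination fails, the fallback is to cover the sign patterns and absorb them into the logarithmic factor.
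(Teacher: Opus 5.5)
Your overall architecture matches the paper's: split $d_E$ into the two channels, count span dimension per step, sum over $h$, add. The gap is in the step you yourself flag as the main obstacle, turning an \emph{expected $\ell_1$ norm} into something an elliptic-potential argument can use, and neither of your devices closes it.

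\textbf{The $\ell_2$ surrogate.} It does not put $\sqrt D$ (with $D=D_{\rm out}=|B|d_W$) only inside the logarithm. $\varepsilon$-independence in the $\ell_1$ class gives the witness a current value $\|A_iv_{x_i}\|_2>\varepsilon/\sqrt D$ but a past square-sum $\sum_{k<i}\|A_iv_{x_k}\|_2^2\le\varepsilon^2$. Rescaling the witness moves both thresholds together, so this is a ratio mismatch, not a change of scale. In the potential argument it cuts the per-query increment of $\log\det V_i$ from a constant to order $1/D$, giving $N=O(Dp\log(\cdot))$. Already in one dimension, squared feature magnitudes growing by the factor $1+1/D$ satisfy the surrogate inequalities for $\Theta(D\log(\cdot))$ steps, so nothing better follows from them.

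\textbf{The second-moment plan.} It fails on the other side. At the current query Jensen runs the right way, $\E\|u\|_2^2\ge(\E\|u\|_1)^2/D$. At past queries, however, you need $\|\theta-\theta^*\|^2_{\Sigma_{\pi_k}}=\E\|u\|_2^2\lesssim(\E\|u\|_1)^2=F(\pi_k)^2$, which is reverse Jensen; $\ell_1\ge\ell_2$ only gives $\E\|u\|_1\ge\E\|u\|_2$. If the operator error sits on trajectories of probability $\rho$, the ratio is $1/\rho$, so past consistency of $F$ does not control $\sum_k\Sigma_{\pi_k}$ at all. Your fallback of covering sign patterns does not by itself bound an Eluder dimension, and you do not say how the maximum over patterns would be handled.

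\textbf{The missing idea} is the sign-witness device of Lemma~\ref{lem:sign-witness}. When query $x_k$ enters the sequence with witness $\theta_k$, freeze the witness's own signs $\sigma_k(\tau)=\mathrm{sign}\bigl((W^{\theta_k}_h-W^{\theta^*}_h)q^{\xi^*}_{h-1}\bigr)$ trajectory by trajectory. Then set $X_k=\E[\sigma_k(q^{\xi^*}_{h-1})^\top]$ blockwise over $(h,o^A,a)$. This gives $\langle\bar A_{\theta_k},X_k\rangle=F_{\theta_k}(x_k)$ exactly, while $|\langle\bar A_{\theta'},X_k\rangle|\le F_{\theta'}(x_k)$ for every $\theta'$, because $|\sigma^\top u|\le\|u\|_1$ pointwise and expectation is linear. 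The thresholds therefore match exactly: $>\varepsilon$ at the own query, square-sum $\le\varepsilon^2$ at earlier ones. No Jensen step is needed, and $D$ enters only through $\|X_k\|_F^2\le HD_{\rm out}$ inside the logarithm. Your Step~3 has the identical gap, which the paper closes with part (b) of the lemma ($w_k=U(x_k)^\top\sigma_k$); Assumption~\ref{ass:pl} plays no role there.

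\textbf{Two smaller points.}
\begin{itemize}
\item In the tabular case the entries $E^A_hE^B_hT_h$ are not linear in the kernel entries. What is actually needed is only that the difference family lies in the span of the $p_W$ coordinate (pattern) matrices.
\item For low rank, ``linearizing the bilinear factor'' does not yield an $r^2(|A|+|B|)$-dimensional span, since with both factors unknown the differences can fill the ambient space. The paper gets that count only under its coordinate-structure hypothesis that each entry of $W^\theta_h$ is affine in the parameters.
\end{itemize}
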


\vspace*{-7pt}
The factor \(H\) arises from summing the per-step span dimensions over
\(h\in[H]\). The factor \(|O_B|^\kappa\) arises because the augmented representation
must retain enough adversary-side observation information to describe the adversary
aggregation operator \(G^{\Phi,\pi}_h\) % (defined in Section~\ref{sec:oom})
independently of the world parameter \(\theta\). For the linear and low-rank
families we additionally assume the parameterization is
\emph{coordinate-structured}, i.e., each entry of $W^\theta_h$ is an affine function
of the parameters, so that the world-channel difference family
$\{W^\theta_h-W^{\theta'}_h\}$ spans a subspace of the stated dimension.
Bilinear parameterizations with unknown factor interactions need not satisfy
this (their difference families can span the full ambient space), and the
corresponding bounds then fail.

\vspace*{-8pt}
\begin{proof}[Proof sketch]
Since \(d_E=d^{W}_{\rm agg}+d^{G}_{\rm agg}\) by definition, it suffices to
bound each channel separately; the channel-triangle inequalities \textcolor{red}{\eqref{eq:channel-triangle}} let the regret
analysis consume the two bounds directly (Appendix~\ref{app:upper}), so the
composed class never needs its own complexity bound. The argument for each
channel has two steps. First, each channel error class consists of \(\ell_1\) norms of linear images
of the parameter. For the world channel, the operator difference
\(W^\theta_h-W^{\theta^*}_h\) is a fixed matrix, independent of the query
policy, and each entry is an affine function of the transition and emission
parameters; on the reduced window space the difference family therefore lies in
a span of dimension \(O(|S|^2|A||B|\cdot|O_A|^\kappa|O_B|^\kappa)\) per step
(the single-agent analogue of this count is in \citet{liu2022partially}). For
the adversary channel, \(G^{\Phi,\pi}_h\) is linear in \(\Phi\), so
\(\|(G^{\Phi,\pi}_h-G^{\Phi^*,\pi}_h)W^{\theta^*}_hq\|_1
=\|U_h\,\mathrm{vec}(\Phi-\Phi^*)\|_1\) for a design matrix \(U_h\)
computable from the query, with \((|B|-1)\,d_{\rm adv}\) free parameters per
step.

Second, the sign-witness lemma (Lemma~\ref{lem:sign-witness},
Appendix~\ref{app:eluder-lemma}) bounds the Eluder dimension of any such class
by its span dimension times logarithm of the evaluation scale. 
% ; part (a), \textcolor{red}{applied in expectation form as in the proof of Proposition~\ref{prop:eluder} (Appendix~\ref{app:eluder-lemma})}, covers the fixed-design world channel, and part (b) covers the query-dependent adversary design.
Summing the per-step span dimensions over \(h\in[H]\) and adding the two
channels gives the tabular bound. Under the coordinate-structure hypothesis,
the linear and low-rank families replace the world-channel span count
\(|S|^2|A||B|\) by \(d_{\rm lin}\) and \(r^2(|A|+|B|)\) respectively; the
adversary channel is unchanged. The full proof is in
Appendix~\ref{app:eluder-lemma}. %, with the reduced-space preliminaries in
%Appendix~\ref{app:decomposition}.
\end{proof}

% ============================================================================
\vspace*{-15pt}
\section{Algorithm and Upper Bound}
\label{sec:algorithm}
\vspace*{-3pt}
% ============================================================================

We now present our main algorithm and establish its policy regret guarantee. Algorithm~\ref{alg:omle} follows the optimistic MLE paradigm, maintaining confidence sets over the joint parameter space and selecting policies that are optimal under the most favorable parameters. 
%\vspace*{-7pt}
%\subsection{Algorithmic Framework}
%\vspace*{-4pt}
The algorithm divides episodes into epochs of geometrically increasing length
\(T_e=2^e\). Each epoch uses a single policy selected optimistically from a
confidence set built cumulatively from all previous epochs' data. 
This structure
keeps the number of \emph{scheduled} policy switches logarithmic in \(T\), with
data-dependent terminations adding a controlled number more
(Lemma~\ref{lem:switch-count}); limiting switches matters because the
adversary's response changes with the learner's policy and every switch costs
warm-up episodes. In epoch \(e\), the algorithm proceeds as follows:
%, with steps 1--3 at the epoch start and steps 4--5 during the epoch:

\vspace*{-3pt}
\begin{enumerate}[leftmargin=12pt,itemsep=1pt]%,labelsep=2pt]

\item \textbf{MLE computation:} Compute the maximum likelihood estimate
\(\hat{\xi}_{e-1}\in\Xi\) of the joint parameter over all trajectories in the cumulative dataset
\(\mathcal D_{e-1}\), which contains data from epochs \(1,\ldots,e-1\).

\item \textbf{Confidence set construction:} Form the confidence set $\mathcal{C}_{e-1}$ consisting of all parameters $\xi$ whose log-likelihood on $\mathcal{D}_{e-1}$ is within $\beta_e$ of the MLE, where $\beta_e = c(\log \mathcal{N}(\epsilon_e;\Xi) + \log(1/\delta_e) + mB_0)$, with $c$ an absolute constant ($c\ge16$ suffices), $\mathcal{N}(\epsilon;\Xi)$ the $\epsilon$-covering number of the model class in operator norm~(\ref{ass:reg}(b)), $B_0$ the bound on per-trajectory log-likelihood ratios~(\ref{ass:reg}(c)), discretization scale $\epsilon_e = 2^{-\lceil\log_2(|\mathcal D_{e-1}|+2)\rceil}\in[\tfrac{1}{2}\cdot\tfrac{1}{|\mathcal D_{e-1}|+2},\,\tfrac{1}{|\mathcal D_{e-1}|+2}]$ (the dyadic grid keeps the candidate covering nets deterministic; see Lemma~\ref{lem:conc}), and failure budget $\delta_e = \delta/(2e^2)$. All three quantities are keyed to the data volume and the epoch counter, so the confidence radius requires no knowledge of $T$; the $mB_0$ floor is what prices policy switches (Lemma~\ref{lem:switch-count}).

\item \textbf{Optimistic planning:} Select the policy-parameter pair $(\pi_e, \xi_e)$ that maximizes the expected value within the confidence set $\mathcal{C}_{e-1}$. This optimism drives exploration and is the mechanism behind the no-regret
guarantee. The selected policy \(\pi_e\) is used 
throughout the epoch.

\item \textbf{Warm-up and execution with statistical termination:} Execute
policy \(\pi_e\) for \(m-1\) warm-up episodes, discarding data to allow the
adversary to stabilize; then execute \(\pi_e\) for at most \(T_e\)
data-collection episodes. After every data-collection episode \(t\), recompute the
maximum-likelihood estimate on all data collected so far; if the deployed model
falls more than \(\tau_e(t) := 2\beta_e(t) + C_0B_0\log(2t(t{+}1)/\delta_e)\)
below it in log-likelihood, \emph{terminate the epoch immediately}, retaining
the epoch's data. Here \(\beta_e(t)\) is the radius of step 2 recomputed at
the current dataset size (on the same dyadic grid) and confidence budget
\(\delta_e/(2t(t{+}1))\), and
\(C_0\) is an absolute constant from the martingale concentration bound behind
the test (Appendix~\ref{app:upper}). Let \(n_e\le T_e\) denote the number of
data-collection episodes the epoch actually runs. The deviation-aware
threshold separates statistical fluctuation from genuine misfit.

\item \textbf{Dataset update:} Add the \(n_e\le T_e\) data-collection trajectories
(including those of terminated epochs) to the cumulative dataset:
$\mathcal D_e
=
\mathcal D_{e-1}
\cup
\{(\pi_e,\tau_1^{(e)}),\ldots, (\pi_e,\tau_{n_e}^{(e)})\}.$

\end{enumerate}

\vspace*{-3pt}
The key feature distinguishing this approach is that we use
\emph{one policy per epoch}, with epochs ending either at the geometric cap
\(T_e=2^e\) or at the statistical termination test. The learner thus deploys
\(S_T = O(\log T) + \#\{\text{terminations}\}\) policies, one per epoch, over
\(T\) data-collection episodes, and we show
(Lemma~\ref{lem:switch-count}) that terminations are rare enough for their
total warm-up cost to be dominated by the leading regret term. This structure
serves several purposes:

\vspace*{-3pt}
\textbf{Cumulative learning:} Confidence sets are built from \emph{all historical
data}, not just the current epoch. Uniform weighting of these trajectories enables
standard concentration and Eluder dimension arguments. This ensures that information accumulates across epochs, with later epochs benefiting from increasingly precise parameter estimates.

\vspace*{-3pt}
\textbf{Statistical termination:} The termination test is what makes
committing to a single optimistic policy safe. Optimism deliberately favors
directions the data have not yet constrained, and without the test a single
ill-informed commitment can run unchecked for \(\Theta(T)\) episodes
(Prop.~\ref{prop:fixed-fails}); with it, each epoch's contribution is
capped at \(\tilde O(H\beta_T)\) (Lemma~\ref{lem:epoch-cap}), with \(\beta_T\)
the confidence radius at horizon \(T\) as in Theorem~\ref{thm:upper}. The test
is the maximum-likelihood analogue of the count-doubling and
determinant-doubling triggers in low-switching-cost RL
\citep{bai2019provably, qiao2022sample}, though here rare switching is also
forced by the adversary, since every policy change costs \(m-1\) warm-up
episodes.

\vspace*{-3pt}
\textbf{Automatic adaptation:} Each epoch \(e\) sets its length based only on
\(T_e=2^e\), requiring no advance knowledge of the final horizon \(T\). The same
analysis applies to any final horizon without modifying the algorithm.

\vspace*{-3pt}
\textbf{Geometric efficiency:} The data-collection episodes satisfy
\(\sum_{e} n_e\le T\) by definition of the horizon, and at most
\(\log_2(2T)\) epochs end by exhausting their caps, since a cap-hit at index
\(e\) requires \(2^e\le T\); every further epoch ending, except possibly the final horizon-truncated
one, is a statistical termination, priced by Lemma~\ref{lem:switch-count}. The capped summability
bound controls prediction errors over all epochs, and Cauchy--Schwarz yields
the \(\sqrt T\) dependence.

\begin{algorithm}[t]
\begin{small}
\caption{Epoch-Based Optimistic MLE for POMGs}
\label{alg:omle}
\begin{algorithmic}[1]
\REQUIRE Model class $\Xi$, policy class $\Pi$, confidence level $\delta$, memory bound $m$, increment bound $B_0\ge1$ (Assumption~\ref{ass:reg}(c)); absolute constants $c\ge16$, $C_0$
\STATE Initialize cumulative dataset $\mathcal{D}_0 \leftarrow \emptyset$
\FOR{epoch $e = 1, 2, \ldots$}
    \STATE Set epoch length $T_e \leftarrow 2^e$
    \STATE Set $\delta_e \leftarrow \delta/(2e^2)$, ~$\epsilon_e \leftarrow 2^{-\lceil\log_2(N_{e-1}+2)\rceil}$ where $N_{e-1} \leftarrow |\mathcal D_{e-1}|$, ~$\beta_e \leftarrow c\bigl(\log \mathcal{N}(\epsilon_e;\Xi) + \log(1/\delta_e) + mB_0\bigr)$
    \hfill\COMMENT{keyed to data volume and epoch counter; no knowledge of $T$ required}
    \STATE \textbf{MLE Estimation:} 
    $\hat{\xi}_{e-1} \leftarrow \argmax_{\xi \in \Xi} L_{e-1}(\xi)$ where 
    $L_{e-1}(\xi)=\sum_{(\pi, \tau) \in \mathcal{D}_{e-1}} \log P^\pi_\xi(\tau)$
    \STATE \textbf{Confidence Set:} $\mathcal{C}_{e-1} \leftarrow \{\xi \in \Xi : L_{e-1}(\xi) \geq L_{e-1}(\hat{\xi}_{e-1}) - \beta_e\}$
    \\
    \STATE \textbf{Optimistic Planning:} \\
    $(\pi_e, \xi_e) \leftarrow \argmax_{(\pi,\xi) \in \Pi \times \mathcal{C}_{e-1}} V^\pi(\xi)$
    \STATE \textbf{Warm-up:} Execute $\pi_e$ for $m-1$ episodes (discard data)
    \STATE \textbf{Execution with termination test:} $n_e \leftarrow T_e$; \textbf{for} $t=1,\ldots,T_e$: execute $\pi_e$, collect $\tau_t^{(e)}$; recompute the MLE $\hat\xi_t$ on $\mathcal D_{e-1}\cup\{(\pi_e,\tau_1^{(e)}),\ldots,(\pi_e,\tau_t^{(e)})\}$; \textbf{if} $L_t(\xi_e) < L_t(\hat\xi_t) - \tau_e(t)$ \textbf{then} set $n_e \leftarrow t$ and end the epoch, where $\tau_e(t) := 2\beta_e(t) + C_0B_0\log\bigl(2t(t{+}1)/\delta_e\bigr)$ and $\beta_e(t)$ is the radius of line 4 recomputed with scale $2^{-\lceil\log_2(|\mathcal D_{e-1}|+t+2)\rceil}$ and budget $\delta_e/(2t(t{+}1))$
    \hfill\COMMENT{statistical termination}
    \STATE \textbf{Update Dataset:} $\mathcal{D}_e \leftarrow \mathcal{D}_{e-1} \cup \{(\pi_e, \tau_1^{(e)}),\ldots,$ $  (\pi_e, \tau_{n_e}^{(e)})\}$
\hfill\COMMENT{data-collection trajectories only, terminated epochs included}
\ENDFOR
\end{algorithmic}
\end{small}
\end{algorithm}

%\vspace*{-3pt}
%Algorithm~\ref{alg:omle} is an oracle-style statistical algorithm. It assumes access to a realizable model class \(\Xi=\Theta\times\Psi\) and to optimization oracles for MLE and optimistic planning over this class. Thus our results should be viewed as sample-complexity and regret guarantees under model-class realizability, rather than as computational efficiency guarantees.

\vspace*{-7pt}
\subsection{Main Result: Upper Bound}
\vspace*{-4pt}

In addition to Assumptions~\ref{ass:pl} and~\ref{ass:reveal}, the proof uses the
following statistical regularity conditions on the model class. These are standard
for the finite/tabular classes considered in Section~\ref{sec:extensions} under
positivity, smoothing/discretization, and the weak-revealing conversion described
below.

\vspace*{-3pt}

\begin{assumption}[Statistical Regularity]\label{ass:reg}
% The joint parameter space 
$\!\Xi \!= \Theta \times \Psi$ satisfies:
\vspace*{-12pt}
\begin{enumerate}[label=\alph*),leftmargin=10pt,itemsep=0pt]%,labelsep=1pt]
\item \textbf{Well-specification:} $\!\!$The true parameters $\xi^*\! =\! (\theta^*,\!\Phi^*) \!\in\! \Xi$.
\vspace*{-10pt}

\item \textbf{Finite covering:} For every $\epsilon > 0$, the $\epsilon$-covering number $\mathcal{N}(\epsilon;\Xi)$ of $\Xi$ in operator norm is finite; $\log \mathcal{N}(\epsilon;\Xi)$ grows at most polynomially in $1/\epsilon$ and the problem dimensions. Moreover, per-trajectory log-likelihoods $\xi \mapsto \log P^\pi_\xi(\tau)$ are Lipschitz in this norm, uniformly over policies and trajectories (given the positivity in (c)).
\vspace*{2pt}

\item \textbf{Bounded log-likelihood increments:} Log-likelihood ratios $\log(P^\pi_\xi(\tau)/P^\pi_{\xi^*}(\tau))$ are almost surely bounded by a constant $B_0$ (which may depend on the smoothing or discretization scale of the model class, and requires all trajectory probabilities to be bounded below, e.g., via smoothing). In particular, all trajectory probabilities are uniformly bounded below over $\Xi$; we refer to this property as \emph{positivity}. We normalize $B_0$ so that $B_0\ge1$ and $B_0$ also dominates the Lipschitz constant in (b); this is without loss of generality, since only an upper bound on either quantity enters the analysis. 
\vspace*{2pt}

\item \textbf{Well-defined predictive states:} For every $\xi\in\Xi$, policy $\pi$, and trajectory prefix with positive probability under $\xi^*$, the prefix probability under $\xi$ is positive, so the normalized predictive state $q^{\xi}_{h-1}$ is well-defined and satisfies $\|q^{\xi}_{h-1}\|_1 = 1$; this follows from the positivity in (c). 
\vspace*{2pt}

\item \textbf{KL-to-operator conversion (stepwise):} For all $\xi=(\theta,\Phi)\in\Xi$ and policies $\pi\in\Pi$, both
\vspace*{-4pt}
\[
\textstyle 
\mathrm{KL}(P^\pi_{\xi^*}\|P^\pi_\xi)
\!\geq\!
c_{\mathrm{KL}} \!\sum_{h=1}^H \mathbb{E}\!\left[\|J^{\xi,\pi}_h q^{\xi^*}_{h-1} \!-\! J^{\xi^*,\pi}_h q^{\xi^*}_{h-1}\|_1^2\right]
\]

\vspace*{-2pt}
\noindent\emph{and} the world-channel form
\[
\textstyle 
\mathrm{KL}(P^\pi_{\xi^*}\|P^\pi_\xi)
\geq
c_{\mathrm{KL}} \sum_{h=1}^H \mathbb{E}\!\left[\|(W^{\theta}_h - W^{\theta^*}_h) q^{\xi^*}_{h-1}\|_{1,\mathcal V}^2\right]\!\!,
\]

\vspace*{-1pt}
\noindent where $c_{\mathrm{KL}}$ depends on $\alpha_\kappa$. The first clause is the standard stepwise conversion proved for POMDPs via weak revealing
\citep{liu2022partially}, imposed on the joint-operator class (well-defined by
Theorem~\ref{thm:decomp}); the second is the quantitative content of
Assumption~\ref{ass:reveal}, which makes world-channel discrepancies detectable
in $\kappa$-step observation windows with strength $\alpha_\kappa$, so that a
chain-rule/Pinsker argument converts window detectability into trajectory KL.
For tabular classes both clauses are verified this way, with
$c_{\mathrm{KL}}=\Omega(\alpha_\kappa^2/\kappa)$, using the positivity in
(c). Together the two clauses also control the adversary channel; by the pointwise
channel-triangle inequality \eqref{eq:channel-triangle} together with
$(a+b)^2\le2a^2+2b^2$,
$\mathrm{KL}(P^\pi_{\xi^*}\|P^\pi_\xi)\ge (c_{\mathrm{KL}}/4)\sum_h
\mathbb{E}[\|(G^{\Phi,\pi}_h-G^{\Phi^*,\pi}_h)W^{\theta^*}_hq^{\xi^*}_{h-1}\|_1^2]$. Under this joint conversion no posterior-Lipschitz transport cost enters
the analysis; see Remark~\ref{rem:transport-role} (and
Lemma~\ref{lem:transport} for the weakened, world-channel-only variant).
\vspace*{2pt}

\item \textbf{Exact warm-up stationarity:} The reaction rule \(R\) is
time-invariant (as in Section~\ref{sec:adapt}); consequently, after
\(m-1\) warm-up episodes of policy \(\pi\), the \(m\)-memory adversary's
response is exactly \(R_\infty(\pi)\) for all data-collection episodes in that
epoch, with no residual transient.

\end{enumerate}
\end{assumption}

\vspace*{-3pt}
\begin{remark}
For finite/tabular classes, these conditions hold under standard positivity, namely
trajectory probabilities are bounded below after smoothing or discretization if
needed. Condition (e) is a
stepwise KL-to-operator bound; applying Cauchy--Schwarz over \(h\) gives $\Delta_e^2
=
\left(\sum_h \mathbb E[\|\cdots\|_1]\right)^2
\le
H\sum_h \mathbb E[\|\cdots\|_1^2],$ which yields the \(H\beta_T\) term in
\(\bar\beta_T=H\beta_T\). A stronger aggregate version of (e) would
remove this factor, giving \(\bar\beta_T=\beta_T\); see
Remark~\ref{rem:aggregate-kl}.
\end{remark}

\begin{theorem}[Policy Regret Upper Bound]\label{thm:upper}
Under Assumptions~\ref{ass:pl}, \ref{ass:reveal}, and~\ref{ass:reg}, Algorithm~\ref{alg:omle} achieves
\vspace*{-3pt}
\[
\textstyle
PR(T) \leq C(H\sqrt{\bar\beta_T \cdot d_E T}\,\log T + mH\, S_T)
\]
with probability at least $1\!-\!2\delta$, where $\!d_E \!=\! d^{W}_{\rm agg}\!+\!d^{G}_{\rm agg}$
is the channel-sum aggregate Eluder dimension at scale $1/(2T)$, $\bar\beta_T := H\beta_T$ with
$\beta_T=O(\log \mathcal{N}(\epsilon_T;\Xi)+$
$\log T+\log(1/\delta)+mB_0)$ for $\epsilon_T := 1/(2T+4)$,
and $S_T$ is the number of policies deployed by the algorithm, which satisfies
$S_T \le 1 + \log_2(2T) + \tilde O\bigl(\tfrac{B_0}{1+mB_0}\sqrt{d_E\,\bar\beta_T\,T}\bigr)$
(Lemma~\ref{lem:switch-count}). Consequently $PR(T) = \tilde O\bigl(H\sqrt{\bar\beta_T d_E T} + mH\log T\bigr)$, with no condition relating $m$, $B_0$, and $T$. 
\end{theorem}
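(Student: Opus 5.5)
We split the regret into warm-up episodes and data-collection episodes. Write $\pi^*$ for the comparator and $\xi^*$ for the true parameter, so that $V^{\pi^*}(R_\infty(\pi^*))=V^{\pi^*}(\xi^*)$.

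\textbf{Warm-up episodes.} Each epoch contributes $m-1$ warm-up episodes, and each such episode costs at most $H$. Summing over the $S_T$ deployed policies gives the $mH\,S_T$ term.

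\textbf{Data-collection episodes.} By Assumption~\ref{ass:reg}(f), every data-collection episode in epoch $e$ faces exactly the stationary response $R_\infty(\pi_e)$, so its value is $V^{\pi_e}(\xi^*)$. The remaining steps are as follows.

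\begin{enumerate}
\item \textbf{Concentration and validity.} Prove a uniform MLE concentration lemma (Lemma~\ref{lem:conc}). Take an $\epsilon_e$-net of $\Xi$, use the Lipschitz property in Assumption~\ref{ass:reg}(b), and apply a martingale/Freedman bound to log-likelihood ratios bounded by $B_0$. A union bound over epochs with $\delta_e=\delta/(2e^2)$ and over in-epoch times with $\delta_e/(2t(t+1))$ then shows that, with probability $1-\delta$:
\begin{itemize}
\item $\xi^*\in\mathcal C_{e-1}$ for all $e$;
\item every $\xi\in\mathcal C_{e-1}$ has cumulative Hellinger/KL error $\sum_{(\pi,\tau)\in\mathcal D_{e-1}}\mathrm{KL}(P^\pi_{\xi^*}\|P^\pi_\xi)\lesssim\beta_e$;
\item $\xi^*$ never triggers the termination test, which is the reason for the $2\beta_e(t)+C_0B_0\log(\cdot)$ slack.
\end{itemize}
A second $\delta$ covers the martingale step that converts realized regret to expected values. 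This accounts for the probability $1-2\delta$.

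\item \textbf{Optimism.} Since $\xi^*\in\mathcal C_{e-1}$, we have $V^{\pi^*}(\xi^*)\le V^{\pi_e}(\xi_e)$. The per-episode regret is therefore at most $V^{\pi_e}(\xi_e)-V^{\pi_e}(\xi^*)$. By the OOM telescoping identity, this is at most $F_{\xi_e}(\pi_e)$, up to the reward bound $H$ per unit of $\ell_1$ error (which gives a factor $H$).

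\item \textbf{Reduction to the two channels.} By \eqref{eq:channel-triangle}, $F_{\xi_e}\le F^W_{\theta_e}+F^G_{\Phi_e}$.
\begin{itemize}
\item On past data, Assumption~\ref{ass:reg}(e) combined with Cauchy--Schwarz over $h$ gives $\sum_{\text{past}}(F^W_{\theta_e})^2\lesssim H\beta/c_{\mathrm{KL}}=\bar\beta$.
\item The same bound holds for $F^G_{\Phi_e}$, via its derived $c_{\mathrm{KL}}/4$ bound.
\item Within the current epoch, the termination test ensures that $\xi_e$ has not been refuted on the data so far. Hence the in-epoch squared errors also stay $\lesssim\bar\beta$ until termination. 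This is Lemma~\ref{lem:epoch-cap}: each epoch's contribution is capped at $\tilde O(H\beta_T)$.
\end{itemize}

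\item \textbf{Eluder summation.} Apply the standard Eluder-dimension summation argument of \citet{russo2013eluder} separately to each channel at scale $1/(2T)$. Because the confidence set and policy are constant within an epoch, the argument is run on the capped per-episode sequence. This yields, for each channel, $\sum_t F_t\lesssim \sqrt{d\,\bar\beta\,T}\log T+d$. Summing the two channels gives the leading term $H\sqrt{\bar\beta_T d_E T}\log T$.
\end{enumerate}

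\textbf{Main obstacle: bounding $S_T$ (Lemma~\ref{lem:switch-count}).} Cap-hit epochs number at most $\log_2(2T)+1$. For terminations, I would argue that whenever epoch $e$ terminates, the accumulated in-epoch log-likelihood deficit of $\xi_e$ exceeds $\tau_e(t)\gtrsim\beta_e\ge c\,mB_0$. By concentration, the in-epoch KL sum of $\xi_e$ is then at least of order $mB_0$. The total of these in-epoch errors over all terminated epochs is bounded above by the same Eluder summation: each in-epoch increment is at most $B_0$ times the squared/absolute prediction error, giving a total of $\lesssim B_0\sqrt{d_E\bar\beta_T T}$. Dividing by the per-termination lower bound $\Omega(1+mB_0)$ gives the stated count $\tilde O\bigl(\tfrac{B_0}{1+mB_0}\sqrt{d_E\bar\beta_TT}\bigr)$.

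Multiplying this count by $mH$ gives $\tilde O(H\sqrt{d_E\bar\beta_T T})$, which is absorbed into the leading term with no condition relating $m$, $B_0$, and $T$. The delicate part is lower-bounding the in-epoch information gain at termination, since the test compares against the running MLE rather than against $\xi^*$. I would handle this via the triangle inequality $L_t(\xi^*)\ge L_t(\hat\xi_t)-\beta_e(t)$, which reduces the comparison to $\xi_e$ versus $\xi^*$ and leaves a deficit of at least $\beta_e(t)+C_0B_0\log(\cdot)$.
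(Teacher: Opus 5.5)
The gap is in your switch-count step, at the point you flagged as delicate. The termination test compares \emph{cumulative} likelihoods $L_t$ on $\mathcal D_{e-1}$ together with the epoch's data, not in-epoch likelihoods. The Freedman lower tail, however, controls only the epoch-local gap $G_t=[L_t(\xi_e)-L_t(\xi^*)]-g_{\rm pre}$, where $g_{\rm pre}=L_{e-1}(\xi_e)-L_{e-1}(\xi^*)$. Membership $\xi_e\in\mathcal C_{e-1}$ gives only $g_{\rm pre}\ge-\beta_e$. So the optimistic model may already trail $\xi^*$ by up to $\beta_e$ on past data, and that part of the deficit says nothing about $\mathrm{KL}_e$. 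With your bound $L_t(\hat\xi_t)-L_t(\xi^*)\le\beta_e(t)$, firing yields only
\[
G_t<\beta_e(t)+\beta_e-\tau_e(t)\le 2\beta_e(t)-\tau_e(t)=-C_0B_0\log\bigl(2t(t{+}1)/\delta_e\bigr).
\]
The lower-tail fluctuation $G_t\ge-2t\,\mathrm{KL}_e-C_0B_0\log(2t(t{+}1)/\delta_e)$ consumes all of this, leaving merely $\mathrm{KL}_e>0$. You therefore get no $\Omega(1+mB_0)$ evidence per termination, and $N_{\rm term}$ loses its $1/(1+mB_0)$ factor. As a result, $(m-1)H\,N_{\rm term}$ is no longer dominated by the leading term without a condition relating $m$, $B_0$, and $T$. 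That is exactly the theorem's final claim, and also the stated bound on $S_T$.

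The missing ingredient is a sharper running MLE-to-truth bound, $L_t(\hat\xi_t)-L_t(\xi^*)\le\beta_e/2+\beta_e(t)/8\le\tfrac{5}{8}\beta_e(t)$. It comes from two pieces:
\begin{itemize}
\item the absolute-constant Hellinger-affinity supermartingale (not Freedman) at the epoch boundary, Lemma~\ref{lem:conc}(i);
\item an epoch-local Hellinger supermartingale over the dyadic net at each boundary $t$, with budget $\delta_e/(2t(t{+}1))$, using $c\ge16$ to absorb the net and discretization terms.
\end{itemize}
With this bound, $2\beta_e(t)-\tfrac{5}{8}\beta_e(t)-\beta_e(t)=\tfrac{3}{8}\beta_e(t)$ survives, giving $n_e\mathrm{KL}_e\ge\tfrac{3}{16}\beta_e(t)\ge\tfrac{3c}{32}(1+mB_0)$ as needed.

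The rest of your outline is sound. Your per-episode Eluder step is a valid and simpler alternative to the paper's dyadic value/weight bucketing (Lemma~\ref{lem:eluder-summ}). At episode $j$ of epoch $e$, the in-epoch prefix adds $(j-1)f_e(x_e)^2\le n_ef_e(x_e)^2\le\tilde\beta$. The unrolled sequence therefore satisfies ordinary past consistency at radius $R+\tilde\beta$, and standard Russo--Van Roy counting applies directly.

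One bookkeeping correction: $PR(T)$ is already defined through expected values $V^{\pi^t}(g^t)$, so no realized-to-expected martingale is needed. The second $\delta$ instead pays for the in-epoch upper- and lower-tail Freedman events behind the cap and the switch count.
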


\vspace*{-2pt}
The theorem gives the desired $\sqrt{T}$ dependence on the number of episodes.
The effective confidence parameter $\bar\beta_T = H\beta_T$ arises from applying
Cauchy--Schwarz over $H$ steps to convert the stepwise KL-to-operator bound
(Assumption~\ref{ass:reg}(e)) into a past-consistency radius for the aggregate
class; under the stronger aggregate KL conversion discussed in
Remark~\ref{rem:aggregate-kl}, the $H\beta_T$ factor improves to $\beta_T$.
No transport term enters $\bar\beta_T$, even though the observable operators
depend on the deployed policy; Remark~\ref{rem:transport-role} explains why
this dependence costs nothing here. The $mB_0$ floor in $\beta_e$ is what buys the unconditional domination of the warm-up term. The constant $C$ depends on $\alpha_\kappa$ (through $c_{\rm KL}$) and on $B_0$, with additional logarithmic dependence on $|B|$ and $H$; the posterior-Lipschitz constant $L$ enters only through $d_E$ and $\mathcal N(\cdot;\Xi)$. This rate is optimal in its \(T\) and
\(d_E\) dependence, as shown in Section~\ref{sec:lower}. The full proof is in Appendix~\ref{app:upper}, and an extended sketch in
Appendix~\ref{app:sketch}; here we record the skeleton in four steps.

% \begin{proof}[Proof sketch]
%\vspace*{-7pt}
%\subsection{Proof Sketch}
%\label{sec:sketch}
\vspace*{-1pt}
\emph{Optimism reduces regret to simulation error.} After its warm-up
episodes, an \(m\)-memory adversary plays its stationary response, so
\(V^{\pi_e}(\xi^*)=V^{\pi_e}(R_\infty(\pi_e))\) throughout epoch \(e\)'s
data collection. On the confidence event \(\xi^*\in\mathcal C_{e-1}\),
optimism gives
\(V^{\pi^*}(\xi^*)-V^{\pi_e}(\xi^*)\le V^{\pi_e}(\xi_e)-V^{\pi_e}(\xi^*)\),
and the simulation lemma bounds the right side by \(H\Delta_e\), where
\(\Delta_e\) is the aggregate observable-operator prediction error of the
optimistic model along the trajectories of \(\pi_e\).

\emph{Likelihood fit controls past errors.} Membership
\(\xi_e\in\mathcal C_{e-1}\) means the optimistic model nearly matches the
true likelihood on all past data; the KL-to-operator conversion
(\ref{ass:reg}(e)), available under weak revealing, turns this into
small prediction error at the historical policies
\(\pi_1,\ldots,\pi_{e-1}\).

\emph{The termination test caps fresh errors.} Past consistency says nothing
about \(\Delta_e\) at the newly chosen \(\pi_e\), and a fixed epoch
schedule can commit to an ill-informed policy for \(\Theta(T)\) episodes
(Proposition~\ref{prop:fixed-fails}). If \(\Delta_e\) is large, the epoch's
own data refute the deployed model at a per-episode rate proportional to the
squared stepwise error, so the test fires; hence every epoch, terminated or
complete, satisfies \(n_e\Delta_e^2\le\tilde O(H\beta_T/c_{\rm KL})\)
(Lemma~\ref{lem:epoch-cap}).

\emph{Eluder summability assembles the bound.} Past consistency at radius
\(\bar\beta_T=H\beta_T\) and the per-epoch cap feed the capped summability
argument, giving \(\sum_e n_e\Delta_e^2\le\tilde O(d_E\bar\beta_T)\)
channel by channel (Lemma~\ref{lem:eluder-total});
Cauchy--Schwarz with \(\sum_e n_e\le T\) then yields
\(\sum_e n_eH\Delta_e\le\tilde O(H\sqrt{\bar\beta_T d_E T})\), and the
warm-up cost of \((m-1)H\) per epoch over \(S_T\) epochs is controlled by
the switch-count lemma, cap-hit epochs being at most \(\log_2(2T)\) and each
termination being priced in statistical evidence
(Lemma~\ref{lem:switch-count}).

% \end{proof}

% \vspace*{-6pt}
% \paragraph{Proof idea.} Optimism reduces epoch-\(e\) regret to the
% simulation error \(H\Delta_e\) of the optimistic model at the deployed
% policy. Likelihood consistency converts, via Assumption~\ref{ass:reg}(e),
% into small prediction error at past policies, while the statistical
% termination test caps each epoch's fresh error
% (Lemma~\ref{lem:epoch-cap}); Eluder summability then bounds
% \(\sum_e n_e\Delta_e^2\) by \(\tilde O(d_E\bar\beta_T)\) and
% Cauchy--Schwarz gives the rate, with warm-up priced by the switch count.
% An extended sketch is in Appendix~\ref{app:sketch} and the full proof in
% Appendix~\ref{app:upper}.
% % ============================================================================
\vspace*{-10pt}

\section{Lower Bound}%: Hard Instance Construction}
\label{sec:lower}
\vspace*{-3pt}
% ============================================================================

%\vspace*{-5pt}
We now establish that the upper bound in Theorem~\ref{thm:upper} is optimal in its \(\sqrt{T}\) and aggregate-Eluder-dimension dependence, up to
problem-dependent and logarithmic factors. 
Our lower bound demonstrates that even under highly favorable conditions, i.e., fully
revealing observations, a memoryless adversary, and a smooth Lipschitz response
class, any algorithm must incur \(\Omega(\sqrt{d_E T})\) policy regret.

\vspace*{-6pt}
\subsection{Hard Instance Construction}
\vspace*{-2pt}

We embed the standard $d$-armed stochastic bandit in the simplest possible POMG, one in which observations fully reveal the state, the adversary is trivial, and the horizon is $H=2$, so that all remaining difficulty is statistical. The learner's first action routes the game to a latent state, the observation there carries a Bernoulli reward signal, and identifying the best route is exactly the embedded bandit problem. The family is indexed by the location $i^*\in[d]$ of the best route, with gap $\Delta = c_0\sqrt{d/T}$ for a small constant $c_0>0$. The blocks below give the construction.

%\vspace*{-2pt}
%\textcolor{red}{The lower bound comes from a bandit embedded in the simplest possible POMG, one in which observations fully reveal the state, the adversary is trivial, and the horizon is $H=2$, so that all remaining difficulty is statistical. The learner's first action routes the game to a latent state, the observation there carries a Bernoulli reward signal, and identifying the best route is exactly a $d$-armed bandit problem. The family is indexed by the location $i^*\in[d]$ of the best route, with gap $\Delta = c_0\sqrt{d/T}$ for a small constant $c_0>0$. The blocks below give the construction.}

\vspace*{-2pt}
\textbf{Dynamics.} The POMG has $d$ latent states $s_1,\ldots,s_d$ and $d$ learner actions $a_1,\ldots,a_d$. The episode starts in a fixed, known state (say $s_1$; step 1 serves only to route the learner's choice). At step $h\!=\!1$, the learner observes a fixed observation $o_0$ (the same in all states, so $h\!=\!1$ carries no reward), chooses action $a_i$, and the state transitions deterministically to $s_i$. At step $h\!=\!2$, in state $s_i$ the learner observes $o=(i,y)$ where $y\sim\mathrm{Bernoulli}(\mu_i)$ with $\mu_{i^*}\!=\!1/2\!+\!\Delta$ and $\mu_i=1/2$ for $i\!\neq\! i^*$; reward $r_2(i,y)\!=\!y$, and $r_1\!=\!0$. The first coordinate \(i\) of \(o\) reveals the latent state at step \(h\!=\!2\);
together with the fixed and known initial state at step \(h\!=\!1\), this makes the
POMG fully revealing with \(\kappa\!=\!1\) and a constant signal parameter independent
of \(\Delta\). 

\vspace*{-1pt}
\textbf{Adversary.} There is a single adversary action $b_{\text{coop}}$, played in every episode, so the adversary is memoryless ($m=1$) and its response $g \equiv b_{\text{coop}}$ satisfies Assumption~\ref{ass:pl} with $L=0$.

\vspace*{-1pt}
\textbf{Policy regret.} The best fixed policy $\pi^* = a_{i^*}$ achieves value $1/2+\Delta$ per episode. Any algorithm's policy regret equals its bandit regret against arm $a_{i^*}$.

\vspace*{-2pt}
\textbf{Model class.} Define the ambient class as $\{\mu \in [1/2,\, 1/2+\Delta]^d\}$ where $\mu_i$ is the mean reward of arm $a_i$, and take the learner policy class to be the $d$ deterministic routing policies (the learner may still randomize its choices across episodes). The hard instances form the subclass $\mu_{i^*} = 1/2+\Delta$, $\mu_i = 1/2$ for $i\neq i^*$. With this policy class the ambient class has Eluder dimension $\Theta(d)$ whenever $dT$ exceeds an absolute constant (Appendix~\ref{app:lower}); the lower-bound subclass is contained in the ambient class, and therefore establishes a lower bound for a model class with \(d_E=\Theta(d)\).

\vspace*{-2pt}
\begin{theorem}[Lower Bound]\label{thm:lower}
For any $d, T \in \mathbb{N}$ with $T\! \geq\! d\! \geq\! 2$ and $dT\! \geq\! C_{\rm lb}$
(an absolute constant determined by $c_0$;\! $C_{\rm lb}\!=\!\lfloor1/(4c_0^2)\rfloor\!+\!1$
suffices), the ambient model class of the construction above, embedded as a POMG, satisfies Assumptions~\ref{ass:pl}, \ref{ass:reveal}, and~\ref{ass:reg} and has $d_E = \Theta(d)$. For any algorithm, there exists an instance in this class such that $\mathbb{E}[PR(T)] \!\geq\! c \sqrt{d T}$ for a universal constant $c > 0$.
\end{theorem}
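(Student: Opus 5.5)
The proof has three parts: checking that the embedded bandit satisfies the standing assumptions, bounding the aggregate Eluder dimension of the ambient class, and running the standard minimax lower bound for $d$-armed Bernoulli bandits. The policy-regret part is the easiest. The adversary is a single constant action with $m=1$, so $R_\infty(\pi)=g^t$ for every $\pi$, and policy regret coincides with pseudo-regret against arm $a_{i^*}$. For the lower bound I will use a reference instance $\mu^{(0)}\equiv 1/2$, which lies in the ambient class, together with the $d$ hard instances $\mu^{(i)}$. Let $N_i$ be the number of episodes routed to $a_i$. Divergence decomposition gives $\mathrm{KL}(P_0\|P_i)=\E_0[N_i]\,\mathrm{kl}(1/2,1/2+\Delta)\le 3\E_0[N_i]\Delta^2$ for $\Delta\le 1/4$. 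Pinsker then gives $\E_i[N_i]\le \E_0[N_i]+T\sqrt{\tfrac32\E_0[N_i]}\,\Delta$. Averaging over $i$ and using Cauchy--Schwarz with $\sum_i\E_0[N_i]=T$ yields $\frac1d\sum_i\E_i[N_i]\le T/d+T\Delta\sqrt{3T/(2d)}$. With $\Delta=c_0\sqrt{d/T}$, $d\ge2$ and $c_0$ small, the right side is at most $T(1/2+c_0\sqrt{3/2})\le 3T/4$. Since $\E_i[PR(T)]=\Delta(T-\E_i[N_i])$, some $i$ has $\E_i[PR(T)]\ge \Delta T/4=(c_0/4)\sqrt{dT}$. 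The hypothesis $T\ge d$ ensures $\Delta\le c_0\le 1/4$, so the instances are valid.

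For the assumptions, Assumption~\ref{ass:pl} holds with $L=0$ because the response is constant. For Assumption~\ref{ass:reveal} with $\kappa=1$, the step-2 observation $(i,y)$ reveals the state. The world channel $W_1^\theta(o_0,a)q$ therefore maps the step-1 state to $s_i$, independently of $\theta$, and only the emission at step 2 depends on $\mu$. I will use the paper's padding convention of a fully revealing dummy step. The emission difference $|\mu_i-\mu_i'|$ appears identically on both sides of \eqref{eq:weak-revealing} up to a factor $2$, which gives $\alpha_1$ an absolute constant independent of $\Delta$.

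Assumption~\ref{ass:reg} is checked item by item. Well-specification holds by construction. For covering, the class is a $d$-dimensional box, so $\log\mathcal N(\epsilon)\le d\log(1/\epsilon)$. For bounded increments, $\mu_i\in[1/2,3/4]$ keeps all trajectory probabilities at least $1/4$, so $B_0=\log 3$ works. Predictive states are well defined by this positivity. Exact warm-up holds with $m=1$. For the KL conversion, the trajectory KL equals $\mathrm{kl}(\mu_i^*,\mu_i)\ge 2(\mu_i^*-\mu_i)^2$ on the routed arm. The operator error is $2|\mu_i^*-\mu_i|$ on both the joint and the world channel, so $c_{\rm KL}=1/2$ works.

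The step needing care is $d_E=\Theta(d)$ at scale $1/(2T)$. The adversary class is a singleton, so $F^G\equiv0$ and $d^G_{\rm agg}=0$. For the world channel, $F^W_\theta(a_i)=2|\mu_i-\mu_i^*|$: queries are the $d$ routing policies, and each function depends on a single coordinate. For the upper bound, a query $a_i$ repeated once in the prefix is already $\alpha$-dependent, so any independent sequence has distinct queries and length at most $d$. For the lower bound, I take $\xi^*=\mu^{(0)}$ and the sequence $a_1,\dots,a_d$ at scale $\alpha'=\max(\alpha,\cdot)$. The function with $\mu_j=1/2+\Delta$ for the current $j$ and $\mu=\mu^*$ elsewhere vanishes on predecessors and equals $2\Delta$ at $a_j$. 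This makes $a_j$ $\alpha'$-independent for $\alpha'<2\Delta$. It requires $2\Delta>1/(2T)$, i.e.\ $4c_0\sqrt{dT}>1$, which is exactly the condition $dT\ge C_{\rm lb}$. Taking the supremum over $\xi^*$ gives $d_E=d$, which completes the claim.
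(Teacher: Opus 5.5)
Your proof is correct and follows the paper's argument step for step (the same bandit embedding, the same repeated-query and single-coordinate-witness computation of $d_E=\Theta(d)$, and the same item-by-item assumption check). The one difference is that you derive the $\Omega(\sqrt{dT})$ bound explicitly, via the uniform reference instance $\mu^{(0)}\equiv 1/2$, divergence decomposition and Pinsker, where the paper simply cites the Lattimore--Szepesv\'ari minimax bound.

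There are two harmless slips. First, under the paper's realized-observation convention the step-2 operator error is $|\mu_i-\mu_i^*|$, not $2|\mu_i-\mu_i^*|$; your $c_{\rm KL}=1/2$ still works, and independence then needs $\Delta>1/(2T)$, which for integer $dT$ is precisely $dT\ge C_{\rm lb}$. Second, Assumption~\ref{ass:reveal} needs no padding or two-sided comparison: at $h=1$ the right side vanishes because $W^\theta_1$ is $\theta$-free, and $h=2>H-\kappa$ is exempt.
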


\vspace*{-2pt}
The construction uses a constant adversary response, which is memoryless and
trivially satisfies the posterior-Lipschitz condition. Thus the hardness comes not
from adversarial memory or partial observability, but from the intrinsic statistical
difficulty of identifying the best policy. 
The detailed proof is in Appendix~\ref{app:lower}.

Since the ambient class has \(d_E=\Theta(d)\), Theorem~\ref{thm:lower}
establishes the claimed \(\Omega(\sqrt{d_E T})\) dependence. We do not claim a
complete minimax characterization of all horizon and memory factors. The outer
factor \(H\) in the upper bound comes from the simulation lemma, and the
\(mH\log T\) term is the warm-up cost needed to let an \(m\)-memory adversary
stabilize at the beginning of each epoch. Refined hard instances can force horizon
and stabilization costs. % , but the main lower bound is intended to establish the
% unavoidable \(\sqrt{T}\) and \(d_E\) dependence. 
Closing the remaining constants and
problem-dependent factors in \(H\) and \(m\) is an interesting direction for future
work.

% % ============================================================================

\section{Extensions}
\label{sec:extensions}
% ============================================================================
\vspace*{-3pt}

Next, we study several natural extensions of our framework. % that broaden its applicability.

%============================================================================

\vspace*{-7pt}
\subsection{Adaptive Horizon Property}
\vspace*{-2pt}

Algorithm~\ref{alg:omle} naturally handles unknown time horizons $T$ through its epoch-based structure. Since epochs grow geometrically and each epoch builds its confidence set from all historical data (not just the current epoch), the algorithm requires no prior knowledge of when it will terminate.

\begin{theorem}[Adaptivity]\label{thm:adaptive}
Without knowledge of $T$, and for all horizons $T$ simultaneously, Algorithm~\ref{alg:omle} achieves $PR(T) = \tilde{O}\bigl(H\sqrt{\bar\beta_T d_E T} + mH\log T\bigr)$ with probability at least $1-2\delta$, where $\bar\beta_T$ and $d_E$ are as in Theorem~\ref{thm:upper}.
\end{theorem}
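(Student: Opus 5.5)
The plan is to show that the proof of Theorem~\ref{thm:upper} never uses $T$ except as the index at which we stop summing, and then to make its high-probability events hold for all $T$ at once. First I will check that every algorithmic quantity is $T$-free. The epoch cap $T_e=2^e$, the discretization $\epsilon_e$ (keyed to $|\mathcal D_{e-1}|$ on a dyadic grid), the budget $\delta_e=\delta/(2e^2)$, the radius $\beta_e$, and the termination threshold $\tau_e(t)$ all depend only on the epoch counter and the data volume. So the trajectory of Algorithm~\ref{alg:omle} is one fixed random process, and $PR(T)$ for different $T$ is just the truncation of this single run.

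Second, I will define one global good event $\mathcal E$, independent of $T$, as the intersection over all epochs $e\ge1$ and all within-epoch times $t\ge1$ of two families of events. The first is the MLE confidence events $\xi^*\in\mathcal C_{e-1}$. The second is the martingale concentration events behind the termination test. Each confidence event fails with probability at most $\delta_e=\delta/(2e^2)$, and $\sum_e \delta/(2e^2)\le\delta\pi^2/12<\delta$. The test events at $(e,t)$ use budget $\delta_e/(2t(t+1))$, so they sum to at most $\sum_e\delta_e/2\le\delta$. Because the dyadic grid makes the covering nets deterministic (Lemma~\ref{lem:conc}), these union bounds are legitimate without reference to $T$. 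Hence $\Pr(\mathcal E)\ge1-2\delta$.

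Third, on $\mathcal E$ I will fix an arbitrary $T$ and rerun the deterministic part of the proof of Theorem~\ref{thm:upper}. The episodes $1,\dots,T$ cover epochs $1,\dots,e(T)$, with the last possibly truncated. The truncated epoch's regret is still bounded by its $n_e H\Delta_e$ plus warm-up, since truncation only shortens it. The steps go through as before: optimism reduces regret to simulation error, and Lemma~\ref{lem:epoch-cap} caps each epoch with $\beta_e\le\beta_T$. This monotonicity holds because $|\mathcal D_{e-1}|\le T$ gives $\epsilon_e\ge\epsilon_T$, so $\log\mathcal N(\epsilon_e)\le\log\mathcal N(\epsilon_T)$. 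It also uses $e(T)\le$ (number of epochs up to $T$), which makes $\log(1/\delta_e)=O(\log T+\log(1/\delta))$, provided the epoch count is polynomial in $T$. That holds because every epoch uses at least one data-collection episode.

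The remaining steps are Eluder summability (Lemma~\ref{lem:eluder-total}) at scale $1/(2T)$ and the switch count (Lemma~\ref{lem:switch-count}). The Eluder scale is purely an analysis parameter, so choosing it depending on $T$ after the fact is harmless. Together these give the bound at $T$ on $\mathcal E$, for every $T$ simultaneously.

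The main obstacle is confirming that each lemma invoked from the proof of Theorem~\ref{thm:upper} is stated on events that do not depend on $T$. The concern is Lemma~\ref{lem:switch-count}, which prices terminations by accumulated likelihood evidence. I will first verify that its concentration step uses only the per-$(e,t)$ events already included in $\mathcal E$. If it instead relies on a horizon-dependent union bound, I would re-index that bound by epoch and time with summable budgets, as above. This costs only logarithmic factors absorbed into $\tilde O$.
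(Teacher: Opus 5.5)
Your proposal is correct and follows essentially the same route as the paper's proof: you check that every algorithmic quantity depends only on the epoch counter and the data volume, form one $T$-free event from the summable budgets $\delta_e=\delta/(2e^2)$ (one budget for the confidence events, a second for the within-epoch cap and switch-count events), and rerun the argument of Theorem~\ref{thm:upper} on the prefix of epochs reached by time $T$ using $\beta_e,\beta_e(t)\le\beta_T$. One bookkeeping detail: the within-epoch family contains both Freedman tails (the upper tail for Lemma~\ref{lem:epoch-cap} and the lower tail for Lemma~\ref{lem:switch-count}), each costing $\delta_e/2$ per epoch, but since $\sum_e\delta_e=\delta\pi^2/12<\delta$ your total failure probability of $2\delta$ still holds.
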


\vspace*{-4pt}
The result follows from Theorem~\ref{thm:upper}. Each epoch uses only information about its own length to determine how many episodes to run, requiring no knowledge of the total horizon $T$. The regret bound holds for all $T$ by a union bound over epochs with confidence parameters $\delta_e = \delta/(2e^2)$, ensuring $\sum_{e=1}^\infty \delta_e \le \delta$; a second, identical budget covers the per-epoch cap and switch-count events, giving the stated $1-2\delta$. 
The key observation is that the confidence and Eluder summability arguments are
prefix-stable. They apply to any completed prefix of epochs and to the
possibly partial final epoch. Since \(\sum_{e}n_e\le T\) for every horizon
and the deployed-policy count is bounded by Lemma~\ref{lem:switch-count}, the
same Cauchy--Schwarz step as in Theorem~\ref{thm:upper} gives the desired
bound for every final horizon \(T\). This natural adaptivity arises from the epoch-based structure combined with cumulative confidence sets, requiring no modification to the base algorithm. The detailed proof is in Appendix~\ref{app:adaptive}.

% ============================================================================
% ============================================================================

\vspace*{-6pt}
\subsection{Fading Memory Adversaries}
\vspace*{-2pt}

Our base model assumes the adversary's response depends on the learner's most recent $m$ policies through a hard cutoff window. A natural generalization considers adversaries with exponentially decaying memory. Formally, a $\gamma$-fading memory adversary maintains an internal state using the \emph{normalized} recursion $z_t = \gamma z_{t-1} + (1-\gamma)\phi(\pi^t),$ where $\phi: \Pi \to \mathbb{R}^{d_z}$ is a feature map with $\|\phi(\pi)\|_2 \leq 1$, and its response is $g^t = g(z_t)$ for some $L_g$-Lipschitz function $g$. The normalization $(1-\gamma)$ ensures $\|z_t\|_2 \leq 1$ uniformly and the stationary state under constant policy $\pi$ is exactly $z_\infty = \phi(\pi)$, with $\|z_t - z_\infty\|_2 \leq \gamma^k \cdot 2$ after $k$ steps.

\begin{theorem}[Fading Memory]\label{thm:fading}
Against a $\gamma$-fading memory adversary with $L_g$-Lipschitz response
function, run Algorithm~\ref{alg:omle} with memory parameter
$m := m_{\text{warm}} := \lceil \log(2T)/(1-\gamma) \rceil$. Then, with
probability at least $1-2\delta$,
\[
PR(T)
= \tilde{O}\!\left(H\sqrt{\bar\beta^\gamma_T\, d_E T}
\;+\; \frac{H S_T \log T}{1-\gamma}\right),
\]
where $\bar\beta^\gamma_T := H\beta^\gamma_T$ with
$\beta^\gamma_T := O\bigl(\log\mathcal N(\epsilon_T;\Xi)+\log T+\log(1/\delta)
+ B_0\log(2T)/(1-\gamma) + B_0 H L_g\bigr)$, and $S_T$ is the
deployed-policy count of Theorem~\ref{thm:upper}.
\end{theorem}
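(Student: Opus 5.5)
The plan is to reduce the fading-memory case to the $m$-memory analysis behind Theorem~\ref{thm:upper}. Two things change. First, the warm-up is no longer exactly stationary. Second, the confidence sets must absorb the resulting small mismatch between the data-generating law and the stationary model $P^{\pi_e}_{\xi^*}$.

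\textbf{Step 1: residual transient after warm-up.} Fix epoch $e$ with policy $\pi_e$. After $k$ episodes of $\pi_e$, the recursion gives $\|z_t-\phi(\pi_e)\|_2\le 2\gamma^k$. Hence the per-step response deviates from $R_\infty(\pi_e)=g(\phi(\pi_e))$ by at most $2L_g\gamma^k$ in $\ell_1$. With $k\ge m_{\rm warm}-1$, we have $\gamma^{k}\le e^{-(1-\gamma)k}$, which is of order $1/(2T)$ up to a constant factor $\gamma^{-1}$. I would absorb that factor by running one extra warm-up episode, or by inflating constants.

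A coupling/simulation argument over the $H$ steps of an episode then bounds two quantities. The value gap $|V^{\pi_e}(g^t)-V^{\pi_e}(R_\infty(\pi_e))|$ is at most $H\cdot H\cdot 2L_g\gamma^k$, so it is $O(H^2L_g/T)$ per episode and $O(H^2L_g)$ in total. The total-variation distance between the realized trajectory law and $P^{\pi_e}_{\xi^*}$ is at most $2HL_g\gamma^k$. The data-collection episodes are therefore only approximately i.i.d.\ from the model.

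\textbf{Step 2: robust confidence sets.} This is the main obstacle. Lemma~\ref{lem:conc} and the termination-test martingale bound assume the data are drawn exactly from $P^{\pi}_{\xi^*}$, which here relies on Assumption~\ref{ass:reg}(f). I would redo the likelihood-ratio martingale argument using the true conditional law at each episode.

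The log-likelihood ratio of $\xi^*$ against the actual law per episode is at most $B_0\cdot\mathrm{TV}$-type misspecification. More precisely, I would bound it by the log-ratio of the stationary and actual trajectory probabilities, which is $O(HL_g\gamma^k)$ per episode under positivity (c). Summed over at most $T$ episodes, this contributes $O(B_0HL_g)$ to the radius. The warm-up cost per epoch is $m_{\rm warm}$, and the switch-pricing floor $mB_0$ in $\beta_e$ becomes $B_0\log(2T)/(1-\gamma)$. Together these give exactly $\beta^\gamma_T$.

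I would then verify that Lemmas~\ref{lem:epoch-cap}, \ref{lem:eluder-total}, and \ref{lem:switch-count} go through verbatim with $\beta_T$ replaced by $\beta^\gamma_T$. They use the radius only as a black box, together with $\xi^*\in\mathcal C_{e-1}$ and the test's false-alarm bound. Both hold with the inflated radius on an event of probability $1-2\delta$.

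\textbf{Step 3: assemble.} On that event, I split the regret into three parts. First, the optimism/simulation part, controlled through Eluder summability and Cauchy--Schwarz exactly as in Theorem~\ref{thm:upper}, gives $\tilde O(H\sqrt{\bar\beta^\gamma_T d_ET})$. Second, the transient bias from Step 1 is $O(H^2L_g)$, which is lower order since $\bar\beta^\gamma_T\ge H\cdot B_0HL_g$. Third, the warm-up cost is at most $H\,m_{\rm warm}$ per deployed policy, giving $HS_T\lceil\log(2T)/(1-\gamma)\rceil$. Summing the three parts yields the claimed bound.
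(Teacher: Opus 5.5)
Your proposal follows essentially the same route as the paper's proof. The paper likewise uses the stabilization bound $\|z_k-\phi(\pi_e)\|_2\le 2\gamma^k$ to get an $O(H^2L_g)$ value bias. It then has a robustness step that re-runs the likelihood and termination-test martingales under the actual episode laws, enlarging $\beta_e$ (and hence $\tau_e(t)$) by the cumulative $O(B_0HL_g)$ misspecification alongside the $B_0m_{\rm warm}$ floor, and finally the same three-part assembly.

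One caveat concerns Step 2. The $B_0HL_g$ term comes from your first formulation, which is also the one the paper uses: increments $\log(P_\xi/P_{\xi^*})(\tau)$ are bounded by $B_0$, so their conditional means shift by at most $2B_0\,\mathrm{TV}=O(B_0HL_g\gamma^k)$ per episode. The pointwise log-ratio bound you call more precise would not yield $B_0$. Since $\ell_1$-closeness of responses does not give multiplicative closeness of trajectory probabilities, it would instead carry a positivity constant, such as $1/p_{\min}$ or a lower bound on response probabilities.
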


%\begin{theorem}[Fading Memory]\label{thm:fading}
%Against a $\gamma$-fading memory adversary with $L_g$-Lipschitz response
%function, run Algorithm~\ref{alg:omle} with memory parameter
%$m := m_{\text{warm}} = \lceil \log(2T) / (1-\gamma) \rceil$, entering the
%$mB_0$ floor of $\beta_e$ and making the warm-up step run
%$m_{\text{warm}}-1$ episodes, which suffices because every data-collection
%episode is then at least the $m_{\text{warm}}$-th consecutive episode of the
%deployed policy. Then, with probability at
%least $1\!-\!2\delta$,
%$\textstyle PR(T) = \tilde{O}(H\textstyle\sqrt{\bar\beta^\gamma_T\, d_E T} + H S_T\log T/(1-\gamma))$,
%where $\bar\beta^\gamma_T\! :=\! H\beta^\gamma_T$ with
%$\beta^\gamma_T \!:=\! O(\log\mathcal N(\epsilon_T;\Xi)\!+\!\log T\!+\!$ $\log(1/\delta)\!+\!$ 
%$B_0\log(2T)/(1-\gamma)\!+\!B_0HL_g)$, and $S_T$ is the deployed-policy count
%of Theorem~\ref{thm:upper}, whose warm-up cost is unconditionally dominated by
%the leading term plus $O(H\log^2T/(1-\gamma))$ (Appendix~\ref{app:fading}); in
%particular the second term is $\tilde O(H\log^2T/(1-\gamma))$ whenever no early
%terminations occur.
%\end{theorem}

\vspace*{-3pt}

The algorithmic modification is simple. At the beginning of each epoch $e$, the algorithm
selects an optimistic policy $\pi_e$ and plays it for $m_{\mathrm{warm}}-1$
warm-up episodes, whose data are discarded, followed by at most $T_e$
data-collection episodes. The analysis shows that after $m_{\text{warm}}$ episodes of playing a fixed policy $\pi$, the adversary's state is within $O(\gamma^{m_{\text{warm}}}) \leq O(1/T)$ of the stationary state $z_\infty = \phi(\pi)$. 
%By the first data-collection episode, the adversary
%has observed $m_{\mathrm{warm}}$ consecutive plays of $\pi_e$, so its internal
%state is within
%$O(\gamma^{m_{\mathrm{warm}}}) \leq O(1/T)$
%of the stationary state $z_\infty=\phi(\pi_e)$.

%The algorithmic modification is straightforward. At the beginning of each epoch $e$, after selecting policy $\pi_e$ optimistically, execute $\pi_e$ for $m_{\text{warm}}-1$ warm-up episodes (discarding data) before beginning the at most $T_e$ data-collection episodes. This extended warm-up allows the adversary's internal state to stabilize. The analysis shows that after $m_{\text{warm}}$ episodes of playing a fixed policy $\pi$, the adversary's state is within $O(\gamma^{m_{\text{warm}}}) \leq O(1/T)$ of the stationary state $z_\infty = \phi(\pi)$. 
% Specifically, under the normalized recursion with $\|\phi(\pi)\|_2 \leq 1$, starting from any $z_0$ with $\|z_0\|_2 \leq 1$, the distance to stationarity decays as $\|z_k - z_\infty\|_2 \leq 2\gamma^k$. Setting $k = m_{\text{warm}} = \lceil\log(2T)/(1-\gamma)\rceil$ ensures $\gamma^k \leq 1/(2T)$, making the approximation error negligible.

% \vspace*{-1pt}
Additionally, when computing posterior-predictive quantities for the Lipschitz condition, we truncate the memory to the most recent $O(\log T / (1-\gamma))$ policies, incurring approximation error $O(\gamma^{m_{\text{warm}}}) = O(1/T)$ that does not affect the asymptotic regret rate. As stated, the fading variant fixes $T$ through $m_{\text{warm}}$; an epoch-indexed warm-up restores the horizon-free property at the same rate (Appendix~\ref{app:fading}).

%\vspace*{-1pt}
This result shows that fading memory adversaries admit efficient learning with regret depending on the effective memory $1/(1-\gamma)$ rather than requiring a hard cutoff. The extreme cases are instructive. Setting $\gamma = 0$ recovers a memoryless adversary, for which the warm-up may be set to zero, while $\gamma \to 1$ approaches a long-memory regime in which the effective memory \(1/(1-\gamma)\) diverges. 

\vspace*{-1pt}
The epoch-based structure with one policy per epoch handles fading memory naturally. Within each epoch, we use the extended warm-up phase once, then execute the policy for the epoch's data-collection episodes. Since the adversary sees the same policy repeatedly within the epoch, its state remains stable after warm-up. With $S_T$ epochs started, the warm-up cost is $O(m_{\text{warm}} \cdot S_T \cdot H) = O(\frac{H\,S_T\log T}{1-\gamma})$, which is $O(H\log^2T/(1-\gamma))$ when only scheduled epoch endings occur. The complete proof, including the stability analysis of the adversary's internal state and the truncation argument, is provided in Appendix~\ref{app:fading}.

% ============================================================================
% ============================================================================

\vspace*{-3pt}
\subsection{Tabular Instantiation}
\vspace*{-2pt}

For finite POMGs, we can derive explicit bounds in terms of the state space size $|S|$, action space sizes $|A|$ and $|B|$, and observation space sizes $|O_A|$ and $|O_B|$.

\begin{theorem}[Tabular Bounds]\label{thm:tabular}
Consider a tabular POMG as above, with weak-revealing parameter $\kappa$,
memory $m$, and a linear adversary response of dimension $d_{\rm adv}$. Under
the assumptions of Theorem~\ref{thm:upper}, Algorithm~\ref{alg:omle} achieves,
with probability at least $1-2\delta$,
\[
\textstyle PR(T) \;=\; \tilde{O}\big(Hm\log T \;+\;
H\sqrt{\bar\beta_T^{\rm tab}\cdot d_E^{\rm tab}\cdot T}\big),
\]
where $d_E^{\rm tab}$ %=\tilde{O}\bigl(H(|S|^2|A||B|\,|O_A|^\kappa|O_B|^\kappa+ d_{\rm adv}|B|)\bigr)$ 
is the tabular Eluder bound of
Proposition~\ref{prop:eluder} and
$\bar\beta_T^{\rm tab}=\tilde O\bigl(H^2(|S|^2|A||B|
+ |S|(|O_A|{+}|O_B|) + d_{\rm adv}|B|)+HmB_0\bigr)$.
\end{theorem}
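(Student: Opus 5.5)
The proof is a specialization of Theorem~\ref{thm:upper}: we substitute tabular quantities for its two complexity parameters, $d_E$ and $\beta_T$. The plan has three steps.

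\textbf{Step 1: verify the hypotheses.} We check Assumptions~\ref{ass:pl}, \ref{ass:reveal}, and~\ref{ass:reg} for the tabular class. Assumption~\ref{ass:reveal} is given with parameter $\kappa$. For Assumption~\ref{ass:pl}, the linear adversary response with $d_{\rm adv}$ features is taken to satisfy it by hypothesis. Assumption~\ref{ass:reg} holds under positivity (smoothing), as remarked after it.
\begin{itemize}
\item Well-specification is assumed.
\item Positivity gives a finite $B_0$, and this in turn gives the Lipschitz continuity of the log-likelihood.
\item The KL-to-operator conversion holds with $c_{\rm KL}=\Omega(\alpha_\kappa^2/\kappa)$, as already noted in Assumption~\ref{ass:reg}(e).
\end{itemize}
Theorem~\ref{thm:upper} therefore applies and gives, with probability at least $1-2\delta$,
\[
PR(T)=\tilde O\bigl(H\sqrt{\bar\beta_T d_E T}+mH\log T\bigr).
\]

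\textbf{Step 2: Eluder term.} We set $d_E^{\rm tab}$ equal to the tabular bound of Proposition~\ref{prop:eluder},
\[
d_E^{\rm tab}=\tilde O\bigl(H(|S|^2|A||B||O_A|^\kappa|O_B|^\kappa+d_{\rm adv}|B|)\bigr).
\]
Doing so requires the reduced-space hypothesis of Proposition~\ref{prop:eluder}, namely that policies and responses depend on histories only through the $\kappa$-window. We adopt this as part of the tabular setting.

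\textbf{Step 3: covering number (the main work).} We must bound $\log\mathcal N(\epsilon_T;\Xi)$, with $\epsilon_T=1/(2T+4)$. The parameters are counted as follows:
\begin{itemize}
\item transitions $T_h$: $|S|^2|A||B|$ per step;
\item emissions $E^A_h,E^B_h$: $|S|(|O_A|+|O_B|)$ per step;
\item adversary response: $d_{\rm adv}|B|$ per step.
\end{itemize}
Over $H$ steps this gives $D=H(|S|^2|A||B|+|S|(|O_A|+|O_B|)+d_{\rm adv}|B|)$ parameters. A grid on the probability simplices is then mapped into operator norm. The operators are Lipschitz in the $\ell_\infty$ parameter norm (this is the regularity used in Proposition~\ref{prop:eluder}), with a constant polynomial in the dimensions and in the smoothing scale. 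Hence
\[
\log\mathcal N(\epsilon_T;\Xi)\le D\log(\mathrm{poly}\cdot T)=\tilde O(D).
\]
One point needs care: the operator Lipschitz constant must contribute only logarithmically, even though the operators act on the augmented space. Because entries of the operators are products of at most a few kernel entries, perturbing the parameters by $\epsilon$ changes each entry by $O(\epsilon)$. Summing over the $d_W$ coordinates then costs at most a factor $\log d_W$.

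\textbf{Assembling $\bar\beta^{\rm tab}_T$.} Plugging in gives
\[
\beta_T=\tilde O\bigl(D+\log(1/\delta)+mB_0\bigr),\qquad \bar\beta_T=H\beta_T=\tilde O\bigl(H^2(\cdots)+HmB_0\bigr),
\]
which is the stated $\bar\beta^{\rm tab}_T$. The second term of the bound uses the switch-count bound of Theorem~\ref{thm:upper}, giving the stated $Hm\log T$ term.
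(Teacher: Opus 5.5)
Your proposal is correct and follows the paper's own proof. You take $d_E^{\rm tab}$ from Proposition~\ref{prop:eluder}, count $H(|S|^2|A||B|+|S|(|O_A|+|O_B|)+d_{\rm adv}|B|)$ parameters to get $\log\mathcal N(\epsilon_T;\Xi)=\tilde O(\cdot)$ at $\epsilon_T=1/(2T+4)$, and substitute $\bar\beta_T=H\beta_T$ (absorbing the $H\log(1/\delta)$ term) into Theorem~\ref{thm:upper}; your added checks of the hypotheses and of why the parameter-to-operator-norm Lipschitz constant costs only a logarithm are details the paper leaves implicit, not a different route.
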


%\begin{theorem}[Tabular Bounds]\label{thm:tabular}
%Under the assumptions of Theorem~\ref{thm:upper}, with weak-revealing parameter
%$\kappa$, for a tabular POMG with $|S|$ states, $|A|$ learner actions, $|B|$ adversary actions,
%$|O_A|$ learner observations, $|O_B|$ adversary observations, horizon $H$, memory $m$,
%and linear adversary response with dimension $d_{\rm adv}$, define $d_E^{\rm tab} \!=\! \tilde{O}(H(|S|^2 |A||B| |O_A|^\kappa |O_B|^\kappa + d_{\rm adv} |B|))$ (Proposition~\ref{prop:eluder}). Then Algorithm~\ref{alg:omle} achieves:
%\[
%\textstyle PR(T) \;=\; \tilde{O} \big(Hm\log T \;+\; H\sqrt{\bar\beta_T^{\rm tab}\cdot d_E^{\rm tab}\cdot T}\big)
%\]
%w.p. at least $1-2\delta$,
%where $\bar\beta_T^{\rm tab} = \tilde O(H^2\bigl(|S|^2|A||B| +$ $|S|(|O_A|{+}|O_B|) +d_{\rm adv}|B|\bigr){+}HmB_0),$
%%\[
%%\begin{aligned}
%%\bar\beta_T^{\rm tab}
%%= \tilde O\!\bigl(
%%H^2\bigl(|S|^2|A||B|&{+}|S|(|O_A|{+}|O_B|) \\[-2pt]
%%&
%%{+}d_{\rm adv}|B|\bigr){+}HmB_0
%%\bigr),$
%%\end{aligned}
%%\]
%the middle term accounting for the unknown emission kernels and the $\tilde O$
%absorbing an additive $H\log(1/\delta)$. Under the standard
%tabular scaling $|O_A|+|O_B| = O(|S||A||B|)$, the emission term is absorbed and
%$\bar\beta_T^{\rm tab}
%= \tilde O\!\left(H^2\bigl(|S|^2|A||B|+d_{\rm adv}|B|\bigr)+HmB_0\right)$.
%\end{theorem}

In $\bar\beta_T^{\rm tab},$ the middle term accounts for the unknown emission kernels, 
and the $\tilde O$ absorbs an additive $H\log(1/\delta)$. Under the standard
tabular scaling $|O_A|+|O_B| = O(|S||A||B|)$, the emission term is absorbed leaving 
$\bar\beta_T^{\rm tab} = \tilde O(H^2\bigl(|S|^2|A||B|+d_{\rm adv}|B|\bigr)+HmB_0)$.
The outer factor \(H\) comes from the
simulation lemma (Lemma~\ref{lem:simulation}), while the tabular Eluder dimension
itself scales linearly with \(H\) because the operator class contains separate
time-step-dependent operators for \(h\in[H]\). Substituting these quantities into
Theorem~\ref{thm:upper} yields the bound. The detailed proof is in
Appendix~\ref{app:tabular}.

% ============================================================================

\vspace*{-5pt}
\subsection{Computational Considerations}
\vspace*{-3pt}

The optimistic planning step in Algorithm~\ref{alg:omle} requires solving $\max_{(\pi,\xi) \in \Pi \times \mathcal{C}_{e-1}} V^\pi(\xi)$, which is computationally challenging in general. POMDP planning is PSPACE-complete even with known parameters, and optimization over the confidence set adds another layer of difficulty.

The statistical guarantees above should therefore be interpreted independently of
computational tractability in the most general model class. For small tabular
instances or structured subclasses, the planning and confidence-set optimization
problems may be approachable by dynamic programming, discretization, or problem-specific
optimization methods. The epoch structure is still useful computationally because
the optimistic planning problem is solved only once per epoch, that is,
\(S_T\) times in total (Lemma~\ref{lem:switch-count}), rather than after every
episode, although the termination test does require a per-episode MLE
evaluation.

A possible practical alternative is to replace optimistic planning with posterior
sampling. At the start of each epoch \(e\), sample parameters \(\tilde{\xi}\) from a
posterior distribution given the cumulative data \(\mathcal{D}_{e-1}\), and then
compute the optimal policy for \(\tilde{\xi}\). This may be easier to implement in
some model classes, although proving an analogous regret guarantee for posterior
sampling would require a separate analysis. Posterior sampling has been analyzed
for POMGs in the self-play and oblivious-adversary regimes
\citep{qiu2023posterior}; extending such guarantees to policy regret against
adaptive adversaries is open.

For additional structured models, such as linear-Gaussian or control-oriented
special cases, parts of estimation or planning may admit more efficient
implementations. Extending computational tractability guarantees to broad POMG
classes remains an important direction for future work.

\vspace*{-5pt}
% ============================================================================
\section{Conclusion}
\label{sec:conclusion}
\vspace*{-3pt}
% ============================================================================

We have developed a comprehensive theoretical framework for policy regret minimization in partially observable Markov games against adaptive adversaries. Our results show that \(\sqrt{d_E T}\) policy regret is both achievable and unavoidable, up to problem-dependent and logarithmic factors; along the way, we showed that data-dependent policy switching is necessary for optimistic algorithms that update their policies only rarely. Several directions remain open, including efficient algorithms beyond tabular and linear POMGs, extensions to multi-player games, non-stationary adversaries, and continuous state-action spaces, and sharper domain-specific models of how partial observability interacts with strategic behavior. More broadly, as AI systems increasingly operate in environments with multiple adaptive agents, these foundations are essential for designing learning algorithms with reliable performance guarantees.

\section*{Acknowledgements}
 This work was supported in part by  NSF CAREER award IIS-1943251. 

\section*{Impact Statement}

This paper presents work whose goal is to advance the foundations of
machine learning in partially observable multi-agent environments. Potential
societal impacts are those broadly associated with reinforcement learning and
multi-agent decision-making, including applications in autonomous systems,
markets, and security. Our results are primarily mathematical and do not introduce
new deployed systems or datasets; we do not identify specific ethical concerns
beyond those generally associated with this area.

\balance
\bibliography{refs_arxiv}
\bibliographystyle{icml2026}

%%%%%%%%%%%%%%%%%%%%%%%%%%%%%%%%%%%%%%%%%%%%%%%%%%%%%%%%%%%%%%%%%%%%%%%%%%%%%%%
%%%%%%%%%%%%%%%%%%%%%%%%%%%%%%%%%%%%%%%%%%%%%%%%%%%%%%%%%%%%%%%%%%%%%%%%%%%%%%%
% APPENDIX
%%%%%%%%%%%%%%%%%%%%%%%%%%%%%%%%%%%%%%%%%%%%%%%%%%%%%%%%%%%%%%%%%%%%%%%%%%%%%%%
%%%%%%%%%%%%%%%%%%%%%%%%%%%%%%%%%%%%%%%%%%%%%%%%%%%%%%%%%%%%%%%%%%%%%%%%%%%%%%%
\newpage
\appendix
\onecolumn

\noindent\textbf{Organization of the appendices.}
Appendix~\ref{sec:related} surveys related work and situates our guarantees
(Tables~\ref{tab:compare} and~\ref{tab:results}).
Appendix~\ref{app:decomposition} proves the causal decomposition
(Theorem~\ref{thm:decomp}). Appendix~\ref{app:eluder-lemma} % develops the
% sign-witness lemma and from it 
proves the Eluder dimension
bounds of Proposition~\ref{prop:eluder}. Appendix~\ref{app:upper} proves
Theorem~\ref{thm:upper}, opening with an extended proof sketch
(Appendix~\ref{app:sketch}). % and including the necessity of data-dependent termination (Appendix~\ref{app:necessity}). 
Appendices~\ref{app:lower}, \ref{app:adaptive}, \ref{app:fading}, and~\ref{app:tabular} prove the lower
bound, the adaptivity property, the fading-memory extension, and the tabular
instantiation. Appendix~\ref{app:experiments} reports the empirical
validation, and Appendix~\ref{app:prior_work} gives a detailed technical
comparison with concurrent work.

\section{Related Work}
\label{sec:related}

The concept of policy regret was introduced by \citet{arora2012bandit} in the context of multi-armed bandits with adaptive adversaries. They showed that while no algorithm can achieve sublinear regret against adversaries with unbounded memory, bounded-memory adversaries admit efficient learning through mini-batching. Competing against reactive opponents with bounded memory has earlier roots in the information-theoretic literature on sequential prediction with loss functions that have memory \citep{merhav2002sequential}. \citet{cesabianchi2013online} sharpened the minimax picture for switching costs and other adaptive adversaries, and \citet{dekel2014bandits} established the matching $\widetilde\Theta(T^{2/3})$ characterization for bandits with switching costs, while
\citet{arora2019policy} extended the switching-cost analysis to bandits with feedback graphs, quantifying how side observations change the attainable rates. 
  \citet{rakhlin2013online} studied online learning with predictable sequences, showing how regret improves when the opponent's moves are partially predictable. We carry this program to partially observable sequential decision-making with structured dynamics.

\citet{arora2018policy} studied the game-theoretic implications of policy regret
and introduced the notion of policy equilibrium. They showed that when both players
use no-policy-regret algorithms, the empirical distribution of play converges to a
policy equilibrium. This result complements the earlier policy-regret literature by
showing that no-policy-regret learning has an associated equilibrium interpretation
in repeated games. Our work is different in focus. We study policy regret for a
single learner in a partially observable Markov game against an adaptive adversary,
rather than convergence of two no-policy-regret players to an equilibrium.
A complementary line of work studies the reverse question of how an opponent can
strategize against a no-regret learner \citep{deng2019strategizing}; policy
regret is the benchmark a learner adopts to be robust against precisely such
strategic responses.

More recently, \citet{nguyentang2024learning} initiated the study of policy regret
in Markov games with adaptive adversaries, identifying fundamental barriers under
unbounded memory or non-stationarity and giving positive results under structural
restrictions on the adversary. \citet{nguyentang2025policy} studied policy regret in Markov games with function approximation, obtaining \(\sqrt{T}\) policy regret under Eluder-type complexity conditions. Our work brings these ideas to the more
complex setting of partially observable Markov games. The partial-observation
setting introduces new difficulties as the learner must reason through observable
operator representations, and the adversary response is entangled with hidden world
dynamics. Our causal decomposition separates hidden-world estimation from adversary-response
estimation. The epoch structure, with geometric caps and statistical termination, then
limits scheduled policy changes to \(O(\log T)\) and prices each data-driven
termination in statistical evidence, so that comparing adversary responses
across deployed policies imposes no cost on the final confidence radius. 

\paragraph{Learning in POMDPs}
Sample-efficient learning in POMDPs has been studied extensively. Early work established computational hardness results \citep{papadimitriou1987complexity} and developed approximate planning methods \citep{cassandra1994acting, kaelbling1998planning}. More recent work has focused on sample complexity under structural assumptions. \citet{krishnamurthy2016pac} established tractability for POMDPs with rich observations, while \citet{azizzadenesheli2016reinforcement} and \citet{guo2016pac} developed spectral methods for parameter estimation. The breakthrough work of \citet{jin2020undercomplete} introduced sample-efficient algorithms for undercomplete POMDPs where the number of observations exceeds the number of latent states. 

\citet{liu2022partially} significantly generalized these results by identifying the class of weakly revealing POMDPs, showing that observability over multi-step windows suffices for efficient learning. \citet{liu2023optimistic} developed an optimistic MLE framework that underlies our approach, establishing \(\tilde O(\sqrt T)\) regret bounds under weak revealing conditions. Recent work by \citet{golowich2023planning} addresses computational aspects of planning and learning in observable POMDPs. Beyond POMDPs, predictive state representations admit sample-efficient learning \citep{zhan2023pac, uehara2022provably}, and unified complexity frameworks such as the generalized Eluder coefficient \citep{zhong2022gec} and the decision-estimation coefficient \citep{foster2021statistical} cover broad classes of partially observable decision problems. These frameworks, however, treat the environment as fixed. The data-generating process does not change in response to the learner's strategy, and that response is precisely the coupling that policy regret against an adaptive adversary must address.

\paragraph{Learning in Markov Games}
Multi-agent reinforcement learning in Markov games has been studied from both computational and statistical perspectives \citep{littman1994markov, hu2003nash, greenwald2003correlated}. For fully observable games, significant progress has been made on finding Nash equilibria and learning against adaptive opponents. \citet{bai2020provable} and \citet{xie2020learning} developed sample-efficient algorithms for finding Nash equilibria in two-player zero-sum games. \citet{jin2022power} introduced the exploiter framework for provably learning non-exploitable policies via self-play, while \citet{tian2022online} studied online learning in unknown Markov games without observing the opponent's actions, achieving sublinear regret against the minimax value. \citet{liu2022learning} studied Markov games against adversarial opponents, delineating which feedback models permit sublinear external regret, and \citet{foster2023complexity} characterized the statistical complexity of multi-agent decision making. These works measure external regret against oblivious or worst-case opponents, rather than policy regret against adaptive ones.

The partially observable case is considerably more challenging. \citet{nayyar2013decentralized} developed the common-information approach to decentralized stochastic control with partial history sharing, a structural foundation for games with asymmetric information. \citet{liu2022sample} studied POMGs under weak revealing assumptions, showing polynomial sample complexity for self-play and, against adversarial opponents, for learning the maximin policy. \citet{qiu2023posterior} obtained sublinear-regret guarantees for POMGs with general function approximation via posterior sampling, in self-play and adversarial regimes with regret measured against the game value, and \citet{liu2023partially} studied partially observable multi-agent RL under information-sharing structures that restore (quasi-)efficiency. However, the adaptive adversary setting we consider here requires fundamentally different techniques, as the adversary's response depends on the learner's policy in ways that create complex feedback loops.

\paragraph{Eluder Dimension and Function Approximation}
The Eluder dimension, introduced by \citet{russo2013eluder}, provides a complexity measure for function classes that governs the sample complexity of exploration in bandits and MDPs. \citet{osband2014model} applied Eluder-based analysis to model-based reinforcement learning, while \citet{wang2020reinforcement} extended these ideas to general value function approximation. \citet{ayoub2020model} developed model-based algorithms using Eluder dimension for structured MDPs, and \citet{zanette2020learning} studied exploration under low inherent Bellman error, a structural condition complementary to Eluder-type complexity measures. Our work builds on these ideas, using Eluder dimension to characterize the complexity of the joint world-adversary model class in POMGs.

\paragraph{Observable Operators and Spectral Methods}
The observable operator model (OOM) framework was introduced by \citet{jaeger2000observable} as an alternative to hidden Markov models, representing dynamics through matrix operations on a predictive state space. \citet{boots2011online} developed an online spectral algorithm for learning predictive-state representations of partially observable nonlinear dynamical systems. \citet{hsu2012spectral} gave a spectral, method-of-moments algorithm for learning hidden Markov models with finite-sample guarantees. \citet{song2010hilbert} extended these ideas to continuous state spaces using kernel methods. The OOM framework has proven particularly valuable for POMDPs, enabling likelihood computation and sample-efficient learning analyses under suitable observability conditions.

\paragraph{Low-Switching Reinforcement Learning}
Our algorithm deploys a single policy per epoch, ending epochs at geometric
caps or by a statistical termination test. Deploying few policies is also the
goal of low-switching-cost RL \citep{bai2019provably, qiao2022sample}, where
switches are limited for operational reasons such as deployment or
recomputation cost, and data-dependent triggers (count or determinant doubling)
achieve logarithmic switching in single-agent settings. Two aspects
differentiate our setting: against an adaptive adversary every policy switch
additionally costs warm-up episodes for the opponent's response to
re-stabilize, and, as we show in Proposition~\ref{prop:fixed-fails},
data-dependent termination is not merely convenient but \emph{necessary}. With
a fixed switching schedule, optimistic planning combined with infrequent
updates provably incurs linear policy regret (Appendix~\ref{app:necessity}).

\paragraph{Concurrent and Related Work on POMGs}
The most closely related work is a concurrent manuscript by \citet{sang2025pomg}, which also studies policy regret minimization in POMGs via optimistic MLE under a posterior-Lipschitz response condition. Our paper differs in several technical respects: (i) our decoupled formulation of the posterior-Lipschitz condition, which avoids circular dependence of the smoothness condition on the adversary being constrained; (ii) our causal decomposition tracks both players' private observation-action histories, so it applies to adversaries that act on private observations, whereas their factorization models the opponent's response as a function of the learner-side history; (iii) our one-policy-per-epoch structure with cumulative confidence sets and statistical termination, under which no posterior-Lipschitz transport term enters the final confidence radius (Remark~\ref{rem:transport-role}), whereas a flat batching schedule lets the transport term grow polynomially (Appendix~\ref{app:prior_work}); (iv) our matching lower bound, proving the optimality of the \(\tilde O(\sqrt T)\) dependence on \(T\) and the aggregate Eluder dimension; and (v) our extensions to fading memory adversaries and horizon-adaptive guarantees. In the other direction, \citet{sang2025pomg} give a counterexample showing that without a posterior-Lipschitz-type condition, sublinear policy regret is impossible even under weakly revealing observations and a memory-bounded, stationary opponent, a useful complement establishing that smoothness conditions of this kind are necessary rather than an artifact of the analysis.

Table~\ref{tab:compare} situates our guarantees among the most closely related
settings. Table~\ref{tab:results} summarizes the bounds proved in this
paper.

\begin{table*}[h]
\caption{Comparison with the most closely related work. ``$m$-memory
adaptive'' means the opponent reacts to the learner's recent policies, and
policy regret charges the learner for that reaction; PAC and regret guarantees
measured against the maximin or Nash value treat the opponent as worst-case
within episodes instead. Rates suppress horizon and problem-dependent factors;
$d_E$ is the channel-sum aggregate Eluder dimension of
Section~\ref{sec:formulation}.}
\label{tab:compare}
\vskip 0.1in
\centering
\footnotesize
\setlength{\tabcolsep}{3pt}
\begin{tabular}{lllll}
\toprule
Reference & Observability & Opponent & Benchmark & Rate \\
\midrule
\citet{arora2012bandit} & bandit feedback & $m$-memory adaptive & policy regret & $\tilde O(T^{2/3})$ \\
\citet{dekel2014bandits} & bandit feedback & switching cost & policy regret & $\widetilde\Theta(T^{2/3})$ \\
\citet{liu2022sample} & POMG, weak revealing & self-play; adversarial & PAC vs.\ maximin & poly$(1/\epsilon)$ \\
\citet{qiu2023posterior} & POMG, general classes & self-play; adversarial & regret vs.\ Nash & $\tilde O(\sqrt T)$ \\
This paper & POMG, weak revealing & $m$-memory adaptive & policy regret & $\tilde O(\sqrt{d_E T})$, matching lower bound \\
\bottomrule
\end{tabular}

\end{table*}

\begin{table*}[h]
\caption{Summary of the guarantees proved in this paper. Upper bounds hold
with probability at least $1-2\delta$ and lower bounds are on expected policy
regret; $\bar\beta_T = H\beta_T$ is the effective confidence radius and $S_T$
the deployed-policy count.}
\label{tab:results}
\vskip 0.1in
\centering
\small
\begin{tabular}{lll}
\toprule
Result & Setting & Bound \\
\midrule
Theorem~\ref{thm:upper} & $m$-memory adversary & $PR(T) = \tilde O\bigl(H\sqrt{\bar\beta_T\, d_E T} + mH\log T\bigr)$, unconditionally \\
Theorem~\ref{thm:lower} & fixed memoryless adversary & $\E[PR(T)] = \Omega\bigl(\sqrt{d_E T}\bigr)$ \\
Proposition~\ref{prop:fixed-fails} & fixed epoch schedule & $\E[PR(T)] \ge T/4$ on an instance satisfying all assumptions \\
Theorem~\ref{thm:adaptive} & unknown horizon & same rate as Theorem~\ref{thm:upper}, for all $T$ simultaneously \\
Theorem~\ref{thm:fading} & $\gamma$-fading memory & $\tilde O\bigl(H\sqrt{\bar\beta^{\gamma}_T d_E T} + H S_T \log T/(1-\gamma)\bigr)$ \\
Theorem~\ref{thm:tabular} & tabular POMG & explicit $\tilde O\bigl(H\sqrt{\bar\beta^{\rm tab}_T d^{\rm tab}_E T}\bigr) + \tilde O(Hm\log T)$ \\
\bottomrule
\end{tabular}
\end{table*}

% ============================================================================

\section{Proof of Theorem~\ref{thm:decomp} (Causal Decomposition)}
\label{app:decomposition}

We construct explicit operators \(W^\theta_h\) and \(G^{\Phi,\pi}_h\) and verify
the factorization \(J^{\xi,\pi}_h=G^{\Phi,\pi}_h\circ W^\theta_h\). Several issues
must be handled simultaneously: (i) the operators must have compatible types, which
requires an intermediate \(B\)-indexed space \(\mathcal V\); (ii) the aggregation
operator \(G^{\Phi,\pi}_h\) must be genuinely independent of \(\theta\); (iii) since the
adversary's private history cannot be inferred from the learner's history in a
general POMG, the augmented state must track both players' histories; (iv) the
construction must also respect the causal timing of the game, where observations
come first, actions are then chosen, and the world transitions afterward; and (v) 
the OOM operator represents the controlled conditional process of observations
given learner actions, rather than the joint probability of learner actions and
observations.

\begin{proof}[Proof of Theorem~\ref{thm:decomp}]
The proof proceeds in stages, constructing the two operators explicitly and
then verifying the factorization and its structural properties
(Figure~\ref{fig:decomp}).

\begin{figure}[h]
\centering
\begin{tikzpicture}[>=stealth, every node/.style={font=\small}]
\node (A) at (0,0) {$\R^{|\bar S^+|}$};
\node (V) at (3.1,0) {$\mathcal V=\R^{|\bar S^+|\times B}$};
\node (B) at (6.6,0) {$\R^{|\bar S^+|}$};
\draw[->] (A) -- node[below, font=\scriptsize]{$W^{\theta}_h(o^A,a)$} (V);
\draw[->] (V) -- node[below, font=\scriptsize]{$G^{\Phi,\pi}_h(o^A,a)$} (B);
\draw[->] (A) to[bend left=28] node[above, font=\scriptsize]{$J^{\xi,\pi}_h(o^A,a)$} (B);
\node[font=\scriptsize, text width=2.6cm, align=center] at (1.55,-0.95)
  {world kernels\\ $(T_h,E^A_h,E^B_h)$ only};
\node[font=\scriptsize, text width=2.6cm, align=center] at (4.85,-0.95)
  {response $g^\Phi_h$ and\\ policy $\pi$ only, $\theta$-free};
\end{tikzpicture}
\caption{The causal decomposition of Theorem~\ref{thm:decomp}. The observable
operator factors through the $B$-indexed intermediate space; the world channel
carries all dependence on $\theta$, and the adversary aggregation carries all
dependence on the response model and the deployed policy.}
\label{fig:decomp}
\end{figure}

\paragraph{Timing convention and histories.}
We begin by fixing the timing convention. At step \(h\), both players observe
\((o^A_h,o^B_h)\), then choose actions \((a_h,b_h)\), and then the latent state
transitions to \(s_{h+1}\). This order matters because the adversary's response at
step \(h\) is allowed to depend on its current observation \(o^B_h\), but not on
the action \(b_h\) before it is chosen.

It is useful to distinguish the adversary history available before the action from
the history stored after the action. The post-action adversary history after step
\(h\) is
\[
\tau^B_h
=
(o^B_1,b_1,\ldots,o^B_{h-1},b_{h-1},o^B_h,b_h),
\]
which includes the chosen action $b_h$. 
This is the object carried into the next augmented state. The adversary's decision
history at step \(h\) is
\[
\eta^B_h
=
(\tau^B_{h-1},o^B_h)
=
(o^B_1,b_1,\ldots,o^B_{h-1},b_{h-1},o^B_h),
\]
which includes the observation-action history through step $h-1$ plus the new observation $o^B_h$. This is the information available after observing \(o^B_h\) and before choosing
\(b_h\), so \(g^\Phi_h(\cdot\mid \eta^B_h,\pi)\) is well-defined. The two histories 
are related by \(\tau^B_h=(\eta^B_h,b_h)\), i.e., \(\tau^B_h\) is \(\eta^B_h\)
extended by \(b_h\).

The learner histories are handled analogously. At the beginning of step \(h\), the
learner carries \(\tau^A_{h-1}\), and after observing \(o^A_h\) and taking action
\(a_h\), the history becomes
\[
\tau^A_h=(\tau^A_{h-1},o^A_h,a_h).
\]

\paragraph{The augmented state space.}
We now define the augmented state at the beginning of step \(h\) as
\[
\bar s_h=(s_h,\tau^A_{h-1},\tau^B_{h-1}),
\]
where both histories are
post-action observation-action histories (consistent with $\cT^A$ and $\cT^B$). The augmented state space is
\[
\bar S^+
=
S\times \mathcal T^A_{\le H}\times \mathcal T^B_{\le H}.
\]
This full-history space is finite for every fixed finite-horizon problem
instance, since all component spaces are finite (Section~\ref{sec:formulation}).
It is intentionally larger than the reduced predictive representations used later
for complexity bounds. Here the goal is exact factorization, and exact
factorization requires enough information to recover both players' histories.
We adopt the following length convention throughout the construction: all
step-\(h\) operators act as zero on coordinates whose history components are
not length-matched to the step. For \(J^{\xi,\pi}_h\) and \(W^\theta_h\),
inputs carry histories of length \(h-1\) and outputs of length \(h\);
\(G^{\Phi,\pi}_h\) acts on the \(B\)-indexed intermediate coordinates that
\(W^\theta_h\) produces, whose learner-history component already has length
\(h\). With this convention every operator below is totally defined on its
space, and the OOM products only ever traverse length-matched supports.

\paragraph{World channel $W^\theta_h$ mapping into the $B$-indexed intermediate space.} We next construct the world channel. Let
\[
\mathcal V=\mathbb R^{|\bar S^+|\times B}.
\]
An element of \(\mathcal V\) is a family \((u_b)_{b\in B}\), where each
\(u_b\in\mathbb R^{|\bar S^+|}\). The role of this space is to keep the adversary
action \(b\) as an explicit index before it is mixed according to the adversary
response. For each \(b\in B\), define the linear map
\[
W^\theta_{h,b}(o^A,a)\in
\mathcal L(\mathbb R^{|\bar S^+|},\mathbb R^{|\bar S^+|})
\]
by 
\begin{equation}
\label{eqWdef}
\begin{aligned}
\bigl[W^\theta_{h,b}(o^A,a)v\bigr]_{(s',\tau^A_h,\tau^B_h)}
&\quad =
\sum_{\substack{(s,\tau^A,\tau^B)\\(\tau^A,o^A,a)=\tau^A_h}}
\sum_{\substack{o^B\in O_B\\(\tau^B,o^B,b)=\tau^B_h}}
E^A_h(o^A\mid s)\,
E^B_h(o^B\mid s)\,
T_h(s'\mid s,a,b)\,
v_{(s,\tau^A,\tau^B)} .
\end{aligned}
\end{equation}
This formula has a simple interpretation; the product
\(E^A_h(o^A\mid s)\,E^B_h(o^B\mid s)\) uses the conditional independence of the
two observations given \(s\) (Section~\ref{sec:formulation}). Starting from augmented state
\((s,\tau^A,\tau^B)\), the world emits \(o^A\) to the learner and \(o^B\) to the
adversary. The learner action \(a\) is fixed because the controlled OOM operator is
conditioned on learner actions. The adversary action \(b\) is not yet randomized.
Instead, \(W^\theta_{h,b}\) computes the hypothetical successor mass if the
adversary were to choose this particular \(b\). The learner history becomes
\((\tau^A,o^A,a)\), the adversary history becomes \((\tau^B,o^B,b)\), and the
latent state moves to \(s'\) according to \(T_h(\cdot\mid s,a,b)\).

The history-consistency conditions in \eqref{eqWdef} only select legal predecessors. Once
\((s',\tau^A_h,\tau^B_h)\), \(o^A\), \(a\), and \(b\) are fixed, the equations
\[
\tau^A_h=(\tau^A,o^A,a),
\qquad
\tau^B_h=(\tau^B,o^B,b)
\]
determine the predecessor histories and the adversary observation whenever they
exist. The summation therefore reduces to a single genuine sum over the latent
state \(s\in S\) (the histories and the adversary observation are pinned by the
output index), which is exactly the belief-propagation sum of the controlled
forward kernel on the augmented state space.

The full world channel is the \(B\)-indexed collection
\[
W^\theta_h(o^A,a)v
=
\bigl(W^\theta_{h,b}(o^A,a)v\bigr)_{b\in B}
\in \mathcal V .
\]
By construction, \(W^\theta_h\) depends only on the world kernels
\((T_h,E^A_h,E^B_h)\). It does not depend on \(\Phi\) or on the learner policy
\(\pi\). The interpretation is that for each hypothetical adversary action $b$,
$W^\theta_{h,b}$ computes the unnormalized mass flowing to each successor augmented
state $(s',\tau^A_h,\tau^B_h)$ under world dynamics.  Crucially, $\tau^B_h = (\tau^B,o^B,b)$
includes $b$ (the adversary's action at this step), so the next step's augmented
state correctly records the full observation-action history of the adversary.  

All strategic weighting over adversary actions by response probabilities 
is deliberately postponed
to the next operator, $G^{\Phi,\pi}_h$.

\paragraph{Adversary aggregation $G^{\Phi,\pi}_h$ mapping from the intermediate space.} We now define the adversary aggregation operator. Given
\((u_b)_{b\in B}\in\mathcal V\), define $G^{\Phi,\pi}_h(o^A,a):\mathcal{V}\to\R^{|\bar{S}^+|}$ by 

\begin{equation}
\label{eqGdef}
\bigl[
G^{\Phi,\pi}_h(o^A,a)(u_b)_{b\in B}
\bigr]_{(s',\tau^A_h,\tau^B_h)}
=
g^\Phi_h(b\mid \eta^B_h,\pi)\,
[u_b]_{(s',\tau^A_h,\tau^B_h)},
\end{equation}

where the output history is written as
\(\tau^B_h=(\eta^B_h,b)\), with
\(\eta^B_h=(\tau^B_{h-1},o^B_h)\) and \(b\in B\). Coordinates whose history components are not length-matched to step \(h\), or do not parse as a post-action history, are mapped to zero, per the length convention above.

This respects causal timing as the adversary forms $\eta^B_h = (\tau^B_{h-1},o^B_h)$
after observing $o^B_h$ and before choosing $b_h$, then $g^\Phi_h(b\mid\eta^B_h,\pi)$ 
gives the probability of choosing $b$, and then $b$ is appended to form $\tau^B_h$.

The key property is that \(G^{\Phi,\pi}_h\) depends only on \((g^\Phi_h,\pi)\) and
does not involve \(\theta\). The input \((u_b)_{b\in B}\in\mathcal V\) already
contains the hypothetical successor masses computed by the world channel \(W^\theta_h\), one for each
possible adversary action. The aggregation operator then reads the adversary response weight
\(g^\Phi_h(b\mid\eta^B_h,\pi)\), where both \(b\) and \(\eta^B_h\) are determined
by the output index \(\tau^B_h\), and multiplies the corresponding coordinate of
\(u_b\) by this weight. Thus \(G^{\Phi,\pi}_h\) uses the
adversary decision history encoded in the augmented-state index and the response
model \(g^\Phi_h\), but it never refers to the world parameter \(\theta\).

It is worth clarifying how the mixture over adversary actions is represented in
this coordinatewise definition. For a fixed output index
\((s',\tau^A_h,\tau^B_h)\), the post-action history \(\tau^B_h\) already contains
the final adversary action. Writing $\tau^B_h=(\eta^B_h,b),$ where \(\eta^B_h=(\tau^B_{h-1},o^B_h)\) is the decision history and \(b\) is the
action appended to it, only the component \(u_b\) contributes to this particular
output coordinate. This does not mean that the other adversary actions are ignored.
If we fix a decision history \(\eta^B_h\) and sum over all post-action histories
\((\eta^B_h,b)\) with \(b\in B\), the aggregation recovers the full mixture
\[
\sum_{b\in B}
g^\Phi_h(b\mid\eta^B_h,\pi)\,
[u_b]_{(\cdot,\cdot,(\eta^B_h,b))}.
\]
Thus \(G^{\Phi,\pi}_h\) represents the adversary's randomized action by assigning
each possible \(b\) to the successor coordinates whose adversary history ends with
that \(b\), and weighting those coordinates by
\(g^\Phi_h(b\mid\eta^B_h,\pi)\).

\paragraph{Verifying the factorization $J^{\xi,\pi}_h = G^{\Phi,\pi}_h\circ W^\theta_h$.}
The canonical controlled forward operator of the protocol is, coordinate-wise,
\begin{align*}
&\bigl[J^{\xi,\pi}_h(o^A,a)\bigr]_{(s',\hat\tau^A,\hat\tau^B),(s,\tau^A,\tau^B)}
\\
&\quad=
\mathbf 1[\hat\tau^A=(\tau^A,o^A,a)]
\!\!\sum_{\substack{o^B,\,b:\\ \hat\tau^B=(\tau^B,o^B,b)}}\!\!
E^A_h(o^A|s)\,E^B_h(o^B|s)\,
g^\Phi_h\bigl(b\,|\,(\tau^B,o^B),\pi\bigr)\,
T_h(s'|s,a,b),
\end{align*}
on length-matched coordinates and zero otherwise; this is the operator whose
products yield the controlled OOM identity below, and the theorem asserts that
it factorizes. 

We can now verify the factorization. Combining \eqref{eqWdef} and \eqref{eqGdef},
and writing the adversary post-action history as
\[
\tau^B_h=(\eta^B_h,b),
\qquad
\eta^B_h=(\tau^B_{h-1},o^B_h),
\]
we have

\begin{align}
&
\bigl[
(G^{\Phi,\pi}_h\circ W^\theta_h)(o^A,a)v
\bigr]_{(s',\tau^A_h,\tau^B_h)}
\nonumber\\
&\quad =
g^\Phi_h(b\mid\eta^B_h,\pi)
\bigl[
W^\theta_{h,b}(o^A,a)v
\bigr]_{(s',\tau^A_h,\tau^B_h)}
\nonumber\\
&\quad =
g^\Phi_h(b\mid\eta^B_h,\pi)
\sum_{\substack{(s,\tau^A,\tau^B)\\(\tau^A,o^A,a)=\tau^A_h}}
\sum_{\substack{o^B\in O_B\\(\tau^B,o^B,b)=\tau^B_h}}
E^A_h(o^A\mid s)\,
E^B_h(o^B\mid s)\,
T_h(s'\mid s,a,b)\,
v_{(s,\tau^A,\tau^B)} .
\label{eqcomposition}
\end{align}

This is exactly the controlled conditional forward dynamics of the POMG. For each current augmented state \((s,\tau^A,\tau^B)\) with mass
\(v_{(s,\tau^A,\tau^B)}\), the expression in \eqref{eqcomposition} follows exactly
one step of the POMG protocol. The learner observation \(o^A\) and adversary
observation \(o^B\) are emitted from the same latent state \(s\), with probabilities
\(E^A_h(o^A\mid s)\) and \(E^B_h(o^B\mid s)\). The adversary observation \(o^B\)
extends the adversary's pre-step history to the decision history $\eta^B_h=(\tau^B,o^B).$ The adversary then chooses action \(b\) with probability
\(g^\Phi_h(b\mid\eta^B_h,\pi)\), and its post-action history becomes $\tau^B_h=(\tau^B,o^B,b)=(\eta^B_h,b).$ At the same step, the learner takes the fixed controlled action \(a\), so its
post-action history becomes $\tau^A_h=(\tau^A,o^A,a).$ Finally, the world transitions to \(s'\) with probability \(T_h(s'\mid s,a,b)\).
Thus the right-hand side of \eqref{eqcomposition} is precisely the
\((s',\tau^A_h,\tau^B_h)\) coordinate of the controlled conditional forward
operator \(J^{\xi,\pi}_h(o^A,a)v\). Therefore
\[
J^{\xi,\pi}_h(o^A,a)
=
G^{\Phi,\pi}_h(o^A,a)\circ W^\theta_h(o^A,a).
\]

\paragraph{The OOM identity and learner action probabilities.}
It remains to connect the factorization above to the controlled OOM convention used
in the paper. The operator \(J^{\xi,\pi}_h(o^A,a)\) is indexed by the learner
observation \(o^A\) and the learner action \(a\), and it propagates unnormalized
mass over the augmented state space conditional on that controlled observation-action
pair. Therefore, if \(q_0 = \rho_0\otimes\delta_{\emptyset}\otimes\delta_{\emptyset}\)
is the initial augmented-state distribution (mass \(\rho_0(s)\) on
\((s,\emptyset,\emptyset)\); it depends on no model parameter), then
for any fixed learner observation-action sequence
\[
(o^A_1,a_1,\ldots,o^A_h,a_h),
\]
the vector
\[
\tilde q^{\xi}_{h}
=
J^{\xi,\pi}_h(o^A_h,a_h)\cdots
J^{\xi,\pi}_1(o^A_1,a_1)q_0
\]
is the unnormalized augmented-state distribution after realizing that controlled
sequence (its normalization is the predictive state \(q^\xi_h\) of
Section~\ref{sec:formulation}). Its total mass is obtained by summing its
coordinates over \(\bar s\in\bar S^+\), namely
\[
\sum_{\bar s\in\bar S^+}
\left[
J^{\xi,\pi}_h(o^A_h,a_h)\cdots
J^{\xi,\pi}_1(o^A_1,a_1)q_0
\right]_{\bar s}
=
\mathbb P^{\xi, \pi}
\!\left(
o^A_{1:h}\mid a_{1:h}
\right),
\]
which is the controlled conditional OOM identity used in
Section~\ref{sec:oom}. The learner action probabilities are not included inside
\(J^{\xi,\pi}_h(o^A,a)\). They enter only when converting this controlled
conditional law into the probability of a learner trajectory under policy \(\pi\):
\[
P^{\pi}_{\xi}(o^A_{1:h},a_{1:h})
=
\mathbb P^{\xi,\pi}(o^A_{1:h}\mid a_{1:h})
\prod_{t=1}^h
\pi_t(a_t\mid \tau^A_{t-1},o^A_t).
\]
Thus the OOM operators describe the environment's controlled observation process,
while the learner policy contributes the separate action-selection factors in the
likelihood.

\paragraph{Non-negativity and $\ell_1$-nonexpansiveness.} 
The nonnegativity properties follow directly from the construction. Each
\(W^\theta_{h,b}\) has nonnegative entries because its coefficients are products of
transition probabilities, emission probabilities, and history-consistency
indicators. Thus \(v\ge 0\) implies \(u_b = W^\theta_{h,b}v\ge 0\) for every \(b\).
The operator \(G^{\Phi,\pi}_h\) multiplies each successor coordinate whose adversary
post-action history is \(\tau^B_h=(\eta^B_h,b)\) by the nonnegative probability
\(g^\Phi_h(b\mid\eta^B_h,\pi)\). For each fixed decision history \(\eta^B_h\),
these weights sum to one over \(b\in B\). Consequently, the aggregation preserves
nonnegativity and correctly represents the adversary's randomized action. Hence
\(J^{\xi,\pi}_h\) is nonnegative as well.

The \(\ell_1\)-nonexpansiveness asserted in the theorem holds in signed form,
which is how the operator is used in
the channel-triangle inequalities \eqref{eq:channel-triangle} and the transfer arguments of
Appendix~\ref{app:upper}. Fix any \(u\in\mathcal V\) with coordinates
\(u_{\bar s,b}\). By the construction above, the output coordinate of
\(G^{\Phi,\pi}_h u\) at an augmented state \(\bar s\) whose adversary
post-action history ends with the action \(b(\bar s)\) reads the single input
coordinate \(u_{\bar s,\,b(\bar s)}\) and scales it by
\(g^\Phi_h(b(\bar s)\mid\eta^B_h(\bar s),\pi)\in[0,1]\), and distinct output
coordinates read distinct input coordinates. Hence
\[
\bigl\|G^{\Phi,\pi}_h u\bigr\|_1
=\sum_{\bar s}g^\Phi_h\bigl(b(\bar s)\mid\eta^B_h(\bar s),\pi\bigr)\,
\bigl|u_{\bar s,\,b(\bar s)}\bigr|
\le\sum_{\bar s,\,b}\bigl|u_{\bar s,b}\bigr|
=\|u\|_{1,\mathcal V},
\]
so \(G^{\Phi,\pi}_h\) is \(\ell_1\)-nonexpansive on all of \(\mathcal V\), not
only on nonnegative inputs. For nonnegative inputs the weights
\(g^\Phi_h(\cdot\mid\eta^B_h,\pi)\), summing to one over \(b\) at each decision
history, additionally conserve the total mass routed through that history,
which is the probabilistic content of the aggregation.

\paragraph{Dimension and the weak revealing condition.} The construction above uses the full-history augmented space \(\bar S^+\), whose
dimension is exponential in \(H\). This is intentional for the exact factorization.
Under the \(\kappa\)-step weakly revealing condition, one can often replace this
full-history representation by a lower-dimensional predictive-state
representation whose dimension depends on \(\kappa\)-window statistics rather than
the full history length. In the single-agent POMDP setting, such reductions are
standard in OOM analyses of weakly revealing models
\citep{liu2022partially}. In the POMG setting considered here, the analogous
finite-window representation requires window sufficiency of the retained
statistics for both channels; Proposition~\ref{prop:eluder} imposes exactly
this as a hypothesis (policy and response classes depending on histories only
through the \(\kappa\)-window statistics), with Assumption~\ref{ass:reveal}
supplying detectability of the world channel. In the tabular
finite-window case, this gives predictive dimension on the order of
\[
O\!\left(
|S|\cdot |O_A|^{\kappa-1}\cdot |O_B|^{\kappa-1}
\right).
\]

This dimension reduction is separate from the exact causal factorization. Naively
truncating \((\tau^A,\tau^B)\) to the last \(\kappa-1\) observations on each side
does not in general preserve the factorization unless \(\pi\) and \(g^\Phi\) also
depend only on those truncated histories, or unless the \(\kappa\)-step predictive
state is used as a sufficient statistic in place of raw histories. The construction
above is always valid on the full-history space. Dimension reduction via the $\kappa$-window predictive state is an
additional step that relies on Assumption~\ref{ass:reveal} and is invoked in the
Eluder dimension bound (Proposition~\ref{prop:eluder}), not in the factorization
itself.

\paragraph{Conclusion.} We have therefore constructed the desired causal factorization. In particular:
\begin{itemize}
\item The world channel
\[
W^\theta_h(o^A,a):\mathbb R^{|\bar S^+|}\to \mathcal V
\]
depends only on the world kernels \((T_h,E^A_h,E^B_h)\). For each adversary action
\(b\), its \(b\)-indexed component computes the hypothetical unnormalized successor
mass obtained if the adversary were to choose \(b\), and it updates the adversary
history by appending \(b\) to form $\tau^B_h=(\tau^B_{h-1},o^B_h,b).$

\item The adversary aggregation operator
\[
G^{\Phi,\pi}_h(o^A,a):\mathcal V\to \mathbb R^{|\bar S^+|}
\]
depends only on the response model \(g^\Phi_h\) and the learner policy \(\pi\).
For an output history \(\tau^B_h=(\eta^B_h,b)\), it reads the decision history $\eta^B_h=(\tau^B_{h-1},o^B_h)$ from the augmented-state index and applies the response weight $g^\Phi_h(b\mid \eta^B_h,\pi).$

Thus \(G^{\Phi,\pi}_h\) is genuinely independent of the world parameter \(\theta\).

\item The composition satisfies
\[
J^{\xi,\pi}_h(o^A,a)
=
G^{\Phi,\pi}_h(o^A,a)\circ W^\theta_h(o^A,a),
\]
as verified by direct computation. This composition is exactly the controlled
conditional forward dynamics of the POMG, with learner action \(a\) fixed and the
adversary observation and action incorporated according to \(E^B_h\) and
\(g^\Phi_h\).
\end{itemize}

Both operators preserve non-negativity. The intermediate space $\mathcal V=\mathbb R^{|\bar S^+|\times B}$ makes the types compatible by storing one hypothetical successor family for each
adversary action before the adversary response weights are applied. Tracking both
players' full observation-action histories ensures that the adversary decision
history \(\eta^B_h\) is available to \(G^{\Phi,\pi}_h\) without referring to
\(\theta\), and conditioning \(g^\Phi_h\) on the post-observation, pre-action
history \(\eta^B_h\) respects the causal protocol of the POMG.
\end{proof}

\section{Eluder Dimension of $\ell_1$ Operator-Error Classes}
\label{app:eluder-lemma}

The complexity bounds of Proposition~\ref{prop:eluder} rest on the following
general estimate. For classes of $\ell_1$-norms of linear images of a
low-dimensional operator family, the scale-sensitive Eluder dimension is
controlled by the \emph{span dimension} of the family, up to a factor
logarithmic in the scale.

\begin{lemma}[Sign-witness Eluder bound]\label{lem:sign-witness}
\emph{(a) Fixed design.} Let $\mathcal A\subset\R^{D_{\rm out}\times D_{\rm in}}$ be a family of
matrices contained in a linear subspace of dimension $p$, with
$\|A\|_F\le B_A$ for all $A\in\mathcal A$, and let each query $x$ carry a
vector $v_x$ with $\|v_x\|_2\le B_v$. Consider the function class
$\mathcal F := \{x\mapsto\|Av_x\|_1 : A\in\mathcal A\}$. Then for every
$\varepsilon>0$,
\[
\dimE(\mathcal F,\varepsilon)
\;\le\;
12\,p\,\log \Bigl(1+\frac{8\,D_{\rm out}B_v^2B_A^2}{\varepsilon^2}\Bigr).
\]
\emph{(b) Query-dependent design.} Let instead each query $x$ carry a matrix
$U(x)\in\R^{D_{\rm out}\times p}$ with $\|U(x)\|_F\le B_U$, and let
$\mathcal C\subset\R^p$ with $\|c\|_2\le B_c$ for all $c\in\mathcal C$. For
$\mathcal F := \{x\mapsto\|U(x)c\|_1 : c\in\mathcal C\}$ and every
$\varepsilon>0$,
\[
\dimE(\mathcal F,\varepsilon)
\;\le\;
12\,p\,\log\!\Bigl(1+\frac{8\,D_{\rm out}B_U^2B_c^2}{\varepsilon^2}\Bigr).
\]
\end{lemma}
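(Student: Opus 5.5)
The plan is to reduce both parts to the classical Eluder bound for \emph{linear} function classes. The tool is a sign-witness trick: for any vector $w$, $\|w\|_1=\max_{\sigma\in\{\pm1\}^{D_{\rm out}}}\sigma^\top w$, and $|\sigma^\top w|\le\|w\|_1$ for \emph{every} sign vector $\sigma$. In part (a) write $\sigma^\top A v_x=\langle A,\sigma v_x^\top\rangle_F$. Since $\mathcal A$ lies in a $p$-dimensional subspace $\mathcal S$, fix an orthonormal basis of $\mathcal S$ and identify each $A$ with $\theta_A\in\R^p$, where $\|\theta_A\|_2=\|A\|_F\le B_A$. Identify each sign-feature $\sigma v_x^\top$ with its projection $z\in\R^p$, where $\|z\|_2\le\|\sigma\|_2\|v_x\|_2\le\sqrt{D_{\rm out}}B_v$. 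In part (b) the same identity reads $\sigma^\top U(x)c=\langle c,U(x)^\top\sigma\rangle$ with $\|U(x)^\top\sigma\|_2\le\sqrt{D_{\rm out}}B_U$. So (b) is literally the same argument with $(\theta,z)=(c,U(x)^\top\sigma)$ and $(B_U,B_c)$ in place of $(B_v,B_A)$. I will therefore write out (a) only.

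Step 1 (transfer of independence). Take a sequence $x_1,\dots,x_n$ in which every element is $\alpha'$-independent of its predecessors, for a single $\alpha'\ge\varepsilon$. For each $k$ there is a witness $A_k\in\mathcal A$ with
\[
\sum_{i<k}\|A_kv_{x_i}\|_1^2\le\alpha'^2
\quad\text{and}\quad
\|A_kv_{x_k}\|_1>\alpha'.
\]
Set $\sigma_k:=\mathrm{sign}(A_kv_{x_k})$; this sign vector is frozen once and for all at time $k$, and we put $z_k:=\mathrm{proj}_{\mathcal S}(\sigma_kv_{x_k}^\top)$. Then $\langle\theta_{A_k},z_k\rangle=\|A_kv_{x_k}\|_1>\alpha'$. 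For every earlier $i<k$ we have $|\langle\theta_{A_k},z_i\rangle|\le\|A_kv_{x_i}\|_1$, even though $\sigma_i$ was chosen for a different witness. Hence $\sum_{i<k}\langle\theta_{A_k},z_i\rangle^2\le\alpha'^2$. So $(z_k)$ is an $\alpha'$-independent sequence for the linear class $\{z\mapsto\langle\theta,z\rangle:\|\theta\|_2\le B_A\}$ in $\R^p$.

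Step 2 (elliptical potential). Let $V_k=\lambda I+\sum_{i<k}z_iz_i^\top$ with $\lambda=\alpha'^2/B_A^2$. By Step 1, $\|\theta_{A_k}\|_{V_k}^2\le\lambda B_A^2+\alpha'^2=2\alpha'^2$. Cauchy--Schwarz then gives
\[
\alpha'<\langle\theta_{A_k},z_k\rangle\le\sqrt2\,\alpha'\,\|z_k\|_{V_k^{-1}},
\]
so $\|z_k\|^2_{V_k^{-1}}>1/2$. The determinant identity gives
\[
n\log(3/2)\le\sum_k\log\bigl(1+\|z_k\|^2_{V_k^{-1}}\bigr)=\log\frac{\det V_{n+1}}{\det V_1}\le p\log\Bigl(1+\frac{nD_{\rm out}B_v^2B_A^2}{p\,\alpha'^2}\Bigr).
\]
Using $\alpha'\ge\varepsilon$ to replace $\alpha'$ by $\varepsilon$ only enlarges the right side.

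Step 3, the main obstacle, is the bookkeeping. The inequality $n\le a\log(1+bn)$ has $n$ on both sides, with $a=p/\log(3/2)$ and $b=D_{\rm out}B_v^2B_A^2/(p\varepsilon^2)$. I would resolve it with the standard lemma that $n\le a\log(1+bn)$ implies $n\le 2a\log(1+2ab)$ up to constants. Then $2ab\le 8D_{\rm out}B_v^2B_A^2/\varepsilon^2$ (using $1/\log(3/2)<4$), and the leading constant $2/\log(3/2)\approx4.9$ sits comfortably under $12$. The remaining slack absorbs the case $n<p$ and the $\log(1+\cdot)$ manipulations. The one conceptual point to double-check is that freezing $\sigma_k$ at time $k$ is legitimate: it is, because Step 1 uses for the predecessors only the universal bound $|\sigma^\top w|\le\|w\|_1$, never optimality of $\sigma_i$.
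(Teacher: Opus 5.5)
Your proof is correct and follows the paper's own argument: frozen sign-witness matrices $\sigma_k v_{x_k}^\top$, projection onto the $p$-dimensional subspace, an elliptical potential with $\lambda=\varepsilon^2/B_A^2$ giving $\|z_k\|^2_{V_k^{-1}}>1/2$, and then solving the self-referential inequality, with part (b) handled via $U(x)^\top\sigma$. The only differences are cosmetic: you use the determinant identity directly and the generic implication $n\le a\log(1+bn)\Rightarrow n\le 2a\log(1+2ab)$, whereas the paper cites Lattimore--Szepesv\'ari Lemma~19.4 and uses the split $1+uC_0\le(1+u)(1+C_0)$. Your implication holds exactly, not just up to constants, since $(1+y)^2\ge 1+y\log(1+y)$, so your constants $2/\log(3/2)<8$ and $2/\log(3/2)<12$ already close the argument.
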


\begin{proof}
By the scale convention of Section~\ref{sec:formulation} it suffices to bound,
for each $\varepsilon'\ge\varepsilon$, the length of any sequence whose
elements are $\varepsilon'$-independent of their predecessors at the exact
scale $\varepsilon'$. The bound we prove is decreasing in the scale, so the
bound at scale $\varepsilon'$ together with $\varepsilon'\ge\varepsilon$ gives
the lemma. To lighten notation we write $\varepsilon$ for $\varepsilon'$.

We first prove part (a). Let $x_1,\ldots,x_N$ be a sequence in which each
$x_i$ is $\varepsilon$-independent of its predecessors, witnessed by
$A_i\in\mathcal A$; that is,
\begin{equation}\label{eq:sw-witness}
\|A_iv_{x_i}\|_1>\varepsilon
\qquad\text{and}\qquad
\sum_{k<i}\|A_iv_{x_k}\|_1^2\le\varepsilon^2 .
\end{equation}
We need to bound $N$. We begin by defining the \emph{sign-witness matrices}
\begin{equation}\label{eq:sw-matrices}
X_k := \sigma_kv_{x_k}^\top,
\qquad
\sigma_k:=\mathrm{sign}(A_kv_{x_k})\in\{\pm1\}^{D_{\rm out}},
\end{equation}
where the sign vector $\sigma_k$ is fixed at step $k$ and never changes
afterwards. These matrices turn the $\ell_1$ quantities of
\eqref{eq:sw-witness} into inner products. At its own query, each witness
aligns with its sign-witness matrix,
\begin{equation}\label{eq:sw-align}
\langle A_i,X_i\rangle=\sigma_i^\top A_iv_{x_i}=\|A_iv_{x_i}\|_1>\varepsilon .
\end{equation}
At earlier queries the inner product is dominated by the corresponding
$\ell_1$ norm, because $|\sigma_k^\top A_iv_{x_k}|\le\|A_iv_{x_k}\|_1$.
Combining this with the second inequality in \eqref{eq:sw-witness} gives
\begin{equation}\label{eq:sw-dom}
\sum_{k<i}\langle A_i,X_k\rangle^2\le\varepsilon^2 .
\end{equation}

The next step is an elliptic-potential argument. Recall that $\mathcal A$
lies in a linear subspace of matrices of dimension $p$. Fix an orthonormal
basis of this subspace. Let $\bar A_i\in\R^p$ be the coordinate vector of
$A_i$, and let $\bar X_k\in\R^p$ be the coordinate vector of the orthogonal
projection of $X_k$ onto the subspace. Since $A_i$ lies in the subspace,
$\langle A_i,X_k\rangle=\langle\bar A_i,\bar X_k\rangle$, with the Frobenius
inner product on the left and the Euclidean one on the right; the quantities
in \eqref{eq:sw-align} and \eqref{eq:sw-dom} are therefore unchanged, and
$\|\bar A_i\|_2=\|A_i\|_F$. Define the potential matrices
\begin{equation}\label{eq:sw-potential-def}
V_i:=\lambda I_p+\sum_{k\le i}\bar X_k\bar X_k^\top,
\qquad
\lambda:=\varepsilon^2/B_A^2 .
\end{equation}
By Cauchy--Schwarz in the $V_{i-1}$-geometry, followed by \eqref{eq:sw-dom}
and $\lambda\|A_i\|_F^2\le\varepsilon^2$,
\[
\varepsilon^2
<\langle A_i,X_i\rangle^2
\le\Bigl(\sum_{k<i}\langle A_i,X_k\rangle^2+\lambda\|A_i\|_F^2\Bigr)
\cdot\|\bar X_i\|^2_{V_{i-1}^{-1}}
\le 2\varepsilon^2\,\|\bar X_i\|^2_{V_{i-1}^{-1}},
\]
where the strict inequality is \eqref{eq:sw-align}. Hence
$\|\bar X_i\|^2_{V_{i-1}^{-1}}>1/2$ for every $i$. The elliptic-potential
lemma \citep[Lemma~19.4]{lattimore2020bandit} bounds the sum of these
quantities,
\begin{equation}\label{eq:sw-elliptic}
\sum_{i\le N}\min\bigl(1,\|\bar X_i\|^2_{V_{i-1}^{-1}}\bigr)
\le 2p\log\Bigl(1+\frac{N\max_k\|X_k\|_F^2}{\lambda p}\Bigr).
\end{equation}
The left-hand side of \eqref{eq:sw-elliptic} is at least $N/2$, and
$\|X_k\|_F^2=\|\sigma_k\|_2^2\,\|v_{x_k}\|_2^2\le D_{\rm out}B_v^2$.
Substituting both, together with the value of $\lambda$, yields
\begin{equation}\label{eq:sw-selfref}
N\;\le\;4p\log\Bigl(1+\frac{N\,D_{\rm out}B_v^2B_A^2}{p\,\varepsilon^2}\Bigr).
\end{equation}

Inequality \eqref{eq:sw-selfref} is self-referential in $N$, and it remains
to solve it. Write $u:=N/p$ and $C_0:=D_{\rm out}B_v^2B_A^2/\varepsilon^2$,
so that \eqref{eq:sw-selfref} reads $u\le 4\log(1+uC_0)$. If any witness
exists then $\varepsilon<\|A_1v_{x_1}\|_1\le\sqrt{D_{\rm out}}B_AB_v$, that
is, $C_0>1$; otherwise $\dimE=0$ and the bound is trivial. The elementary
inequality $1+uC_0\le(1+u)(1+C_0)$ then gives
\begin{equation}\label{eq:sw-solve}
u\;\le\;4\log(1+u)+4\log(1+C_0).
\end{equation}
The function $u\mapsto u-4\log(1+u)$ is increasing for $u\ge 3$, and at
$u=12\log(1+8C_0)\ge 12\log 9$ it exceeds $4\log(1+C_0)$. By
\eqref{eq:sw-solve} we conclude $u\le 12\log(1+8C_0)$, which is the stated
bound.

Part (b) follows by the same argument with witness \emph{vectors} in place of
witness matrices. In place of \eqref{eq:sw-matrices}, define
$w_k := U(x_k)^\top\sigma_k\in\R^p$ with $\sigma_k:=\mathrm{sign}(U(x_k)c_k)$.
The alignment \eqref{eq:sw-align} becomes
$\langle c_i,w_i\rangle=\sigma_i^\top U(x_i)c_i=\|U(x_i)c_i\|_1>\varepsilon$,
and the domination \eqref{eq:sw-dom} holds because
$|\langle c_i,w_k\rangle|=|\sigma_k^\top U(x_k)c_i|\le\|U(x_k)c_i\|_1$. The
potential \eqref{eq:sw-potential-def} runs directly in $\R^p$, with
$V_i:=\lambda I_p+\sum_{k\le i}w_kw_k^\top$ and $\lambda:=\varepsilon^2/B_c^2$,
and no projection is needed. The norm bound becomes
$\|w_k\|_2^2\le\|\sigma_k\|_2^2\,\|U(x_k)\|_{\rm F}^2\le D_{\rm out}B_U^2$,
using $\|U^\top\sigma\|_2\le\|U\|_{\rm op}\|\sigma\|_2\le\|U\|_{\rm F}\|\sigma\|_2$.
With these substitutions, the argument from \eqref{eq:sw-potential-def} to
\eqref{eq:sw-solve} applies verbatim and gives the stated bound.
\end{proof}

We now use the lemma to prove the proposition.

\begin{proof}[Proof of Proposition~\ref{prop:eluder}]
Fix a reference parameter $\xi^*=(\theta^*,\Phi^*)$. By definition
$d_E=d^{W}_{\rm agg}+d^{G}_{\rm agg}$, so it suffices to bound the Eluder
dimension of each channel class at the evaluation scale $\alpha=1/(2T)$. The
proof goes as follows. We first show that every function in the world-channel
class is an expectation of $\ell_1$ norms of linear images of one operator,
and we count the dimension of the subspace in which these operators live. We
then verify that the expectation-form queries provide sign-witness matrices
satisfying the alignment \eqref{eq:sw-align}, the domination
\eqref{eq:sw-dom}, and a Frobenius norm bound; these are the only properties
of the witness matrices that the proof of Lemma~\ref{lem:sign-witness}(a)
uses, so the lemma's conclusion applies and yields the world-channel bound.
The adversary channel is handled in the same way through
Lemma~\ref{lem:sign-witness}(b). Finally we add the two channel bounds and
specialize the count to the three model families. Throughout, the bounds we
obtain do not depend on $\xi^*$, so the suprema in $d^{W}_{\rm agg}$ and
$d^{G}_{\rm agg}$ inherit them; and every norm bound entering
Lemma~\ref{lem:sign-witness} is polynomial in the stated parameter counts and
in $T$, enters only through the logarithm, and is absorbed into
$\tilde O(\cdot)$.

We begin with the world channel. A function of
$\mathcal F^{W,\xi^*}_{\rm agg}$ is indexed by $\theta\in\Theta$ and evaluates
at a query policy $\pi$ to
\[
F^{W}_{\theta}(\pi)
=\sum_{h=1}^H\E^{\pi}\bigl[\|\Delta W_h(o^A_h,a_h)\,q^{\xi^*}_{h-1}\|_{1,\mathcal V}\bigr],
\qquad
\Delta W_h(o^A,a):=W^{\theta}_h(o^A,a)-W^{\theta^*}_h(o^A,a),
\]
where the operators are evaluated at the realized observation--action pair.
Splitting the expectation over that pair gives
\begin{equation}\label{eq:pf-FW-split}
F^{W}_{\theta}(\pi)
=\sum_{h=1}^H\sum_{o^A,a}
\E^{\pi}\Bigl[\mathbf 1\{(o^A_h,a_h)=(o^A,a)\}\,
\bigl\|\Delta W_h(o^A,a)\,q^{\xi^*}_{h-1}\bigr\|_1\Bigr].
\end{equation}
Collect the differences into the block-diagonal operator
$\bar A_\theta:=\oplus_{h,o^A,a}\Delta W_h(o^A,a)$. Every summand in
\eqref{eq:pf-FW-split} is an $\ell_1$ norm of a linear image of
$\bar A_\theta$, so the class has the form treated by
Lemma~\ref{lem:sign-witness}(a), except that the queries produce expectations
rather than single vectors $v_x$. We return to this point after counting
dimensions.

The family $\{\bar A_\theta:\theta\in\Theta\}$ lies in the span of
\emph{pattern matrices}, the coordinate matrices supported on the positions
that the window structure allows to be nonzero, and we now count those
positions. On the reduced space $\bar S^+_{\rm red}$, an entry of
$W^{\theta}_h(o^A,a)$ is indexed by a source state $\bar s$, a hypothetical
adversary action $b$, and a destination state $\bar s'$. There are
$d_W=|S|\,|O_A|^{\kappa-1}|O_B|^{\kappa-1}$ source states. The window
components of $\bar s'$ are determined by those of $\bar s$ together with the
new observations, so the free destination indices are only the new latent
state $s'$ and the new adversary-side observation $o^B_{\rm new}$. Ranging
also over the block indices $(o^A,a)$, the possibly-nonzero positions at a
fixed step $h$ number at most
\begin{equation}\label{eq:pf-count}
\underbrace{|S|\,|O_A|^{\kappa-1}|O_B|^{\kappa-1}}_{\bar s}
\cdot\underbrace{|A|}_{a}
\cdot\underbrace{|B|}_{b}
\cdot\underbrace{|O_A|}_{o^A}
\cdot\underbrace{|O_B|}_{o^B_{\rm new}}
\cdot\underbrace{|S|}_{s'}
\;=\;|S|^2|A||B|\,|O_A|^{\kappa}|O_B|^{\kappa}\;=:\;p_W .
\end{equation}
Distinct steps occupy distinct blocks of $\bar A_\theta$, so the family spans
at most $p:=H\,p_W$ dimensions.

So far we have shown that the world-channel functions are expectations of
$\ell_1$ norms of linear images of operators drawn from an
$Hp_W$-dimensional subspace. It remains to supply the sign-witness matrices.
For a query policy $\pi$ and a witness $\theta$, define the block query
matrix
\begin{equation}\label{eq:pf-query}
X_{\pi,\theta}
:=\oplus_{h,o^A,a}\,
\E^{\pi}\Bigl[\mathbf 1\{(o^A_h,a_h)=(o^A,a)\}\;
\sigma_{h,o^A,a}(\tau)\,\bigl(q^{\xi^*}_{h-1}\bigr)^{\!\top}\Bigr],
\qquad
\sigma_{h,o^A,a}(\tau):=\mathrm{sign}\bigl(\Delta W_h(o^A,a)\,q^{\xi^*}_{h-1}\bigr),
\end{equation}
with the sign vectors chosen per trajectory for the witness $\theta$ and
frozen thereafter. The alignment \eqref{eq:sw-align} holds for these
matrices. Indeed, under the witness's own signs
$\sigma^\top\Delta W_h\,q=\|\Delta W_h\,q\|_1$, so
$\langle\bar A_\theta,X_{\pi,\theta}\rangle=F^{W}_{\theta}(\pi)$ by
\eqref{eq:pf-FW-split}. The domination \eqref{eq:sw-dom} holds as well.
Indeed, $|\sigma^\top v|\le\|v\|_1$ pointwise, the expectation preserves this
inequality, and therefore
$|\langle\bar A_{\theta'},X\rangle|\le F^{W}_{\theta'}(\pi')$ for every
parameter $\theta'$ and every previously frozen query matrix $X$ built from a
policy $\pi'$. For the norm bound, within each step $h$ the indicator blocks
partition the trajectory space; using
$\|q^{\xi^*}_{h-1}\|_2\le\|q^{\xi^*}_{h-1}\|_1=1$ and
$\|\sigma\|_2^2\le D_{\rm out}$, where $D_{\rm out}:=|B|\,d_W$ is the block
output dimension, we get $\|X_{\pi,\theta}\|_F^2\le H\,D_{\rm out}$. The
conclusion of Lemma~\ref{lem:sign-witness}(a) therefore holds with
$D_{\rm out}B_v^2$ replaced by $HD_{\rm out}$ inside the logarithm and with
$p=Hp_W$, giving, at $\alpha=1/(2T)$,
\begin{equation}\label{eq:pf-W-bound}
\dimE\bigl(\mathcal F^{W,\xi^*}_{\rm agg},\alpha\bigr)
\;\le\;12\,H p_W\,\log\Bigl(1+\frac{8\,HD_{\rm out}B_A^2}{\alpha^2}\Bigr)
\;=\;\tilde O\bigl(H\,|S|^2|A||B|\,|O_A|^\kappa|O_B|^\kappa\bigr),
\end{equation}
where $B_A$ bounds the Frobenius norms of the probability-entry block
operators and is polynomial in the counts.

Next we bound the adversary channel. The argument has the same two parts, a
linear representation and a parameter count. A function of
$\mathcal F^{G,\xi^*}_{\rm agg}$ evaluates to
$F^{G}_{\Phi}(\pi)=\sum_{h}\E^{\pi}\bigl[\|(G^{\Phi,\pi}_h-G^{\Phi^*,\pi}_h)\,
W^{\theta^*}_hq^{\xi^*}_{h-1}\|_1\bigr]$. For fixed $\pi$ the operator
$G^{\Phi,\pi}_h$ is linear in $\Phi$ (Theorem~\ref{thm:decomp}), so each
summand equals $\|U_h(x)\,\mathrm{vec}(\Phi-\Phi^*)\|_1$ for a design matrix
$U_h(x)$ computable from the query; its entries are built from the components
of $W^{\theta^*}_hq^{\xi^*}_{h-1}$ and the response features, both determined
by the query and the reference parameter. In the linear response model,
$\Phi_h$ carries $(|B|-1)\,d_{\rm adv}\le|B|\,d_{\rm adv}$ free parameters
per step, so stacking the steps gives a parameter vector of dimension at most
$H|B|\,d_{\rm adv}$; a step-shared $\Phi$ only lowers this. The expectation
device of \eqref{eq:pf-query}, with the per-step designs stacked into a
single $U(x)$ and the signs chosen per trajectory and frozen with the
witness, again verifies the alignment, the domination, and the norm bound,
and Lemma~\ref{lem:sign-witness}(b) yields
\begin{equation}\label{eq:pf-G-bound}
\dimE\bigl(\mathcal F^{G,\xi^*}_{\rm agg},\alpha\bigr)
=\tilde O\bigl(H\,d_{\rm adv}|B|\bigr).
\end{equation}

Both channels are now bounded, by \eqref{eq:pf-W-bound} and
\eqref{eq:pf-G-bound}, and neither bound involves $\xi^*$. Taking suprema
over $\xi^*$ and adding,
\[
d_E=d^{W}_{\rm agg}+d^{G}_{\rm agg}
=\tilde O\bigl(H(|S|^2|A||B|\,|O_A|^\kappa|O_B|^\kappa+d_{\rm adv}|B|)\bigr),
\]
which is the tabular bullet. For the linear and low-rank families only the
span count changes. The coordinate-structure hypothesis states that each
entry of $W^{\theta}_h$ is an affine function of the parameters, so on each
of the $|O_A|^{\kappa}|O_B|^{\kappa}$ window contexts the difference family
is an affine image of the parameter space. The per-step count $p_W$ of
\eqref{eq:pf-count} is therefore at most
$d_{\rm lin}\,|O_A|^{\kappa}|O_B|^{\kappa}$ in the linear case and
$r^2(|A|+|B|)\,|O_A|^{\kappa}|O_B|^{\kappa}$ in the low-rank case. The rest
of the argument is unchanged, and the adversary channel is identical in all
three cases.
\end{proof}

\section{Proof of Theorem~\ref{thm:upper}}
\label{app:upper}

We prove that the epoch-based algorithm with statistical termination achieves policy regret $C(H\sqrt{\bar\beta_T d_E T}\,\log T + mH\,S_T)$ with high probability, where $\bar\beta_T = H\beta_T$ and $S_T$ is the number of deployed policies (Lemma~\ref{lem:switch-count}).

Figure~\ref{fig:timeline} illustrates the epoch structure that the proof
analyzes. We first give an extended proof sketch in Appendix~\ref{app:sketch}, and then the subsequent subsections contain the full proof. 

\begin{figure}[h]
\centering
\begin{tikzpicture}[font=\scriptsize]
\draw[->] (0,0) -- (8.0,0) node[right]{episodes};
% epoch 1
\fill[black!20] (0,0) rectangle (0.28,0.32);
\fill[blue!25] (0.28,0) rectangle (0.84,0.32);
% epoch 2
\fill[black!20] (0.84,0) rectangle (1.12,0.32);
\fill[blue!25] (1.12,0) rectangle (2.24,0.32);
% epoch 3, terminated early
\fill[black!20] (2.24,0) rectangle (2.52,0.32);
\fill[blue!25] (2.52,0) rectangle (3.6,0.32);
\draw[very thick, red] (3.6,-0.12) -- (3.6,0.45);
% unused cap, shown as a ghost above the bar
\draw[dashed, black!55] (3.6,0.46) rectangle (4.76,0.78);
\node[black!55, font=\tiny] at (4.18,0.62) {unused cap};
% epoch 4 begins immediately and is truncated by the horizon
\fill[black!20] (3.6,0) rectangle (3.88,0.32);
\fill[blue!25] (3.88,0) rectangle (7.2,0.32);
\draw (0,0) rectangle (0.84,0.32);
\draw (0.84,0) rectangle (2.24,0.32);
\draw (2.24,0) rectangle (3.6,0.32);
\draw (3.6,0) rectangle (7.2,0.32);
\node at (0.42,0.62) {$e=1$};
\node at (1.54,0.62) {$e=2$};
\node at (2.92,0.62) {$e=3$};
\node at (5.6,0.62) {$e=4$};
\node[red] at (2.7,1.08) {test fires, $n_3<T_3$};
\draw[red, ->, thin] (3.15,0.93) -- (3.55,0.52);
\draw[dashed, black!50] (7.2,-0.12) -- (7.2,0.95);
\node[black!60, font=\tiny, right] at (7.22,0.9) {horizon $T$};
\node at (0.42,-0.42) {\textcolor{black!60}{warm-up}};
\draw[->, thin, black!60] (0.42,-0.3) -- (0.16,-0.04);
\draw (3.88,-0.14) -- (3.88,-0.22) -- (7.2,-0.22) -- (7.2,-0.14);
\node[below] at (5.54,-0.24) {data collection under one optimistic policy};
\end{tikzpicture}
\caption{The epoch structure. Each epoch begins with $m-1$ warm-up episodes
(gray), then collects data under a single optimistic policy (blue) for at most
$T_e=2^e$ episodes. The statistical termination test can end an epoch early (red); the unused
remainder of the cap (dashed) is forgone, the next epoch begins immediately,
and terminated epochs keep their data. The final epoch is truncated by the
horizon $T$.}
\label{fig:timeline}
\end{figure}

A remark on policy transport before the proof. Because the adversary's
response changes with the learner's policy, one might expect a cost for
comparing adversary responses across deployed policies. Under
Assumption~\ref{ass:reg}(e) the likelihood conversion applies directly to the
joint operators at each data-collection policy, so no transport term enters the
final confidence radius (Remark~\ref{rem:transport-role});
posterior-Lipschitz structure instead underpins the well-behavedness of the
channel operator classes (Proposition~\ref{prop:eluder}) and the comparison
with flat-batching analyses \citep{sang2025pomg} in
Appendix~\ref{app:prior_work}.

\begin{remark}[On the choice of \(\theta\) in \(S^{\mathrm{ref}}\)]\label{rem:sref-theta} Assumption~\ref{ass:pl} imposes \eqref{eq:posterior-lipschitz} uniformly over \(\theta\in\Theta\).
This is the cleanest formulation. It makes the adversary class well-defined without knowing the true
\(\theta^*\), and it is the version needed in the regret analysis, where
confidence sets may contain many world models consistent with the data. In
parametric models, the resulting Lipschitz constant may depend on the sensitivity
of the map \(\theta\mapsto S_{\tau_B,h}^{\mathrm{ref},\theta}(\pi)\), which is
controlled when the transition and emission kernels vary continuously in \(\theta\)
and the reference policy \(\muref\) has full support.
The
assumption is satisfied, for instance, by adversaries who best-respond to a smoothed
estimate of the learner's policy under a fixed world model, or who use gradient-based
updates with bounded step sizes.
\end{remark}

\vspace*{-7pt}
\subsection{Extended Proof Sketch of Theorem~\ref{thm:upper}}
\label{app:sketch}
\vspace*{-3pt}

We outline the main argument, with the technical concentration and
Eluder summability lemmas developed in the subsections that follow. The proof follows
the same high-level template as optimistic model-based learning, but with one new
difficulty. The observable dynamics themselves depend on the learner's policy,
because the adversary's response depends on that policy.

\vspace*{-7pt}
\paragraph{From policy regret to simulation error.}
Consider an epoch \(e\), in which the algorithm deploys a single policy \(\pi_e\)
for \(n_e\le T_e=2^e\) data-collection episodes. Before collecting data, it runs
\(m-1\) warm-up episodes, so an \(m\)-memory adversary sees only \(\pi_e\) during
the data-collection phase. Thus, during the \(n_e\) data-collection episodes of epoch \(e\), the
true model \(\xi^*=(\theta^*,\Phi^*)\) induces the stationary adversary response
\(R_\infty(\pi_e)\), and
\vspace*{-2pt}
\[
V^{\pi_e}(\xi^*)=V^{\pi_e}(R_\infty(\pi_e)).
\]

\vspace*{-5pt}
The regret in a data-collection episode of epoch \(e\) is thus
\[
V^{\pi^*}(R_\infty(\pi^*))-V^{\pi_e}(R_\infty(\pi_e))
=
V^{\pi^*}(\xi^*)-V^{\pi_e}(\xi^*).
\]
Because \(\xi^*\in\mathcal C_{e-1}\) on the high-probability confidence event and
\((\pi_e,\xi_e)\) is chosen optimistically,
\[
V^{\pi_e}(\xi_e)
\ge
\max_{\pi} V^\pi(\xi^*)
\ge
V^{\pi^*}(\xi^*).
\]
Hence the regret in epoch \(e\) is not controlled by how good \(\pi_e\) is in the
optimistic model; optimism already handles that. What remains is only the amount by
which the optimistic model overestimates the value of the policy it selected:
\begin{equation}
\label{eqn:regret-ineq}
V^{\pi^*}(\xi^*)-V^{\pi_e}(\xi^*)
\le
V^{\pi_e}(\xi_e)-V^{\pi_e}(\xi^*).
\end{equation}

\vspace*{-6pt}
\paragraph{Why observable-operator errors are the right quantity.} 
Inequality~\eqref{eqn:regret-ineq} reduces the problem to a simulation question, namely how much the
value of the same policy \(\pi_e\) can differ between the optimistic model \(\xi_e\)
and the true model \(\xi^*\). For partially observable models, this difference is naturally
measured through the induced observable operators. Define
\begin{equation}
\label{eq:main-delta-e}
\Delta_e :=
\sum_{h=1}^H
\mathbb{E}_{\tau\sim P_{\xi^*}^{\pi_e}}
\left[
\left\|
J_h^{\xi_e,\pi_e}q_{h-1}^{\xi^*}
-
J_h^{\xi^*,\pi_e}q_{h-1}^{\xi^*}
\right\|_1
\right].
\end{equation}
This is the aggregate prediction error of the optimistic model along the trajectory
distribution generated by \(\pi_e\) under the true model. The simulation lemma says
that an error made at step \(h\) can affect rewards over the remaining horizon,
giving the horizon prefactor $V^{\pi_e}(\xi_e)-V^{\pi_e}(\xi^*)\le H\Delta_e.$ Thus the regret accumulated in epoch \(e\) is bounded by
\[
R_e
\le
n_e H\Delta_e+(m-1)H,
\]
where the second term is the cost of the warm-up episodes.

\vspace*{-6pt}
\paragraph{How the confidence set controls \(\Delta_e\).}
It remains to bound the cumulative size of the prediction errors \(\Delta_e\).
The confidence set at the beginning of epoch \(e\) is built from data collected
under the earlier policies \(\pi_1,\ldots,\pi_{e-1}\). Since \(\xi_e\in
\mathcal C_{e-1}\), the optimistic model must fit those past data nearly as well
as the true model. Consequently (Lemma~\ref{lem:conc}), the cumulative KL divergence under the historical
policies is small:
\begin{equation}
\label{eq:main-historical-kl}
\sum_{k<e}n_k
\mathrm{KL}\!\left(P_{\xi^*}^{\pi_k}\,\|\,P_{\xi_e}^{\pi_k}\right)
\lesssim \beta_T .
\end{equation}
The KL-to-operator conversion (Assumption~\ref{ass:reg}(e)) turns the likelihood statement~\eqref{eq:main-historical-kl} into a prediction-error statement, and weak revealing (Assumption~\ref{ass:reveal}) is what makes such a conversion available. If two models induce different observable
operators, then a \(\kappa\)-step window of learner observations detects that
difference with strength \(\alpha_\kappa\). Thus likelihood fit under past policies
implies small observable-operator error under those past policies.

There is one additional complication that is absent in single-agent POMDPs, and
it is where the epoch design earns its keep. Past consistency constrains
\(\xi_e\) only at the \emph{historical} policies; it says nothing about the
error of \(\xi_e\) at the freshly chosen \(\pi_e\). Optimism deliberately
steers toward such unconstrained directions, and under a fixed epoch schedule a
single ill-informed commitment would run for \(\Theta(T)\) episodes;
Proposition~\ref{prop:fixed-fails} shows that the fixed schedule, under a
valid tie-breaking of the planning oracle, provably
incurs \(\Omega(T)\) policy regret. The statistical termination test closes
exactly this gap. If the deployed model's prediction error at \(\pi_e\) is
large, the epoch's own data refute it at rate
\(\mathrm{KL}_e:=\mathrm{KL}(P^{\pi_e}_{\xi^*}\|P^{\pi_e}_{\xi_e})
\ge c_{\rm KL}\,s_e\) per episode, where \(s_e\) is the stepwise squared
prediction error (written \(\mathcal E_e^2\) in Definition~\ref{def:errors} below), and
the test fires; consequently every epoch,
terminated or complete, satisfies the cap
\[
n_e\Delta_e^2 \le H\, n_e s_e \le \tilde O(H\beta_T/c_{\rm KL})
\]
with high probability (Lemma~\ref{lem:epoch-cap}). The combination of past consistency and the per-epoch
cap is precisely what the capped summability lemma
(Lemma~\ref{lem:eluder-summ}) consumes; the resulting effective statistical
radius is $\bar\beta_T := H\beta_T$.

\vspace*{-6pt}
\paragraph{Eluder summability and the final regret bound.}
Each optimistic model \(\xi_e\) is consistent with all past prediction queries
up to radius \(\bar\beta_T\), and every epoch's own contribution is capped at
\(\tilde O(H\beta_T/c_{\rm KL})\) by the termination test. The aggregate Eluder dimension
then limits how often the learner can encounter a large new prediction error.
Lemma~\ref{lem:eluder-total} gives
\vspace*{-5pt}
\begin{equation}
\label{eq:main-eluder-total}
\sum_{e=1}^E n_e\Delta_e^2
\le
C_1 d_E\bar\beta_T\log^2 T .
\end{equation}
This is the central statistical estimate. It says that although an individual epoch may have nontrivial prediction error, the weighted square-sum of these errors over
all epochs is controlled by the aggregate Eluder dimension.

We now return to the regret decomposition. Using Cauchy--Schwarz,
\(\sum_{e=1}^E n_e\le O(T)\), and \eqref{eq:main-eluder-total}, we get
\vspace*{-2pt}
\[
\sum_{e=1}^E n_e\Delta_e
\le
O\!\left(\sqrt{C_1 d_E T\bar\beta_T}\,\log T\right).
\]
Finally, summing per-epoch regret \(R_e\le n_eH\Delta_e+(m-1)H\) over
the \(S_T\) epochs gives
$PR(T)
\le
H\sum_{e} (n_e\Delta_e
+
(m-1))
\le
C(H\sqrt{\bar\beta_T d_E T}\log T+mH S_T),$
and Lemma~\ref{lem:switch-count} bounds \(S_T\). The design leans on both
halves of the termination test. It must fire when the data refute the deployed
model, else a fixed schedule suffers \(\Omega(T)\)
(Proposition~\ref{prop:fixed-fails}), and it must fire rarely, each firing
requiring \(\Omega(\beta_e)\) bits of evidence, which is what keeps the
warm-up cost dominated (Lemma~\ref{lem:switch-count}).

\subsection{The Epoch-Based Algorithm Structure}
\label{app:alg-structure}

Algorithm~\ref{alg:omle} divides the $T$ episodes into epochs of geometrically increasing cap $T_e = 2^e$ for $e = 1, 2, \ldots$; because the termination test can end an epoch early, the number of epochs started by time $T$ is the (data-dependent) deployed-policy count $S_T$, not $\lceil\log_2 T\rceil$. At the beginning of epoch $e$, the algorithm:
\begin{itemize}
\item Computes the MLE $\hat{\xi}_{e-1}$ from the cumulative dataset $\mathcal{D}_{e-1}$ containing all trajectories collected in epochs $1, \ldots, e-1$
\item Forms the confidence set $\mathcal{C}_{e-1} = \{\xi : L_{e-1}(\xi) \geq L_{e-1}(\hat{\xi}_{e-1}) - \beta_e\}$ where $\beta_e = c(\log \mathcal{N}(\epsilon_e;\Xi) + \log(1/\delta_e) + mB_0)$, $\delta_e = \delta/(2e^2)$, $\epsilon_e = 2^{-\lceil\log_2(|\mathcal D_{e-1}|+2)\rceil}$
\item Selects one policy-parameter pair $(\pi_e, \xi_e)$ that maximizes value within $\mathcal{C}_{e-1}$
\item Executes $\pi_e$ for $m-1$ warm-up episodes (discarding data), then for at most $T_e$ data-collection episodes, recomputing the MLE after each episode and \emph{terminating the epoch} as soon as $L_t(\xi_e) < L_t(\hat\xi_t) - \tau_e(t)$, where $\tau_e(t) = 2\beta_e(t) + C_0B_0\log(2t(t{+}1)/\delta_e)$ and $\beta_e(t)$ is the confidence radius recomputed at the current dataset size (Appendix~\ref{app:cap-switch}); the realized number of data-collection episodes is $n_e \le T_e$
\item Updates the cumulative dataset: $\mathcal{D}_e = \mathcal{D}_{e-1} \cup \{\text{new trajectories from epoch } e\}$ (terminated epochs' data included)
\end{itemize}

The total number of data-collection episodes satisfies \(\sum_{e} n_e\le T\)
by definition of the horizon (the final epoch is truncated at time \(T\)).
Scheduled completions, meaning epochs that exhaust their cap \(T_e=2^e\),
number at most \(\log_2(2T)\), since a completion at index \(e\) requires \(2^e\le T\).
The algorithm deploys \(S_T\) policies in total (one per epoch), where \(S_T\) counts
scheduled completions, early terminations, and the final truncated epoch, and
is bounded in Lemma~\ref{lem:switch-count}. We define the horizon-level radius
\[
\beta_T := c\bigl(\log \mathcal{N}(\epsilon_T;\Xi) + 5\log(2T) + \log(1/\delta) + mB_0\bigr),
\qquad \epsilon_T := 1/(2T+4),
\]
where the scale \(\epsilon_T\) lower-bounds every dyadic scale the algorithm
can use (all realized scales are at least \(2^{-\lceil\log_2(T+2)\rceil}\ge
1/(2T+4)\), since dataset sizes never exceed \(T\)).
Every realized radius is dominated by it. Since $|\mathcal D_{e-1}|\le T$ and $e\le S_T\le T+1\le 2T$ (every started epoch except possibly the final horizon-truncated one collects at least one data-collection episode), we have $\log(1/\delta_e)=\log(2e^2/\delta)\le 3\log(2T)+\log(1/\delta)$, so $\beta_e\le\beta_T$; likewise, for the running radii of Appendix~\ref{app:cap-switch}, $\log(2t(t{+}1)/\delta_e)\le\log(32T^4/\delta)\le 5\log(2T)+\log(1/\delta)$ for $t\le T$, so $\beta_e(t)\le\beta_T$ for every reachable pair $(e,t)$. By construction $\log T\le\beta_T/c$, and $\beta_T = O(\log \mathcal{N}(\epsilon_T;\Xi) + \log T + \log(1/\delta) + mB_0)$.

\subsection{Auxiliary Lemmas}

We state the simulation lemma and likelihood concentration lemma before the main proof.

\begin{lemma}[Simulation Lemma]\label{lem:simulation}
For any policy \(\pi\) and model \(\xi\in\Xi\),
\[
|V^\pi(\xi)-V^\pi(\xi^*)|
\le
H\sum_{h=1}^H
\mathbb{E}_{\tau_{h-1}\sim P^{\pi}_{\xi^*}}
\left[
\left\|
J^{\xi,\pi}_h q^{\xi^*}_{h-1}
-
J^{\xi^*,\pi}_h q^{\xi^*}_{h-1}
\right\|_1
\right].
\]
\end{lemma}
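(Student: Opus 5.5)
The plan is a telescoping (hybrid) argument over the OOM products, followed by a trajectory-wise bound on the value contribution of each step's error. Write the value as $V^\pi(\xi)=\sum_{h=1}^H \E_{P^\pi_\xi}[r_h(o^A_h)]$. Express each marginal through the controlled OOM identity of Section~\ref{sec:oom} together with the learner action factors $\pi_t(a_t\mid\tau^A_{t-1},o^A_t)$, which are common to both models. For a fixed step $h'$, the reward term is then a sum over learner histories of the product of the policy factors and $\mathbf 1^\top J_{h'}\cdots J_1 q_0$ weighted by $r_{h'}(o^A_{h'})\in[0,1]$.

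Telescope the difference of operator products under $\xi$ and $\xi^*$:
\[
J^{\xi}_{h'}\cdots J^{\xi}_1 q_0 - J^{\xi^*}_{h'}\cdots J^{\xi^*}_1 q_0 = \sum_{h\le h'} J^{\xi}_{h'}\cdots J^{\xi}_{h+1}\,(J^{\xi}_h-J^{\xi^*}_h)\,J^{\xi^*}_{h-1}\cdots J^{\xi^*}_1 q_0 .
\]
Here $q_0$ is parameter-free, as noted in Appendix~\ref{app:decomposition}. The true-model prefix $\tilde q^{\xi^*}_{h-1}$ factors as $P^\pi_{\xi^*}(\tau_{h-1})$-mass times the normalized predictive state $q^{\xi^*}_{h-1}$. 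Together with the policy factors up to $h-1$, summing over prefixes turns this into the expectation $\E_{\tau_{h-1}\sim P^\pi_{\xi^*}}$. The step-$h$ observation-action pair is then summed with the policy weight $\pi_h$ and the error vector $(J^\xi_h-J^{\xi^*}_h)q^{\xi^*}_{h-1}$. I read the lemma's expectation as also covering the step-$h$ pair drawn from $P^\pi_{\xi^*}$, consistent with the convention ``operators evaluated at the realized pair''. Care is needed to match this with the summation over $o^A$; if necessary I would bound the summed-over-$o^A$ error by the realized one, following the single-agent argument of \citet{liu2022partially}.

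The suffix $J^\xi_{h'}\cdots J^\xi_{h+1}$, summed over future observations with policy weights and reward in $[0,1]$, is controlled by $\ell_1$-nonexpansiveness. For a nonnegative vector $v$, the total mass $\sum_{o,a}\pi(a\mid\cdot)\mathbf 1^\top J^\xi(o,a)v$ equals $\mathbf 1^\top v$, because $J^\xi$ is a genuine stochastic kernel on the augmented space: it is nonnegative by Theorem~\ref{thm:decomp}, and summing over $o^A$, $o^B$, $b$ and $s'$ gives $1$. For a signed error vector, I split it into positive and negative parts, which gives the bound $\|v\|_1$.

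A subtlety is that the policy weights depend on the learner history stored in the augmented coordinates. This is harmless, because on the full-history space each coordinate carries its own $\tau^A$. Hence each pair $(h,h')$ contributes at most $\E\|(J^\xi_h-J^{\xi^*}_h)q^{\xi^*}_{h-1}\|_1$. Summing over $h'\ge h$ gives at most $H$ copies, and summing over $h$ then yields the stated bound. Using the reward only from step $h$ onward would even give the sharper factor $H-h+1$.

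The main obstacle is the bookkeeping that makes the nonexpansive suffix bound rigorous with history-dependent policy weights interleaved with the operators. I would handle it by defining the policy-weighted kernel $\bar J^\xi_t:=\sum_{o,a}\pi_t(a\mid\tau^A_{t-1}(\cdot),o)\,J^\xi_t(o,a)$, acting coordinatewise on $\bar S^+$, and checking that it is column-stochastic. The telescoping can then be done with $\bar J$ throughout, and the reward inserted as a diagonal $[0,1]$-weight at step $h'$.
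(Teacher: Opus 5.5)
Your telescoping argument is the paper's argument. The paper telescopes once to bound the $\ell_1$ distance between the full trajectory laws and then pays the factor $H$ because the return lies in $[0,H]$. You telescope each reward marginal separately, which even gives the weight $H-h+1$. The core is sound: the policy-weighted kernels $\bar J^\xi_t$ are column-stochastic on length-matched coordinates, and distinct $(\tau^A_{h-1},o,a)$ feed disjoint output coordinates. So the hybrid terms are bounded by
\[
\epsilon_h:=\mathbb{E}_{\tau_{h-1}\sim P^\pi_{\xi^*}}\Bigl[\sum_{o,a}\pi_h(a\mid\tau^A_{h-1},o)\,\bigl\|\bigl(J^{\xi,\pi}_h(o,a)-J^{\xi^*,\pi}_h(o,a)\bigr)q^{\xi^*}_{h-1}\bigr\|_1\Bigr].
\]
This is the lemma, provided the norm is read as the $\ell_1$ change of the whole one-step predictive distribution. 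That means an unweighted sum over the step-$h$ observation and $\pi_h$-weights on the action, with the expectation over $\tau_{h-1}$ only. This is what the statement literally writes, and what the paper's proof means by ``changes the one-step predictive distribution''.

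What fails is your realized-pair reading and the fallback attached to it. The paper's ``evaluated at the realized pair'' convention invites that reading, but the argument cannot use it. Drawing $(o^A_h,a_h)$ from $P^\pi_{\xi^*}$ multiplies each $o$-term by $P^\pi_{\xi^*}(o^A_h=o\mid\tau_{h-1})\le 1$. So the realized quantity is at most $\epsilon_h$, and you cannot bound the summed error by the realized one.

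In fact the realized-pair version of the inequality is false. Take $H=1$, one latent state, a trivial adversary, and $N$ observations. Let $E^A_1$ be uniform under $\xi^*$ and nearly a point mass at $o^{(1)}$ under $\xi$, with $r_1(o)=\mathbf 1[o=o^{(1)}]$. Then $|V^\pi(\xi)-V^\pi(\xi^*)|\approx 1-1/N$. Meanwhile $\|(J^{\xi,\pi}_1(o,a)-J^{\xi^*,\pi}_1(o,a))q_0\|_1=|E^A_{1,\xi}(o)-1/N|$, whose average over $o\sim\mathrm{Unif}$ is about $2/N$.

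The single-agent bounds of \citet{liu2022partially} also keep the unweighted sum over $o_h$, so they do not supply that step either. Drop the fallback and stop at $\epsilon_h$; your proof is then complete.
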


\begin{proof}
The proof is the standard telescoping argument for controlled observable
operators. Fix \(\pi\), and let \(q^{\xi^*}_{h-1}\) be the predictive state under
the true model after the learner observation-action history up to step \(h-1\).
Replacing the true step-\(h\) operator \(J^{\xi^*,\pi}_h\) by
\(J^{\xi,\pi}_h\), while keeping the same reference predictive state, changes the
one-step predictive distribution by
\[
\left\|
J^{\xi,\pi}_h q^{\xi^*}_{h-1}
-
J^{\xi^*,\pi}_h q^{\xi^*}_{h-1}
\right\|_1 .
\]
By telescoping over \(h=1,\ldots,H\), the \(\ell_1\) distance between the
trajectory laws induced by \(\xi\) and \(\xi^*\) under the fixed policy \(\pi\),
which equals twice their total variation distance, is bounded by the sum of
these one-step prediction errors, averaged over the true-model trajectory
distribution. Since rewards are in \([0,1]\), an error in
the predictive distribution at any step can affect at most \(H\) units of remaining
cumulative reward. Therefore,
\[
|V^\pi(\xi)-V^\pi(\xi^*)|
\le
H\sum_{h=1}^H
\mathbb{E}_{\tau_{h-1}\sim P^{\pi}_{\xi^*}}
\left[
\left\|
J^{\xi,\pi}_h q^{\xi^*}_{h-1}
-
J^{\xi^*,\pi}_h q^{\xi^*}_{h-1}
\right\|_1
\right].
\]
For POMGs, the controlled observable operators are well-defined by the causal
decomposition in Theorem~\ref{thm:decomp}; the telescoping argument is otherwise
the same as in the POMDP/OOM setting~\citep{liu2023optimistic}.
\end{proof}

\begin{lemma}[Likelihood Concentration]\label{lem:conc}
Let
\[
K_{e-1}(\xi)
:=
\sum_{k<e}n_k\,
\mathrm{KL}\!\left(P^{\pi_k}_{\xi^*}\,\|\,P^{\pi_k}_\xi\right).
\]
Under Assumption~\ref{ass:reg}(a)--(c) and (f), with probability at least
\(1-\delta_e/2\), uniformly over \(\xi\in\Xi\), the following two bounds hold
simultaneously:
\begin{itemize}[leftmargin=18pt,itemsep=0pt]
\item[(i)] \emph{(validity; absolute constants)}
\(L_{e-1}(\xi)-L_{e-1}(\xi^*) \le \beta_e/2\);
\item[(ii)] \emph{(evidence accumulation)}
\(L_{e-1}(\xi)-L_{e-1}(\xi^*) \le -\frac12 K_{e-1}(\xi)+C_B\,\beta_e/2\)
for a constant \(C_B\ge1\) depending only on \(B_0\).
\end{itemize}
Consequently, if \(\xi\in\mathcal C_{e-1}\), then
\(K_{e-1}(\xi)\le C\beta_e\) for a constant \(C>0\) depending only on
\(B_0\).
\end{lemma}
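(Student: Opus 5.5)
Fix an epoch index \(e\) and write the log-likelihood ratio of a candidate \(\xi\) against the truth as a sum over the trajectories in \(\mathcal D_{e-1}\):
\[
L_{e-1}(\xi)-L_{e-1}(\xi^*)=\sum_{k<e}\sum_{j\le n_k} Z_{k,j}(\xi),
\qquad
Z_{k,j}(\xi):=\log\frac{P^{\pi_k}_\xi(\tau^{(k)}_j)}{P^{\pi_k}_{\xi^*}(\tau^{(k)}_j)}.
\]
The first step is to make these increments a genuine martingale-difference sequence. This is where Assumption~\ref{ass:reg}(f) enters. Each data-collection episode in epoch \(k\) is preceded by the \(m-1\) warm-up episodes. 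The trajectory \(\tau^{(k)}_j\) is therefore drawn exactly from \(P^{\pi_k}_{\xi^*}\), conditionally on the past filtration, because \(\pi_k\) is measurable with respect to earlier data and the adversary plays \(R_\infty(\pi_k)\).

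Terminated epochs need separate care. Both their length \(n_k\) and the stopping time are data-dependent, but each summand is still conditionally distributed as \(P^{\pi_k}_{\xi^*}\) given the past. The sum can thus be indexed by global data-collection episodes, with predictable policies, so no optional-stopping subtlety arises.

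\textbf{Bound (i).} For a fixed \(\xi\), the standard exponential identity \(\E[\exp(\tfrac12 Z)\mid\mathcal F]=\E_{P^{\pi_k}_{\xi^*}}\bigl[\sqrt{P_\xi/P_{\xi^*}}\bigr]\le 1\) applies. Iterating it gives a supermartingale, and Ville/Markov then yields, with probability \(1-\delta'\),
\[
\sum Z(\xi)\le 2\log(1/\delta')+\sum_{k,j}\log \E\bigl[e^{Z/2}\mid\mathcal F\bigr]\le 2\log(1/\delta').
\]
More sharply, since \(\E[e^{Z/2}]=1-H^2\) for the squared Hellinger distance \(H^2\), the same argument gives \(\sum Z\le 2\log(1/\delta')-2\sum_k n_k H^2(P^{\pi_k}_{\xi^*},P^{\pi_k}_\xi)\).

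To pass to all \(\xi\), I would union-bound over an \(\epsilon_e\)-net of \(\Xi\) of size \(\mathcal N(\epsilon_e;\Xi)\). The net is deterministic because the scale is dyadic and keyed to \(|\mathcal D_{e-1}|\); this is handled by a further union bound over the at most \(\log_2\)-many grid levels, which costs only logarithmic factors. The discretization error uses the Lipschitz property in Assumption~\ref{ass:reg}(b),(c): each per-trajectory log-likelihood moves by at most \(B_0\epsilon_e\), so the total movement is at most \(B_0\epsilon_e|\mathcal D_{e-1}|\le B_0\). This term is absorbed by the \(mB_0\ge B_0\) floor in \(\beta_e\). With \(c\ge16\), the result is (i) at confidence \(\delta_e/4\).

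\textbf{Bound (ii), evidence accumulation.} Here the Hellinger version must be upgraded to KL, and this is the step I expect to be the main obstacle. Under the boundedness \(|Z|\le B_0\) from (c), KL and Hellinger are comparable: \(\mathrm{KL}\le C(B_0)\,H^2\), with \(C(B_0)=O(B_0)\), by the standard bounded-likelihood-ratio inequality. Substituting this into the Hellinger-form bound gives
\[
\sum Z\le 2\log(1/\delta')-\frac{1}{C(B_0)}K_{e-1}(\xi).
\]

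Rescaling to the stated form \(-\tfrac12K+C_B\beta_e/2\) requires care. The cleanest route is to use the exponential supermartingale with a tuned parameter \(\lambda\) in place of \(\tfrac12\). Bernstein/Freedman for increments bounded by \(B_0\) gives a conditional variance of \(Z\) that is \(O(B_0)\,\mathrm{KL}\). Choosing \(\lambda\asymp 1/B_0\) then yields \(\sum Z\le -\tfrac12 K+O(B_0)\log(\mathcal N/\delta')\), which is at most \(C_B\beta_e/2\). The same net argument applies, now also accounting for the Lipschitz change of the KL terms, which is again bounded by \(O(B_0)\). A union over the two events gives probability \(1-\delta_e/2\).

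\textbf{Consequence.} Suppose \(\xi\in\mathcal C_{e-1}\). Then \(L_{e-1}(\xi)\ge L_{e-1}(\hat\xi_{e-1})-\beta_e\ge L_{e-1}(\xi^*)-\beta_e\), using maximality of the MLE. Combining this with (ii) gives \(-\beta_e\le -\tfrac12K_{e-1}(\xi)+C_B\beta_e/2\), that is, \(K_{e-1}(\xi)\le(2+C_B)\beta_e\).

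Bound (i) shows \(\xi^*\in\mathcal C_{e-1}\), since \(L(\hat\xi)-L(\xi^*)\le\beta_e/2\le\beta_e\).
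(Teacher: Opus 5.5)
Your proposal is correct and follows essentially the same route as the paper. Part (i) uses the Hellinger-affinity supermartingale with Markov's inequality. Part (ii) uses a fixed-parameter Freedman bound with conditional variance $O(B_0)\,\mathrm{KL}$; your tuned $\lambda\asymp 1/B_0$ is exactly this. Uniformity comes from a union bound over fixed nets at each candidate dyadic level, with the Lipschitz discretization error absorbed by the $mB_0$ floor. The consequence $K_{e-1}(\xi)\le(2+C_B)\beta_e$ is derived identically.

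The Hellinger-to-KL detour is unnecessary, since the paper goes straight to Freedman for (ii). You were right that the detour alone would not yield the stated coefficient $\tfrac12$.
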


\begin{proof}
Fix an epoch \(e\) and a model \(\xi\in\Xi\). For a trajectory \(\tau\) collected
under policy \(\pi_k\), define
\begin{equation}
\label{eq:conc-Z-def}
Z_k(\xi;\tau)
:=
\log\frac{P_{\xi}^{\pi_k}(\tau)}{P_{\xi^*}^{\pi_k}(\tau)}
+
\mathrm{KL}\!\left(P_{\xi^*}^{\pi_k}\,\|\,P_{\xi}^{\pi_k}\right).
\end{equation}
Conditioned on the filtration before this trajectory is sampled, the policy
\(\pi_k\) is fixed and the trajectory is distributed according to
\(P_{\xi^*}^{\pi_k}\). Hence
\begin{equation}
\label{eq:conc-Z-centered}
\mathbb E_{\xi^*}\!\left[Z_k(\xi;\tau)\mid\mathcal F_{k-1}\right]=0,
\end{equation}
so the sequence of \(Z_k\)'s over all trajectories collected before epoch \(e\) is a
martingale difference sequence.

By Assumption~\ref{ass:reg}(c), log-likelihood ratios are uniformly bounded:
\begin{equation}
\label{eq:conc-logratio-bound}
\left|
\log\frac{P_{\xi}^{\pi_k}(\tau)}{P_{\xi^*}^{\pi_k}(\tau)}
\right|
\le B_0 .
\end{equation}
Moreover, the KL term is also bounded by \(B_0\), so
\begin{equation}
\label{eq:conc-Z-bound}
|Z_k(\xi;\tau)|\le 2B_0 .
\end{equation}
For 
$K_{e-1}(\xi)
:=
\sum_{k<e}n_k\,
\mathrm{KL}\!\left(P_{\xi^*}^{\pi_k}\,\|\,P_{\xi}^{\pi_k}\right),$ 
the conditional variance satisfies
\begin{equation}
\label{eq:conc-variance}
\sum_{k<e}\sum_{i=1}^{n_k}
\mathbb E_{\xi^*}\!\left[
Z_{k,i}(\xi)^2\mid\mathcal F_{k,i-1}
\right]
\le
C_{\rm var} K_{e-1}(\xi),
\end{equation}
for a constant \(C_{\rm var}=O(B_0)\) depending only on the likelihood-ratio
bound (recall \(B_0\ge1\)). This is the
standard Bernstein/Freedman variance control for bounded likelihood ratios.

Freedman's inequality in its fixed-parameter exponential-supermartingale
form \citep{freedman1975tail}, which applies to the adapted policy sequence and
the stopping-time epoch lengths, together with \eqref{eq:conc-Z-bound} and
\eqref{eq:conc-variance}, gives that for any \(u>0\), with probability at least
\(1-e^{-u}\),
\begin{equation}
\label{eq:conc-freedman}
\sum_{k<e}\sum_{i=1}^{n_k} Z_{k,i}(\xi)
\le
\frac12 K_{e-1}(\xi)+C u,
\end{equation}
where \(C=O(B_0)\) depends only on \(B_0\) and \(C_{\rm var}\). By the definition of \(Z_k\) in
\eqref{eq:conc-Z-def}, this is equivalent to
\begin{equation}
\label{eq:conc-one-model}
L_{e-1}(\xi)-L_{e-1}(\xi^*)
+
K_{e-1}(\xi)
\le
\frac12 K_{e-1}(\xi)+C u,
\end{equation}
or equivalently,
\begin{equation}
\label{eq:conc-one-model-final}
L_{e-1}(\xi)-L_{e-1}(\xi^*)
\le
-\frac12 K_{e-1}(\xi)+C u .
\end{equation}

To make the bound uniform over \(\Xi\), we must account for the fact that
the scale \(\epsilon_e=2^{-\ell_e}\), \(\ell_e:=\lceil\log_2(|\mathcal
D_{e-1}|+2)\rceil\), is data-dependent, so the realized net is not known in
advance. The dyadic grid makes the candidate nets a deterministic family. Since
\(|\mathcal D_{e-1}|\le\sum_{k<e}2^k\le 2^e\), the realized level satisfies
\(\ell_e\le e+1\). For each candidate level \(\ell\in\{1,\ldots,e+1\}\), let
\(\mathcal N^{(\ell)}\) be a fixed \(2^{-\ell}\)-net of \(\Xi\) in the operator
norm from Assumption~\ref{ass:reg}, with
\begin{equation}
\label{eq:conc-net-size}
|\mathcal N^{(\ell)}|
\le
\mathcal N(2^{-\ell};\Xi,\|\cdot\|_{\rm op}).
\end{equation}
Apply \eqref{eq:conc-one-model-final} to every \(\xi\in\mathcal N^{(\ell)}\)
and every level \(\ell\) with
\begin{equation}
\label{eq:conc-u-choice}
u_\ell=\log|\mathcal N^{(\ell)}|+\log\bigl(4(\ell{+}1)(\ell{+}2)/\delta_e\bigr).
\end{equation}
A union bound over the net at each level, then over levels
(\(\sum_{\ell\ge1}\tfrac{1}{(\ell+1)(\ell+2)}=\tfrac12\)), gives
\eqref{eq:conc-one-model-final} simultaneously at every level with probability
at least \(1-\delta_e/8\); on this event the bound holds in particular at the
realized level \(\ell_e\). The level overhead is affordable:
\(2\log\bigl(4(\ell_e{+}1)(\ell_e{+}2)\bigr)\le 2\log(1/\delta_e)+7\)
for every \(\ell_e\le e+1\), since \((\ell_e{+}1)(\ell_e{+}2)\le(e{+}2)(e{+}3)\le
12e^2\) for \(e\ge1\) and \(\log(1/\delta_e)\ge\log(2e^2)\). Hence
\(Cu_{\ell_e}\le C_B\,\beta_e/2\) for a constant \(C_B\ge1\) depending only on
\(B_0\), since \(\beta_e\ge
c(\log|\mathcal N^{(\ell_e)}|+\log(1/\delta_e)+mB_0)\) with \(mB_0\ge1\). This
yields part (ii) at the realized net.

Part (i) does not require Freedman and carries absolute constants. For a fixed
\(\xi\), conditionally on the filtration each factor
\(\E_{\xi^*}[\exp(\tfrac12\log(P_\xi/P_{\xi^*})(\tau))\mid\mathcal F]\)
equals the Hellinger affinity of the two trajectory laws, which is at most
\(1\); hence \(\exp(\tfrac12(L_{e-1}(\xi)-L_{e-1}(\xi^*)))\) is a
supermartingale with initial value \(1\), and Markov's inequality gives
\(\Prob[L_{e-1}(\xi)-L_{e-1}(\xi^*)\ge 2u]\le e^{-u}\). A union bound over
\(\mathcal N^{(\ell)}\) and the levels \(\ell\le e+1\), with \(u_\ell\) as in
\eqref{eq:conc-u-choice}, gives part (i) at the realized net with probability
at least \(1-\delta_e/8\) and deviation
\(2u_{\ell_e}\le\beta_e/2\), using the level-overhead estimate above together
with \(c\ge16\) and the floor \(mB_0\ge1\).

The Lipschitz regularity in Assumption~\ref{ass:reg}(b) extends both
inequalities from the net to all \(\xi\in\Xi\). Replacing a model by its nearest net point changes each of the \(|\mathcal D_{e-1}|\) per-trajectory log-likelihoods by at most \(\epsilon_e L_{\rm lik}\), where \(L_{\rm lik}\) denotes the Lipschitz constant of Assumption~\ref{ass:reg}(b). Since the realized scale satisfies \(\epsilon_e\le 1/(|\mathcal D_{e-1}|+2)\), the total discretization error in \(L_{e-1}\) and \(K_{e-1}\) is at most \(L_{\rm lik}\), which the normalization in Assumption~\ref{ass:reg}(c) bounds by \(B_0\), hence by the \(mB_0\) floor; it is absorbed by the same \(c\ge16\) slack, keeping \(c\) an absolute constant. Uniformly over
\(\xi\in\Xi\), with probability at least \(1-\delta_e/2\), both bounds hold:
\begin{equation}
\label{eq:likelihood-conc-kl}
L_{e-1}(\xi)-L_{e-1}(\xi^*)
\le
\min\Bigl(\;\frac{\beta_e}{2},\;
-\frac12 K_{e-1}(\xi)+C_B\,\frac{\beta_e}{2}\Bigr).
\end{equation}

It remains to verify the stated consequence.
If \(\xi\in\mathcal C_{e-1}\), then by definition of
\(\mathcal C_{e-1}\) and the optimality of \(\hat\xi_{e-1}\),
\[
L_{e-1}(\xi)
\ge
L_{e-1}(\hat\xi_{e-1})-\beta_e
\ge
L_{e-1}(\xi^*)-\beta_e .
\]
Combining this with part (ii) of \eqref{eq:likelihood-conc-kl} gives
\[
-\beta_e
\le
L_{e-1}(\xi)-L_{e-1}(\xi^*)
\le
-\frac12 K_{e-1}(\xi)+C_B\,\frac{\beta_e}{2}.
\]
Therefore \(K_{e-1}(\xi)\le (2+C_B)\beta_e\), and the claimed bound
\(K_{e-1}(\xi)\le C\beta_e\) holds with \(C:=2+C_B\), a constant depending
only on \(B_0\).
\end{proof}

\subsection{Cumulative Confidence Sets Contain True Parameters}

\begin{lemma}[Confidence Set Validity]\label{lem:confidence}
With probability at least $1-\delta$, we have $\xi^* \in \mathcal{C}_{e-1}$ for all epochs $e$ simultaneously.
\end{lemma}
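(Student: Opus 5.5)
The plan is to derive validity directly from part (i) of Lemma~\ref{lem:conc} and then take a union bound over epochs. Fix an epoch $e$. By Lemma~\ref{lem:conc}(i), there is an event $\mathcal E_e$ with $\Prob(\mathcal E_e)\ge 1-\delta_e/2$ on which, uniformly over $\xi\in\Xi$,
\[
L_{e-1}(\xi)-L_{e-1}(\xi^*)\le \beta_e/2 .
\]
Since $\hat\xi_{e-1}\in\Xi$ (well-specification, Assumption~\ref{ass:reg}(a), guarantees that $\xi^*\in\Xi$ as well), we may apply this bound at $\xi=\hat\xi_{e-1}$. This gives
\[
L_{e-1}(\xi^*)\ge L_{e-1}(\hat\xi_{e-1})-\beta_e/2\ge L_{e-1}(\hat\xi_{e-1})-\beta_e ,
\]
which is exactly the membership condition $\xi^*\in\mathcal C_{e-1}$. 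For $e=1$ the dataset is empty, so $L_0\equiv 0$ and membership holds trivially.

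The one subtlety is that $\hat\xi_{e-1}$ is data-dependent. This is harmless, because Lemma~\ref{lem:conc}(i) is stated uniformly over $\Xi$, and that uniformity was obtained through the deterministic dyadic nets together with the Lipschitz extension of Assumption~\ref{ass:reg}(b). It therefore applies to any random choice of $\xi$, including the maximizer.

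A second point to check is that the lemma is valid even though the epoch lengths $n_k$ are stopping times (from the termination test) and the policies $\pi_k$ are adapted. The Hellinger-affinity supermartingale argument behind part (i) handles this. At each trajectory, the conditional expectation of $\exp(\tfrac12\log(P_\xi/P_{\xi^*}))$ is at most $1$, so optional stopping applies to the whole data stream. Assumption~\ref{ass:reg}(f) ensures that every retained trajectory is drawn from $P^{\pi_k}_{\xi^*}$, since warm-up data are discarded.

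Finally, take a union bound over epochs. We have
\[
\sum_{e\ge1}\delta_e/2=\sum_{e\ge1}\frac{\delta}{4e^2}=\frac{\pi^2\delta}{24}\le\delta ,
\]
so $\bigcap_e\mathcal E_e$ holds with probability at least $1-\delta$. On this intersection, $\xi^*\in\mathcal C_{e-1}$ for every epoch simultaneously. This is also valid for a random, unbounded number of epochs, because the union is taken over all $e\in\mathbb N$.

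I expect the main obstacle to be the non-deterministic net scale $\epsilon_e$ combined with the stopping-time structure, rather than the algebra. Both are already absorbed by Lemma~\ref{lem:conc}, so the proof reduces to verifying that its hypotheses hold at each epoch start.
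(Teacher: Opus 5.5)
Your proposal is correct and matches the paper's proof: you apply the uniform bound of Lemma~\ref{lem:conc}(i) at the data-dependent MLE to get $L_{e-1}(\hat\xi_{e-1})-L_{e-1}(\xi^*)\le\beta_e/2$, which gives $\xi^*\in\mathcal C_{e-1}$ with probability at least $1-\delta_e/2$, and you then union bound over all epochs using $\sum_{e\ge1}\delta_e/2=\pi^2\delta/24\le\delta$. Your extra remarks are consistent with how the paper handles these points, namely the trivial case $e=1$, and the fact that Lemma~\ref{lem:conc} already absorbs the stopping-time epoch lengths and the data-dependent net scale.
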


\begin{proof}
Consider epoch \(e\). The confidence set uses threshold \(\beta_e\):
\[
\mathcal{C}_{e-1}
=
\left\{\xi \in \Xi :
L_{e-1}(\xi) \geq L_{e-1}(\hat{\xi}_{e-1}) - \beta_e
\right\}.
\]

On the event of Lemma~\ref{lem:conc}, part (i) holds uniformly over
\(\xi\in\Xi\), and hence also for the data-dependent MLE \(\hat\xi_{e-1}\):
\[
L_{e-1}(\hat\xi_{e-1})-L_{e-1}(\xi^*)
\le
\frac{\beta_e}{2}.
\]
Thus \(L_{e-1}(\xi^*)\ge L_{e-1}(\hat\xi_{e-1})-\beta_e\), so
\(\xi^*\in\mathcal C_{e-1}\) with probability at least \(1-\delta_e/2\). Equivalently,
\[
\Prob_{\xi^*}\left[
L_{e-1}(\hat{\xi}_{e-1}) - L_{e-1}(\xi^*) > \beta_e
\right]
\leq \frac{\delta_e}{2}.
\]
Setting \(\delta_e=\delta/(2e^2)\) and applying a union bound over all epochs gives
\[
\Prob[\xi^* \in \mathcal{C}_{e-1} \text{ for all epochs } e]
\geq
1-\sum_{e=1}^{\infty}\frac{\delta_e}{2}
=
1-\frac{\delta}{2}\sum_{e=1}^\infty\frac{1}{2e^2}
\geq
1-\delta .
\]
\end{proof}

For the remainder of this proof, we condition on the high-probability event that $\xi^* \in \mathcal{C}_{e-1}$ for all epochs $e$.

\subsection{Per-Epoch Regret Decomposition}

Fix an epoch $e$ where the algorithm executes policy $\pi_e$ for $n_e \le T_e = 2^e$ data-collection episodes (after warm-up). Let $\xi_e = (\theta_e, \Phi_e)$ be the optimistically selected model. For each episode $t$ in this epoch (after the warm-up phase), the instantaneous regret can be decomposed as:
\begin{align}
&V^{\pi^*}(R_\infty(\pi^*)) - V^{\pi_e}(R_\infty(\pi_e)) \nonumber\\
&\quad = \underbrace{[V^{\pi^*}(\xi^*) - V^{\pi_e}(\xi_e)]}_{\text{optimism gap}} + \underbrace{[V^{\pi_e}(\xi_e) - V^{\pi_e}(\xi^*)]}_{\text{simulation error}}, \label{eq:decomp-epoch}
\end{align}
where we use the key equality $V^\pi(\xi^*) = V^\pi(R_\infty(\pi))$ (the model $\xi^*$ captures the adversary's stationary response to any fixed policy). This equality holds because, by Assumption~\ref{ass:reg}(f), after the \(m-1\) warm-up episodes, every data-collection episode in epoch \(e\) is generated under the stationary response \(R_\infty(\pi_e)\). The value $V^{\pi_e}(\xi^*)$ is defined as the expected cumulative reward when the learner plays $\pi_e$ and the adversary responds according to $\Phi^*$; after warm-up, this equals $V^{\pi_e}(R_\infty(\pi_e))$ since the adversary is in its stationary regime for all $n_e$ data episodes.

\textbf{Optimism gap:} By the optimistic selection procedure, since $\xi^* \in \mathcal{C}_{e-1}$ (by Lemma~\ref{lem:confidence}) and we maximize over all policies within the confidence set:
\[
V^{\pi_e}(\xi_e) \geq \max_{\pi \in \Pi} V^\pi(\xi^*) \geq V^{\pi^*}(\xi^*) = V^{\pi^*}(R_\infty(\pi^*)).
\]
Therefore, the optimism gap is non-positive and contributes zero regret.

\textbf{Warm-up cost:} The $m-1$ warm-up episodes allow the adversary to stabilize to $R_\infty(\pi_e)$. Each warm-up episode incurs at most $H$ regret, contributing $(m-1)H$ per epoch.

\textbf{Simulation error:} This measures how much the optimistic model $\xi_e$ overestimates the value of $\pi_e$ compared to the true model $\xi^*$. This is the key term we must bound carefully.

Define the \emph{operator error} for epoch $e$ as
\begin{equation}
\label{eq:appendix-delta-e}
\Delta_e := \sum_{h=1}^H \mathbb{E}_{\tau_{h-1}\sim P^{\pi_e}_{\xi^*}}\!\left[\|J^{\xi_e,\pi_e}_h q^{\xi^*}_{h-1} - J^{\xi^*,\pi_e}_h q^{\xi^*}_{h-1}\|_1\right].
\end{equation}
By the simulation lemma (Lemma~\ref{lem:simulation}):
\[
V^{\pi_e}(\xi_e) - V^{\pi_e}(\xi^*) \;\leq\; H\,\Delta_e.
\]

The total regret in epoch $e$ is thus:
\[
R_e \;\le\; n_e \cdot H\,\Delta_e + (m-1)H,
\]
where $(m-1)H$ accounts for warm-up.

\subsection{Bounding Operator Errors via Eluder Dimension}
It remains to control the cumulative prediction errors \(\Delta_e\) of the optimistic
models selected across epochs. This is the main statistical step of the proof, and it
rests on three ingredients. First, the confidence-set construction ensures that each
optimistic model \(\xi_e\) remains consistent with the data collected under earlier
epoch policies (\emph{past consistency}). Second, and this is where the statistical
termination test is essential, past consistency alone cannot control the error
of \(\xi_e\) at the freshly selected policy \(\pi_e\).
Proposition~\ref{prop:fixed-fails} shows that without the termination test the
algorithm, under a valid tie-breaking of its planning oracle, provably suffers
linear policy regret. The termination test supplies the missing ingredient, a \emph{per-epoch cap}
on each epoch's contribution (Lemma~\ref{lem:epoch-cap}). Third, a capped weighted
summability argument (Lemma~\ref{lem:eluder-summ}) combines the two constraints with
the aggregate Eluder dimension to give
\begin{equation}
\label{eq:appendix-eluder-summary}
\sum_{e} n_e \Delta_e^2
\le
\tilde O(d_E\bar\beta_T),
\qquad
\bar\beta_T=H\beta_T .
\end{equation}

\subsection{Why Data-Dependent Termination Is Necessary}
\label{app:necessity}

The following proposition shows that the statistical termination test cannot be
dispensed with. Under the fixed schedule \(T_e=2^e\) alone, optimism can commit
to a direction the data have never constrained for an entire epoch, and the
final epoch has length \(\Theta(T)\).

\begin{proposition}[Failure of the fixed epoch schedule]\label{prop:fixed-fails}
For every \(T\ge2\) there is a POMG instance and model class satisfying
Assumptions~\ref{ass:pl}, \ref{ass:reveal}, and~\ref{ass:reg}, namely the
lower-bound family of Appendix~\ref{app:lower} with a constant adversary
(\(m=1\), \(L=0\), \(\kappa=1\), \(H=2\)) and \(d=\lceil\log_2 T\rceil+1\)
arms, so that \(d_E=\Theta(\log T)\), on which Algorithm~\ref{alg:omle}
\emph{without} the termination test (executing every epoch for its full
scheduled length \(T_e=2^e\), with any maximizer selection that prefers
unplayed arms under exact ties, breaking remaining ties by largest index; the
algorithm specification leaves such ties unresolved, so this is a valid
instantiation of the planning oracle) incurs
\[
\E[PR(T)] \;\ge\; T/4 .
\]
\end{proposition}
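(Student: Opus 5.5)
The plan is to exhibit a deterministic failure: without the test, optimism plus the stated tie-breaking walks through the arms from the top index down, one per epoch, and never reaches the optimal arm within horizon \(T\). I instantiate the hard family of Section~\ref{sec:lower} with \(H=2\), \(\kappa=1\), the constant adversary (\(m=1\), \(L=0\), so there are no warm-up episodes), and \(d=\lceil\log_2T\rceil+1\) routing arms. I take the true optimal arm to be \(i^*=1\), with \(\mu_1=1/2+\Delta\) and \(\mu_i=1/2\) for \(i\neq1\). The ambient class is \(\mu\in[1/2,1/2+\Delta]^d\), and I fix a constant gap \(\Delta=1/4\) rather than \(c_0\sqrt{d/T}\).

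\emph{Step 1: check the assumptions.} Bernoulli means lie in \([1/2,3/4]\), so all trajectory probabilities are bounded below by an absolute constant. Log-likelihood ratios are then bounded by \(B_0=O(1)\), which gives Assumption~\ref{ass:reg}(c),(d). Covering and Lipschitz regularity (b) are immediate for a \(d\)-dimensional box. Well-specification (a) and warm-up stationarity (f) are trivial. Full revealing gives Assumption~\ref{ass:reveal}, and the KL conversions (e) follow as in Appendix~\ref{app:lower} with constant \(c_{\rm KL}\). The class has \(d_E=\Theta(d)=\Theta(\log T)\) by the same appendix. I expect the argument there to go through verbatim, since the gap only enters through constants.

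\emph{Step 2: show that optimism ties at every unplayed arm.} At the start of epoch \(e\), the cumulative data contain only trajectories routed through previously played arms. The log-likelihood of \(\mu\) therefore does not depend on \(\mu_j\) for any unplayed arm \(j\). Take the MLE \(\hat\mu\) and raise coordinate \(j\) to \(1/2+\Delta\). The result has the same likelihood, so it lies in \(\mathcal C_{e-1}\), and the routing policy \(a_j\) attains value \(1/2+\Delta\) under it. This is the maximum of \(V^\pi(\xi)\) over the whole class. Hence every unplayed arm is a maximizer of the planning problem. The tie-breaking rule then selects the unplayed arm of largest index, whatever the played arms' optimistic values are, since those are at most \(1/2+\Delta\). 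By induction, epoch \(e\) deploys arm \(d-e+1\) for as long as an unplayed arm remains. This includes epoch 1, where the dataset is empty and all arms tie.

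\emph{Step 3: count epochs and conclude.} Without termination, epoch \(e\) runs \(2^e\) episodes (the last one truncated), so the schedule is deterministic. The number of epochs started by time \(T\) is the least \(E\) with \(\sum_{e\le E}2^e\ge T\), which gives \(E\le\lceil\log_2 T\rceil<d\). Hence arm \(1\) is never deployed, and all \(T\) episodes are routed to an arm of mean \(1/2\). The policy regret against \(\pi^*=a_1\) is then exactly \(\Delta T=T/4\) on every realization, so \(\E[PR(T)]\ge T/4\).

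The main obstacle is Step 2, specifically making sure that the confidence-set membership argument is airtight. The raised model must remain in \(\Xi\), which holds because it stays in the box. The claim that unplayed coordinates do not affect \(L_{e-1}\) requires that the routing at step 1 be deterministic and known, which it is by construction. A secondary check is that choosing a constant \(\Delta\) (rather than the \(\sqrt{d/T}\) gap) leaves the verification in Appendix~\ref{app:lower} intact.
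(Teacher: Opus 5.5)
Your argument is correct and is essentially the paper's proof. Unplayed arms contribute no likelihood terms, so they attain the top of the class inside $\mathcal C_{e-1}$. The tie rule then deploys arms $d,d-1,\dots$ in turn, and at most $\lceil\log_2T\rceil<d$ epochs start, so arm $1$ is never played and every episode costs $1/4$.

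The only difference is your narrower class $[1/2,3/4]^d$ in place of the paper's $[1/8,7/8]^d$. This is harmless for the regret bound. However, your witness values are then at most $1/4$, which equals the evaluation scale $1/(2T)$ at $T=2$, so your $d_E=\Theta(d)$ check only goes through for $T\ge3$. The paper's width-$3/4$ interval keeps witnesses strictly above $1/(2T)$ for every $T\ge2$.
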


\begin{proof}
Take the construction of Appendix~\ref{app:lower} with \(d\) arms and a
constant adversary, true means \(\mu^*_1=3/4\) and \(\mu^*_i=1/2\) for
\(i\ge 2\), and model class \(\Theta=[1/8,\,7/8]^d\). All outcome probabilities
are bounded away from \(0\) and \(1\), giving Assumption~\ref{ass:reg}(c);
Assumptions~\ref{ass:reg}(a)--(f), \ref{ass:pl} (with \(L=0\)), and
\ref{ass:reveal} (with \(\kappa=1\)) hold with absolute constants exactly as
verified in Appendix~\ref{app:lower} (the policy class is again the \(d\)
routing policies). The adversary channel is trivial, so \(d_E=d^{W}_{\rm agg}\);
repeated queries are dependent at every scale, so \(d_E\le d\), and the \(d\)
single-arm queries are \(1/(2T)\)-independent (each arm's mean varies over an
interval of length \(3/4\), so witness values are of constant order, above the
evaluation scale \(1/(2T)\) for every \(T\ge2\)), giving
\(d_E=\Theta(d)=\Theta(\log T)\). The likelihood factorizes
over arms, and an arm that has never been played contributes no likelihood
terms; hence for any unplayed arm \(i\), every value \(\mu_i\in[1/8,7/8]\)
attains the maximum likelihood, and the optimistic value of the policy
``commit to arm \(i\)'' is \(7/8\). Any arm's optimistic value is at most
\(7/8\), and arm \(1\)'s true value is \(3/4<7/8\). Consequently a maximizer always exists among the unplayed arms, and under the
stated tie rule optimistic planning selects one whenever an unplayed arm
exists. Without the termination
test the algorithm runs at most \(\lceil\log_2 T\rceil\) epochs, deploying one
arm per epoch; since \(d>\lceil\log_2 T\rceil\), an unplayed arm exists at
every epoch start, and the largest-index tie rule never reaches arm \(1\).
Every episode is therefore spent on an arm \(i\ge2\) and incurs instantaneous
policy regret \(\mu^*_1-\mu^*_i=1/4\), so \(\E[PR(T)]\ge T/4\).
\end{proof}

The defect this exposes is not specific to our setting. Holding one function
fixed across a block of \(2^e\) identically weighted queries invalidates
weighted analogues of the standard Eluder summability bound (the arm class
above satisfies the past-consistency hypothesis with any radius, yet
\(\sum_e T_e f_e(x_e)^2=\Theta(T)\)). It also explains why data-dependent
update triggers are ubiquitous in low-switching reinforcement learning
\citep{bai2019provably,qiao2022sample}. Fixed schedules do not mix with
optimistic planning that is updated only infrequently.

\subsection{Per-Epoch Cap and Switch Count}
\label{app:cap-switch}

Throughout this subsection, condition on the event of
Lemma~\ref{lem:conc} at every epoch boundary (which contains the event of
Lemma~\ref{lem:confidence} and, through part (i), bounds the MLE-to-truth gap by
\(\beta_e/2\)), strengthened as follows. For an episode boundary
\(t\) of epoch \(e\), let \(N_{e,t}:=|\mathcal D_{e-1}|+t\) denote the current
dataset size and define the running radius
\[
\beta_e(t)
:=
c\Bigl(\log \mathcal N\bigl(2^{-\lceil\log_2(N_{e,t}+2)\rceil};\Xi\bigr)
+\log\bigl(2t(t{+}1)/\delta_e\bigr)+mB_0\Bigr),
\]
which is nondecreasing in \(t\) and satisfies \(\beta_e(t)\ge\beta_e\) (the
dyadic scale is finer and the per-boundary budget smaller) as well
as \(\beta_e(t)\le\beta_T\) for every reachable \(t\), by the definition of
\(\beta_T\) in Appendix~\ref{app:alg-structure}. Within epoch \(e\), condition
on the epoch-start \(\sigma\)-field. The dataset \(\mathcal D_{e-1}\) is then
fixed, so the dyadic covering net at scale
\(2^{-\lceil\log_2(N_{e,t}+2)\rceil}\) is deterministic for every \(t\), and
the epoch's data-collection episodes are the only remaining randomness. For
each \(\xi\) in the boundary-\(t\) net, the conditional Hellinger-affinity
supermartingale over the epoch's episodes (as in part (i) of
Lemma~\ref{lem:conc}) bounds the epoch-local gap
\(G_t(\xi):=L_t(\xi)-L_t(\xi^*)-\bigl(L_{e-1}(\xi)-L_{e-1}(\xi^*)\bigr)\) by
\(2u_t\) with conditional failure probability \(e^{-u_t}\); a union bound over
the net with budget \(\delta_e/(2t(t{+}1))\) at boundary \(t\), integrated over
the conditioning, sums to at most \(\delta_e/2\) per epoch. Combining with the
uniform epoch-boundary bound \(L_{e-1}(\xi)-L_{e-1}(\xi^*)\le\beta_e/2\) of
Lemma~\ref{lem:conc}(i) and the net-extension step, on the resulting event, for
every epoch \(e\) and every episode \(t\) within it,
\(L_t(\hat\xi_t)-L_t(\xi^*)\le\beta_e/2+2u_t+2B_0\le\beta_e/2+\beta_e(t)/8\),
the last inequality because
\(2u_t+2B_0\le 2\bigl(\log|\mathcal N_t|+\log(2t(t{+}1)/\delta_e)+mB_0\bigr)
\le(2/c)\,\beta_e(t)\le\beta_e(t)/8\) for \(c\ge16\) (the net size is counted
by the covering term of \(\beta_e(t)\), and \(B_0\le mB_0\)). The termination
test of Algorithm~\ref{alg:omle} uses the threshold
\(\tau_e(t):=2\beta_e(t)+C_0B_0\log(2t(t{+}1)/\delta_e)\), where \(C_0\) is the
absolute constant from \eqref{eq:cap-freedman} below; we fix \(c\ge16\)
throughout (as in the proof of Lemma~\ref{lem:conc}). Write
\(\mathrm{KL}_e := \mathrm{KL}(P^{\pi_e}_{\xi^*}\,\|\,P^{\pi_e}_{\xi_e})\) and
\(s_e := \sum_{h=1}^H\E_{\pi_e,\xi^*}\bigl[\|J^{\xi_e,\pi_e}_hq^{\xi^*}_{h-1}-J^{\xi^*,\pi_e}_hq^{\xi^*}_{h-1}\|_1^2\bigr]\)
(this is exactly the stepwise error \(\mathcal E_e^2\) of
Definition~\ref{def:errors}), so that \(\mathrm{KL}_e\ge c_{\rm KL}s_e\) by the
first clause of Assumption~\ref{ass:reg}(e), and
\(\Delta_e^2\le Hs_e\) by Cauchy--Schwarz over \(h\).

\begin{lemma}[Per-epoch error cap]\label{lem:epoch-cap}
With probability at least \(1-\delta_e\) over epoch \(e\)'s data, the number of
data-collection episodes \(n_e\) of epoch \(e\), when \(n_e\ge1\), satisfies
\[
n_e\,\mathrm{KL}_e \;\le\; C\bigl(\beta_e(n_e) + B_0\log\bigl(2n_e(n_e{+}1)/\delta_e\bigr)\bigr) \;=:\; \tilde\beta_e
\;=\; O(B_0\,\beta_T),
\]
for an absolute constant \(C\), however the epoch ends (by the test, the
cap, or the horizon). The last estimate uses \(n_e\le T\), \(\beta_e(n_e)\le\beta_T\),
and \(\log T\le\beta_T/c\) (both by the definition of \(\beta_T\)), so that the
additive logarithm is absorbed at the cost of a factor depending on \(B_0\). Consequently
\(n_e s_e\le\tilde\beta_e/c_{\rm KL}\) and
\(n_e\Delta_e^2\le H\tilde\beta_e/c_{\rm KL}\).
\end{lemma}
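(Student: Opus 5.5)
The plan is to compare two facts about the epoch's log-likelihood trajectory. First, the test did not fire before episode \(n_e\). Second, a concentration inequality forces the deployed model \(\xi_e\) to lose likelihood relative to \(\xi^*\) at rate \(\mathrm{KL}_e\) per episode. Throughout, condition on the epoch-start \(\sigma\)-field, so that \(\pi_e\), \(\xi_e\) and \(\mathcal D_{e-1}\) are fixed, and work on the event constructed at the start of this subsection. On that event we have \(L_t(\hat\xi_t)-L_t(\xi^*)\le\beta_e/2+\beta_e(t)/8\) for all \(t\), and Lemma~\ref{lem:conc} holds at the epoch boundary. The epoch-local increments for the single fixed model \(\xi_e\) are
\[
Z_i:=\log\frac{P^{\pi_e}_{\xi_e}(\tau^{(e)}_i)}{P^{\pi_e}_{\xi^*}(\tau^{(e)}_i)}+\mathrm{KL}_e .
\]
By Assumption~\ref{ass:reg}(f) every data episode is drawn from \(P^{\pi_e}_{\xi^*}\), so these are i.i.d.\ given the epoch start. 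They have mean zero, satisfy \(|Z_i|\le2B_0\), and have variance at most \(C_{\rm var}\mathrm{KL}_e\), exactly as in \eqref{eq:conc-variance}.

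\textbf{Step 1 (Freedman at every boundary).} I would apply Freedman's inequality at each boundary \(t\) with budget \(\delta_e/(2t(t{+}1))\) and take a union bound over \(t\). The budgets sum to \(\delta_e/2\), which together with the \(\delta_e/2\) of the event above gives total failure probability \(\delta_e\). This yields, simultaneously for all \(t\),
\begin{equation}\label{eq:cap-freedman}
L_t(\xi_e)-L_t(\xi^*)-\bigl(L_{e-1}(\xi_e)-L_{e-1}(\xi^*)\bigr)\le -\tfrac12\,t\,\mathrm{KL}_e+C_0B_0\log\bigl(2t(t{+}1)/\delta_e\bigr).
\end{equation}
No covering is needed here because \(\xi_e\) is measurable with respect to the epoch start. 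Since \(\xi_e\in\mathcal C_{e-1}\), Lemma~\ref{lem:conc}(i) bounds the bracketed epoch-start term by \(\beta_e/2\).

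\textbf{Step 2 (test not firing gives a lower bound).} For every \(t<n_e\), and also at \(t=n_e\) when the epoch ends by the cap or the horizon, the test did not fire. Hence
\[
L_t(\xi_e)\ge L_t(\hat\xi_t)-\tau_e(t)\ge L_t(\xi^*)-\tau_e(t).
\]
Combining this with \eqref{eq:cap-freedman} at \(t=n_e\) gives
\[
\tfrac12 n_e\mathrm{KL}_e\le\tau_e(n_e)+\beta_e/2+C_0B_0\log\bigl(2n_e(n_e{+}1)/\delta_e\bigr),
\]
which is of the claimed form \(\tilde\beta_e\) because \(\beta_e\le\beta_e(n_e)\).

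If the epoch ends by the test at \(t=n_e\), apply the same argument at \(t=n_e-1\) to get \((n_e-1)\mathrm{KL}_e\lesssim\tilde\beta_e\). The final episode then adds \(\mathrm{KL}_e\le B_0\), which is absorbed into the \(B_0\log\) term. The case \(n_e=1\) is trivial for the same reason.

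\textbf{Step 3 (consequences).} I would use \(\beta_e(n_e)\le\beta_T\), \(\log(2n_e(n_e+1)/\delta_e)\le5\log(2T)+\log(1/\delta)\le\beta_T/c\), and \(B_0\ge1\) to get \(\tilde\beta_e=O(B_0\beta_T)\). The first clause of Assumption~\ref{ass:reg}(e) gives \(n_es_e\le n_e\mathrm{KL}_e/c_{\rm KL}\). Cauchy--Schwarz gives \(\Delta_e^2\le Hs_e\).

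\textbf{Main obstacle.} The delicate point is the uniform-in-\(t\) statement under an adaptive stopping time: the stopping index \(n_e\) is itself determined by the test. Applying the union over boundaries \(t\) before stopping handles this, because the per-boundary bound then holds at whatever \(n_e\) is realized. A second point to check is that the Freedman constant \(C_0\) matches the one appearing in \(\tau_e(t)\). The variance-to-KL bound for bounded likelihood ratios, \(\E[(\log\text{ratio})^2]\lesssim B_0\,\mathrm{KL}\), I would take as the standard fact already invoked in Lemma~\ref{lem:conc}.
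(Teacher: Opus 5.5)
Your proposal is correct and matches the paper's proof step for step. The shared steps are these.
\begin{itemize}
\item Freedman's inequality for the fixed pair $(\pi_e,\xi_e)$, with a union bound over boundaries $t$ at budget $\delta_e/(2t(t{+}1))$.
\item Lemma~\ref{lem:conc}(i) to bound the pre-epoch gap from above.
\item Non-firing of the test at the last unfired episode, either $n_e$ or $n_e-1$ with the final episode's $\mathrm{KL}_e\le B_0$ absorbed, combined with $L_t(\hat\xi_t)\ge L_t(\xi^*)$.
\item Assumption~\ref{ass:reg}(e) and Cauchy--Schwarz for the consequences.
\end{itemize}
One attribution is off: the upper bound on the pre-epoch gap comes from the uniformity of Lemma~\ref{lem:conc}(i) over $\Xi$, not from $\xi_e\in\mathcal C_{e-1}$, which gives only the unneeded lower bound. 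Also, the strengthened bound on $L_t(\hat\xi_t)-L_t(\xi^*)$ that you invoke is never used.
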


\begin{proof}
The pair \((\pi_e,\xi_e)\) is fixed at the start of the epoch (measurable with
respect to the pre-epoch \(\sigma\)-field), and after warm-up the
data-collection episodes are i.i.d.\ with law \(P^{\pi_e}_{\xi^*}\)
(Assumption~\ref{ass:reg}(f)). Let
\(Z_s := \log\bigl(P_{\xi_e}/P_{\xi^*}\bigr)(\tau_s)+\mathrm{KL}_e\) over the
epoch's data-collection episodes \(s\); as in Lemma~\ref{lem:conc}, \((Z_s)\)
is a martingale difference sequence with \(|Z_s|\le 2B_0\) and conditional
variance at most \(O(B_0)\,\mathrm{KL}_e\) (the variance bound
\eqref{eq:conc-variance}, using \(B_0\ge1\)). Freedman's inequality with a union bound over the episodes \(s\) of the epoch
(budget \(\delta_e/(2s(s{+}1))\) at episode \(s\), summing to \(\delta_e/2\))
gives, with probability \(\ge 1-\delta_e/2\), simultaneously for all \(s\),
\begin{equation}
\label{eq:cap-freedman}
G_s := \sum_{s'\le s}\log\frac{P_{\xi_e}(\tau_{s'})}{P_{\xi^*}(\tau_{s'})}
\;\le\; -\tfrac12\,s\,\mathrm{KL}_e + C_0B_0\log\bigl(2s(s{+}1)/\delta_e\bigr),
\end{equation}
where \(C_0\) is the absolute constant used in the termination threshold
\(\tau_e(t)\) (taken large enough to cover both tails).
Let \(g_{\rm pre}:=L_{\rm pre}(\xi_e)-L_{\rm pre}(\xi^*)\) denote the pre-epoch
cumulative gap. At selection, \(\xi_e\in\mathcal C_{e-1}\) and
\(L_{\rm pre}(\hat\xi)\ge L_{\rm pre}(\xi^*)\) give \(g_{\rm pre}\ge-\beta_e\);
in the other direction, \(L_{\rm pre}(\xi_e)\le L_{\rm pre}(\hat\xi)\) and
Lemma~\ref{lem:conc}(i) at the epoch boundary give
\(g_{\rm pre}\le\beta_e/2\le\beta_e(n)/2\) for every \(n\ge1\). Consider the
last episode \(n\) at which the termination test had not yet fired (\(n=n_e\)
if the test never fired, whether the epoch exhausted its cap or was truncated
by the horizon; \(n=n_e-1\ge n_e/2\) if the test fired with
\(n_e\ge2\); for \(n_e=1\) the claim is
trivial since \(\mathrm{KL}_e\le2B_0\le\beta_e\), using \(c\ge16\)). Not having fired means
\(L_n(\xi_e)\ge L_n(\hat\xi_n)-\tau_e(n)\ge L_n(\xi^*)-\tau_e(n)\), i.e.\
\(g_{\rm pre}+G_n\ge-\tau_e(n)\), whence
\(G_n\ge-\tau_e(n)-\beta_e(n)/2\). Combining with \eqref{eq:cap-freedman}:
\(\tfrac12 n\,\mathrm{KL}_e\le\tau_e(n)+\tfrac12\beta_e(n)+C_0B_0\log(2n(n{+}1)/\delta_e)
\le \tfrac52\beta_e(n)+2C_0B_0\log(2n(n{+}1)/\delta_e)\), and
\(n_e\mathrm{KL}_e\le 2n\,\mathrm{KL}_e+2B_0\) yields the claim after adjusting
the absolute constant. The consequences follow from
Assumption~\ref{ass:reg}(e) and \(\Delta_e^2\le Hs_e\).
\end{proof}

\begin{lemma}[Switch count]\label{lem:switch-count}
Let \(S_T\) denote the number of epochs started by time \(T\), and let
\(N_{\rm term}\) be the number ended by the termination test. On the same
event, augmented by the lower-tail analogue of \eqref{eq:cap-freedman} (same
per-episode union budget, remaining \(\delta_e/2\)),
\[
S_T \;\le\; \log_2 (2T) + N_{\rm term} + 1,
\]
and
\[
N_{\rm term}
\;\le\;
\frac{32}{3c\,(1+mB_0)}\sum_{e} n_e\,\mathrm{KL}_e
\;\le\;
\frac{32B_0}{3c\,(1+mB_0)}\sqrt{T\,\textstyle\sum_{e}n_e\Delta_e^2}
\;=\;
\tilde O\!\left(\frac{B_0}{1+mB_0}\sqrt{d_E\,\bar\beta_T\,T}\right).
\]
Consequently the total warm-up cost satisfies, with no condition relating
\(m\), \(B_0\), and \(T\),
\((m-1)H\,N_{\rm term}\le\tilde O\bigl(H\sqrt{d_E\bar\beta_T T}\bigr)\) and
\((m-1)H\,S_T\le \tilde O\bigl(mH\log T + H\sqrt{d_E\bar\beta_T T}\bigr)\).
\end{lemma}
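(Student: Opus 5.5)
The proof has three parts: count epoch endings by type, then bound the number of statistical terminations by the evidence each one requires, then convert that evidence into operator errors.

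\textbf{Counting epoch endings.} Every started epoch ends in exactly one of three ways. It exhausts its cap, it is stopped by the termination test, or it is the final epoch truncated by the horizon. A cap-hit at index \(e\) requires \(2^e\le T\), so at most \(\log_2(2T)\) epochs end by cap. At most one epoch is truncated by the horizon. This gives \(S_T\le\log_2(2T)+N_{\rm term}+1\) directly.

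\textbf{Lower bound on evidence at a firing.} Take an epoch that ends by the test at episode \(n_e\). Firing means \(L_{n_e}(\xi_e)<L_{n_e}(\hat\xi_{n_e})-\tau_e(n_e)\). On the strengthened event, \(L_{n_e}(\hat\xi_{n_e})-L_{n_e}(\xi^*)\le\beta_e/2+\beta_e(n_e)/8\). At selection, \(g_{\rm pre}\ge-\beta_e\ge-\beta_e(n_e)\). Combining these, the epoch-local log-likelihood gap satisfies
\[
G_{n_e}\le -\tau_e(n_e)+\tfrac{13}{8}\beta_e(n_e)\le -\tfrac38\beta_e(n_e)-C_0B_0\log\bigl(2n_e(n_e+1)/\delta_e\bigr).
\]
The lower-tail Freedman bound, the analogue of \eqref{eq:cap-freedman} with the remaining \(\delta_e/2\) budget, gives \(G_{n_e}\ge -\tfrac32 n_e\mathrm{KL}_e-C_0B_0\log(\cdot)\). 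The logarithmic terms cancel, leaving
\[
n_e\mathrm{KL}_e\ge\tfrac14\beta_e(n_e)\ge\tfrac{c}{4}\,mB_0\ge\tfrac{3c}{32}(1+mB_0)
\]
after absorbing constants, using \(mB_0\ge1\) so that \(mB_0\ge(1+mB_0)/2\). Summing over terminated epochs yields the first displayed inequality for \(N_{\rm term}\).

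\textbf{Converting evidence to operator error.} Next I bound \(\mathrm{KL}_e\) by operator error. I use \(\mathrm{KL}\le B_0\,\mathrm{TV}\)-type control from bounded log-ratios: a \(\chi^2\)/bounded-ratio argument gives \(\mathrm{KL}_e\le O(B_0)\,\|P^{\pi_e}_{\xi^*}-P^{\pi_e}_{\xi_e}\|_1\). The telescoping in Lemma~\ref{lem:simulation} bounds the \(\ell_1\) distance by \(\Delta_e\), so \(\mathrm{KL}_e\lesssim B_0\Delta_e\). Cauchy--Schwarz with \(\sum_e n_e\le T\) then gives
\[
\sum_e n_e\mathrm{KL}_e\le B_0\sqrt{T\sum_e n_e\Delta_e^2}.
\]
Finally, \eqref{eq:main-eluder-total} (Lemma~\ref{lem:eluder-total}) supplies \(\sum_e n_e\Delta_e^2=\tilde O(d_E\bar\beta_T)\), which is the last equality in the statement.

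\textbf{Warm-up cost.} Multiplying \(N_{\rm term}\) by \((m-1)H\) gives
\[
(m-1)H\,N_{\rm term}\le\frac{(m-1)B_0}{1+mB_0}\,\tilde O\bigl(H\sqrt{d_E\bar\beta_TT}\bigr)\le \tilde O\bigl(H\sqrt{d_E\bar\beta_TT}\bigr),
\]
since \((m-1)B_0\le 1+mB_0\). This is where the \(mB_0\) floor in \(\beta_e\) matters. The bound on \((m-1)H\,S_T\) then follows from the bound on \(S_T\).

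\textbf{Expected obstacle.} The main obstacle is the bound \(\mathrm{KL}_e\lesssim B_0\Delta_e\). The log-ratio bound in Assumption~\ref{ass:reg}(c) should give \(\mathrm{KL}\le e^{B_0}\)-free control via \(\mathrm{KL}\le\sum p\log(p/q)\le B_0\|p-q\|_1\)-type inequalities. If that fails, I will instead use \(\mathrm{KL}\le\chi^2\) with positivity, or absorb an \(e^{B_0}\) constant into \(C\). A second point needs checking: the circularity that Lemma~\ref{lem:eluder-total} itself uses Lemma~\ref{lem:epoch-cap} but not this lemma. If so, there is no loop.
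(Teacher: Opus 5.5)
Your proposal is correct and follows the paper's proof essentially step for step: the same three-way classification of epoch endings, the same firing inequality $G_{n_e}<-\tau_e(n_e)+\tfrac{13}{8}\beta_e(n_e)$ played against a lower-tail Freedman bound, the same $\mathrm{KL}_e\le B_0\Delta_e$ plus Cauchy--Schwarz, and the same appeal to Lemma~\ref{lem:eluder-total}, which indeed does not use this lemma. Your constants differ from the paper's only harmlessly ($\tfrac32$ in the lower tail instead of $2$, and $mB_0\ge1$ instead of $\log(1/\delta_e)\ge\tfrac12$ to reach $\tfrac{3c}{32}(1+mB_0)$), and the step you flagged holds with exactly the constant $B_0$ and needs no $\chi^2$ or $e^{B_0}$ fallback, since $\mathrm{KL}(P\|Q)\le\mathrm{KL}(P\|Q)+\mathrm{KL}(Q\|P)=\sum_x\bigl(P(x)-Q(x)\bigr)\log\frac{P(x)}{Q(x)}\le B_0\|P-Q\|_1$.
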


\begin{proof}
Scheduled completions number at most \(\log_2(2T)\) (Appendix~\ref{app:alg-structure}), and at most
one epoch is truncated by the horizon. Consider an epoch \(e\) ended by the
termination test at episode \(n=n_e\). At the firing time,
\(L_n(\xi_e)<L_n(\hat\xi_n)-\tau_e(n)\le
L_n(\xi^*)+\beta_e/2+\beta_e(n)/8-\tau_e(n)\le
L_n(\xi^*)+\tfrac58\beta_e(n)-\tau_e(n)\) on
the strengthened confidence event, using \(\beta_e\le\beta_e(n)\). Subtracting
the pre-epoch gap, which
satisfies \(g_{\rm pre}\ge-\beta_e\ge-\beta_e(n)\), the epoch-local gap obeys
\(G_{n}<-\tau_e(n)+\tfrac{13}{8}\beta_e(n)\). The lower-tail Freedman bound for the
same martingale (same union budget) gives
\(G_n\ge-2n\,\mathrm{KL}_e-C_0B_0\log(2n(n{+}1)/\delta_e)\) for all \(n\)
simultaneously with probability \(\ge1-\delta_e/2\); therefore
\[
2n_e\mathrm{KL}_e
\;\ge\;
\tau_e(n)-\tfrac{13}{8}\beta_e(n)-C_0B_0\log(2n(n{+}1)/\delta_e)
\;=\;
\tfrac38\beta_e(n) ,
\]
by the definition of \(\tau_e\), so \(n_e\mathrm{KL}_e\ge\tfrac{3}{16}\beta_e(n)\). Since
\(\log(1/\delta_e)\ge\log 2\ge\tfrac12\) and \(\beta_e(n)\) carries the
\(mB_0\) floor, \(\beta_e(n)\ge(c/2)(1+mB_0)\), so each terminated epoch contributes
\(n_e\mathrm{KL}_e\ge\tfrac{3c}{32}(1+mB_0)\); summing over terminated
epochs gives the first inequality. For the second, bounded log-ratios give
\(\mathrm{KL}_e\le 2B_0\,\mathrm{TV}(P^{\pi_e}_{\xi^*},P^{\pi_e}_{\xi_e})
\le B_0\,\Delta_e\)
(total variation of the trajectory laws is at most half the sum of one-step
prediction errors, by the telescoping in the proof of
Lemma~\ref{lem:simulation}), so by Cauchy--Schwarz
\(\sum_en_e\mathrm{KL}_e\le B_0\sqrt{\sum_en_e}\sqrt{\sum_en_e\Delta_e^2}
\le B_0\sqrt{T\sum_en_e\Delta_e^2}\), using \(\sum_en_e\le T\). The final estimate substitutes
\(\sum_en_e\Delta_e^2=\tilde O(d_E\bar\beta_T)\) from
Lemma~\ref{lem:eluder-total}, whose proof does not use this lemma. The warm-up
consequences follow by multiplying by \((m-1)H\) and using
\((m-1)B_0/(1+mB_0)\le 1\).
\end{proof}

\subsection{The Role of the Posterior-Lipschitz Transport Term}
\label{sec:gamma}

\begin{remark}[Role of the transport term]\label{rem:transport-role}
Because the adversary aggregation operator \(G^{\Phi,\pi}_h\) depends on the
deployed policy, one might expect the analysis to pay a cost for transporting
adversary-aggregation errors from historical policies to the current one.
Under Assumption~\ref{ass:reg}(e), which imposes the KL-to-operator
conversion on the \emph{joint} operators, no such cost arises. Past consistency
is established directly at the data-collection policies, and the query shift to
\(\pi_e\) is handled by the capped summability argument, so the radius
\(\bar\beta_T=H\beta_T\) carries no transport term. The transport viewpoint
remains instructive for two reasons. First, it quantifies what changes if
Assumption~\ref{ass:reg}(e) is weakened to a world-channel-only conversion, in
which case adversary-aggregation errors must be compared across deployed
policies and each comparison costs \(O(H^2)\) at the aggregate squared-error
scale (Lemma~\ref{lem:transport} below). Second, it is the lens through which
the flat-batching analysis of \citet{sang2025pomg} is best understood
(Appendix~\ref{app:prior_work}). With \(K=\Theta(\sqrt{d_ET})\) deployed
policies, the accumulated transport term grows polynomially in \(T\) and
obstructs a direct \(\tilde O(\sqrt T)\) guarantee by that route.
\end{remark}

\begin{lemma}[Posterior-Lipschitz transport]\label{lem:transport}
Suppose Assumption~\ref{ass:pl} holds. Fix two learner policies \(\pi,\nu\), an
adversary model \(\Phi\), and a step \(h\).

Let \(q_h^{\rm mid}\in\mathcal V\) denote any
nonnegative intermediate predictive vector of the \(B\)-indexed form
produced by the world-channel update \(W_h^\theta\), before the adversary action is
aggregated by \(G_h^{\Phi,\pi}\). 
Then
\[
\left\|
\bigl(G_h^{\Phi,\pi}-G_h^{\Phi^*,\pi}\bigr)q_h^{\rm mid}
-
\bigl(G_h^{\Phi,\nu}-G_h^{\Phi^*,\nu}\bigr)q_h^{\rm mid}
\right\|_1
\le
C_{\rm tr}\,\|q_h^{\rm mid}\|_1 ,
\]
where \(C_{\rm tr}\) depends on the posterior-Lipschitz constant \(L\) but not on
\(T\). Consequently, at the aggregate squared-error scale,
\[
\left(
\sum_{h=1}^H
\left\|
\bigl(G_h^{\Phi,\pi}-G_h^{\Phi^*,\pi}\bigr)q_h^{\rm mid}
-
\bigl(G_h^{\Phi,\nu}-G_h^{\Phi^*,\nu}\bigr)q_h^{\rm mid}
\right\|_1
\right)^2
\le
C_{\rm tr}^2\, H
\sum_{h=1}^H \|q_h^{\rm mid}\|_1^2 .
\]
\end{lemma}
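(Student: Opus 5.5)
The plan is to work coordinatewise from the explicit form of the aggregation operator in \eqref{eqGdef}. For any $u=(u_b)_{b\in B}\in\mathcal V$, the output coordinate of $G^{\Phi,\pi}_h u$ at an augmented state $\bar s=(s',\tau^A_h,(\eta^B_h,b))$ is $g^\Phi_h(b\mid\eta^B_h,\pi)\,[u_b]_{\bar s}$. Hence the difference of differences in the statement has coordinates
\[
\Bigl(\bigl[g^\Phi_h(b\mid\eta^B_h,\pi)-g^{\Phi^*}_h(b\mid\eta^B_h,\pi)\bigr]-\bigl[g^\Phi_h(b\mid\eta^B_h,\nu)-g^{\Phi^*}_h(b\mid\eta^B_h,\nu)\bigr]\Bigr)[q^{\rm mid}_{h,b}]_{\bar s}.
\]
The first step is the triangle inequality, which splits this into the $\Phi$-part $(g^\Phi_h(\cdot\mid\cdot,\pi)-g^\Phi_h(\cdot\mid\cdot,\nu))$ and the $\Phi^*$-part. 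Each part is then bounded separately.

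For a fixed response model, I group the output coordinates by the decision history $\eta^B_h$ (together with the remaining indices $s',\tau^A_h$). Summing absolute values over $b$ gives at most
\[
\max_b [q^{\rm mid}_{h,b}]\cdot\|g_h(\cdot\mid\eta^B_h,\pi^{1:m})-g_h(\cdot\mid\eta^B_h,\nu^{1:m})\|_1,
\]
where $\pi^{1:m}=(\pi,\ldots,\pi)$ by the stationarity relation in Assumption~\ref{ass:pl}. A cleaner bound is the crude coordinatewise one: each weight difference is at most $\|g(\cdot\mid\eta,\pi)-g(\cdot\mid\eta,\nu)\|_1$. Assumption~\ref{ass:pl} bounds this, uniformly over $\theta$, by $L\,\|S^{\mathrm{ref},\theta}_{\tau_B,h}(\pi)-S^{\mathrm{ref},\theta}_{\tau_B,h}(\nu)\|_1\le 2L$, since both posterior-predictive policies in \eqref{eq:sref-def} are probability vectors over $A$. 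Summing over all coordinates of the nonnegative $q^{\rm mid}_h$ then gives a contribution of at most $2L\|q^{\rm mid}_h\|_{1,\mathcal V}$ from each of the $\Phi$- and $\Phi^*$-parts. This yields $C_{\rm tr}=4L$. Separately, one always has the trivial bound $C_{\rm tr}\le4$, because every weight lies in $[0,1]$. So I would take $C_{\rm tr}=4\min(L,1)$, which is independent of $T$ as required. One point needs care: the first-line bound is in terms of $\|q^{\rm mid}_h\|_1$, which I will read as the $\ell_1$ norm on $\mathcal V$, consistent with Theorem~\ref{thm:decomp}.

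The aggregate statement then follows directly. Summing the per-step bound over $h$ and applying Cauchy--Schwarz in the form $(\sum_{h=1}^H x_h)^2\le H\sum_h x_h^2$ gives the displayed inequality with constant $C_{\rm tr}^2H$.

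The main obstacle is conceptual rather than computational. The posterior-Lipschitz condition is stated for the reaction rule $g_h(\cdot\mid\tau_B,\pi^{1:m})$, so I must check that both $g^\Phi$ and $g^{\Phi^*}$ are responses of reaction rules in the class covered by the assumption. If the assumption only applies to the true adversary $\Phi^*$, the $\Phi$-part still obeys the trivial bound $2$. That would only change $C_{\rm tr}$ to $2+2L$, which is still independent of $T$, so I would state the constant in this robust form to avoid relying on model-class membership.
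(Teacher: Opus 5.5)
Your proposal is correct and follows essentially the same route as the paper: split the difference of differences by the triangle inequality into a $\Phi$-part and a $\Phi^*$-part, and bound each by $\sup_{\eta^B}\|g_h(\cdot\mid\eta^B,\pi)-g_h(\cdot\mid\eta^B,\nu)\|_1\,\|q^{\rm mid}_h\|_1\le 2L\|q^{\rm mid}_h\|_1$. That bound comes from Assumption~\ref{ass:pl} and the fact that the reference posterior-predictives are probability vectors, and it yields $C_{\rm tr}=4L$; Cauchy--Schwarz over $h$ then gives the aggregate form. Your extra remarks are valid refinements the paper does not make: the $L$-free trivial bound from response weights lying in $[0,1]$, and the robust constant $2+2L$ if the assumption is granted only for the true reaction rule (the paper implicitly applies it to every $\Phi\in\Psi$).
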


\begin{proof}
The aggregation operator \(G_h^{\Phi,\pi}\) differs across policies only through
the adversary response probabilities \(g_h^\Phi(\cdot\mid\eta^B,\pi)\). For any
intermediate vector \(q_h^{\rm mid}\), linearity of \(G_h^{\Phi,\pi}\) and the
definition of the \(\ell_1\) operator norm give
\[
\left\|
\bigl(G_h^{\Phi,\pi}-G_h^{\Phi,\nu}\bigr)q_h^{\rm mid}
\right\|_1
\le
\sup_{\eta^B}
\left\|
g_h^\Phi(\cdot\mid\eta^B,\pi)
-
g_h^\Phi(\cdot\mid\eta^B,\nu)
\right\|_1
\cdot
\|q_h^{\rm mid}\|_1 .
\]
By Assumption~\ref{ass:pl}, uniformly over \(\theta\in\Theta\),
\[
\left\|
g_h^\Phi(\cdot\mid\eta^B,\pi)
-
g_h^\Phi(\cdot\mid\eta^B,\nu)
\right\|_1
\le
L
\left\|
S_{\eta^B,h}^{\rm ref,\theta}(\pi)
-
S_{\eta^B,h}^{\rm ref,\theta}(\nu)
\right\|_1 .
\]
The two posterior-predictive learner action distributions are probability
distributions, so their \(\ell_1\) distance is at most \(2\). Hence
\[
\left\|
\bigl(G_h^{\Phi,\pi}-G_h^{\Phi,\nu}\bigr)q_h^{\rm mid}
\right\|_1
\le
2L\|q_h^{\rm mid}\|_1 .
\]
The same bound holds with \(\Phi^*\) in place of \(\Phi\). Applying the triangle
inequality,
\begin{align*}
&\left\|
\bigl(G_h^{\Phi,\pi}-G_h^{\Phi^*,\pi}\bigr)q_h^{\rm mid}
-
\bigl(G_h^{\Phi,\nu}-G_h^{\Phi^*,\nu}\bigr)q_h^{\rm mid}
\right\|_1 \\
&\qquad\le
\left\|
\bigl(G_h^{\Phi,\pi}-G_h^{\Phi,\nu}\bigr)q_h^{\rm mid}
\right\|_1
+
\left\|
\bigl(G_h^{\Phi^*,\pi}-G_h^{\Phi^*,\nu}\bigr)q_h^{\rm mid}
\right\|_1
\le
4L\|q_h^{\rm mid}\|_1 .
\end{align*}
Absorbing constants into \(C_{\rm tr}\) proves the first claim. The aggregate
squared bound follows from Cauchy--Schwarz:
\[
\left(\sum_{h=1}^H a_h\right)^2
\le
H\sum_{h=1}^H a_h^2,
\]
with
\[
a_h
=
\left\|
\bigl(G_h^{\Phi,\pi}-G_h^{\Phi^*,\pi}\bigr)q_h^{\rm mid}
-
\bigl(G_h^{\Phi,\nu}-G_h^{\Phi^*,\nu}\bigr)q_h^{\rm mid}
\right\|_1 .
\]
\end{proof}

Applied with \(\pi=\pi_e\) and \(\nu=\pi_k\), and using
\(\|q^{\rm mid}_h\|_1\le|B|\) along trajectories of the true model (the
\(B\)-indexed components carry the unit predictive mass once per hypothetical
adversary action), each historical policy contributes at most
\begin{equation}
\label{eq:appendix-gamma-e}
C_{\rm tr}^2H\sum_{h=1}^H\mathbb E\|q^{\rm mid}_h\|_1^2 \;=\; O(H^2)
\end{equation}
at the aggregate squared-error scale (the constant absorbs
\(C_{\rm tr}^2|B|^2\)). After \(S\) deployed policies the accumulated transport
term is therefore
\begin{equation}
\label{eq:appendix-gamma-total}
\Gamma_{S}=O(SH^2),
\end{equation}
which is \(O(H^2\log T)\) for \(S=O(\log T)\) scheduled epochs, but
\(\tilde O(H^2\sqrt{d_ET})\) under a flat \(\sqrt{d_ET}\)-batch schedule;
this is the quantitative content of the comparison in
Appendix~\ref{app:degraded_bound}. We emphasize that none of this enters the
proof of Theorem~\ref{thm:upper}.

\subsection{Eluder Dimension Bound}

We first state the weighted Eluder summability result, then derive the main bound.
The key issue is that $\Delta_e$ is an aggregate over $H$ steps and a trajectory
distribution, not a single scalar function value. We handle this by working with the
\emph{stepwise squared error} $\mathcal{E}_e^2$ as the primary object for the Eluder
argument, then converting back to $\Delta_e$ via Jensen's inequality.

\begin{definition}[Stepwise and channel prediction errors]\label{def:errors}
For epoch $e$, define the \emph{stepwise squared prediction error}
\[
\mathcal{E}_e^2 :=
\sum_{h=1}^H \mathbb{E}_{\tau \sim P^{\pi_e}_{\xi^*}}\!\left[
\|J^{\xi_e,\pi_e}_h q^{\xi^*}_{h-1} - J^{\xi^*,\pi_e}_h q^{\xi^*}_{h-1}\|_1^2
\right],
\]
and recall the \emph{aggregate prediction error} $\Delta_e$
of \eqref{eq:appendix-delta-e}.
By Jensen's inequality (or Cauchy--Schwarz over $h$),
$\Delta_e^2 \leq H\,\mathcal{E}_e^2$. Define also the \emph{channel errors}
\[
\Delta^W_e := \sum_{h=1}^H \mathbb{E}_{\tau \sim P^{\pi_e}_{\xi^*}}\!\left[
\|(W^{\theta_e}_h - W^{\theta^*}_h) q^{\xi^*}_{h-1}\|_{1,\mathcal V}
\right],
\qquad
\Delta^G_e := \sum_{h=1}^H \mathbb{E}_{\tau \sim P^{\pi_e}_{\xi^*}}\!\left[
\|(G^{\Phi_e,\pi_e}_h - G^{\Phi^*,\pi_e}_h) W^{\theta^*}_h q^{\xi^*}_{h-1}\|_1
\right],
\]
which are the values $F^{W}_{\theta_e}(\pi_e)$ and $F^{G}_{\Phi_e}(\pi_e)$ of
the channel classes of Section~\ref{sec:eluder}; by
the channel-triangle inequalities \eqref{eq:channel-triangle}, $\Delta_e\le\Delta^W_e+\Delta^G_e$ and
$\Delta^G_e\le\Delta_e+\Delta^W_e$.
\end{definition}

\begin{lemma}[Capped Weighted Eluder Summability]\label{lem:eluder-summ}
Let $\mathcal{F}$ be a class of functions taking values in $[0,b]$, and suppose
epochs $e=1,\ldots,E$ produce triples $(f_e\in\mathcal F,\,x_e,\,n_e\in\mathbb N)$
with $\sum_e n_e\le 2T$ such that
\emph{(i) past consistency:} $\sum_{k<e} n_k\, f_e(x_k)^2 \leq R$ for every $e$; and
\emph{(ii) per-epoch cap:} $n_e\, f_e(x_e)^2 \leq \tilde\beta$ for every $e$.
Then, with $\alpha := \sqrt{\tilde\beta/(2T)}$,
\[
\sum_{e=1}^{E} n_e\, f_e(x_e)^2
\;\leq\;
40\,\dimE(\mathcal{F},\alpha/2)\,(R+\tilde\beta)\,
\Bigl\lceil\log_2\tfrac{2b}{\alpha}\Bigr\rceil\,\bigl\lceil\log_2 4T\bigr\rceil
\;+\;2\tilde\beta .
\]
Without hypothesis (ii), no bound of this form is possible, since a function
that vanishes on all past queries but is large at its own query satisfies (i)
with any $R\ge0$, yet contributes up to $n_eb^2$
(cf.\ Proposition~\ref{prop:fixed-fails}).
\end{lemma}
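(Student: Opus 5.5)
We follow the standard Russo--Van Roy peeling argument, adapted to weights by replacing each weighted query with copies. Fix the threshold $\alpha=\sqrt{\tilde\beta/(2T)}$. First we dispose of small errors. Epochs with $f_e(x_e)\le\alpha$ contribute at most $\sum_e n_e\alpha^2\le 2T\alpha^2=\tilde\beta$. It therefore suffices to bound the contribution of epochs with $f_e(x_e)>\alpha$.

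We peel these into dyadic levels $L_j=\{e: f_e(x_e)\in(2^{j-1}\alpha,2^j\alpha]\}$ for $j=1,\dots,\lceil\log_2(2b/\alpha)\rceil$. Within a level we also bucket by weight, $n_e\in[2^i,2^{i+1})$, with $i\le\lceil\log_2 4T\rceil$. This accounts for the two logarithmic factors in the statement. Fix a bucket $(j,i)$ and let $\epsilon=2^{j-1}\alpha$, so every $e$ in the bucket has $f_e(x_e)>\epsilon$ and $n_e\approx 2^i$.

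The key counting step bounds the bucket's cardinality $N$. Past consistency gives $\sum_{k<e}n_kf_e(x_k)^2\le R$. Restricted to earlier epochs in the same bucket, each earlier query carries weight at least $2^i$, so $f_e$ has squared mass at most $R/2^i$ on the earlier bucket queries. The classical disjoint-subsequence argument applies: the queries of the bucket can be split greedily into $K$ subsequences such that $x_e$ is $\epsilon$-dependent on each. Each dependence forces one unit $f_e(x_k)^2\ge\epsilon^2$ on that subsequence, giving $K\epsilon^2\le R/2^i$. Combined with the pigeonhole bound $N\le (K+1)\dimE(\mathcal F,\epsilon)$, this yields
\[N\le \bigl(R/(2^i\epsilon^2)+1\bigr)\dimE(\mathcal F,\alpha/2),\]
using monotonicity of $\dimE$ in the scale and $\epsilon\ge\alpha/2$.

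The bucket's contribution is then at most $N\cdot 2^{i+1}\cdot 4\epsilon^2\le 8\dimE\,(R+2^i\epsilon^2)$. The per-epoch cap (ii) supplies $2^i\epsilon^2\le n_ef_e(x_e)^2\le\tilde\beta$ for any member of the bucket, so the contribution is $O(\dimE(R+\tilde\beta))$. Summing over both logarithmic ranges gives the main term with a constant such as $40$. Adding the small-error part $\tilde\beta$, plus one more $\tilde\beta$ of slack for boundary buckets, gives the $+2\tilde\beta$ term.

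The main obstacle is the counting step. The subtlety is making the dependence argument respect weights: a query is $\epsilon$-dependent on a set only through unweighted sums, while past consistency is weighted. Bucketing by $n_e$ resolves this, since within a bucket all weights agree up to a factor of $2$, so weighted mass converts to unweighted mass at cost $2^i$. This is precisely where hypothesis (ii) becomes indispensable: without it, $2^i\epsilon^2$ is unbounded, and the example in Proposition~\ref{prop:fixed-fails} shows the bound fails. The closing remark of the lemma follows from that example directly: a function vanishing on all past queries satisfies (i) for every $R$ yet contributes $n_eb^2$.
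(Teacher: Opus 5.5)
Your proposal is correct and follows the paper's proof essentially step for step. Both discard epochs with $f_e(x_e)\le\alpha$ (total at most $\tilde\beta$), bucket the rest by dyadic value and weight scales, and bound each bucket's size by $\dimE\,(R/(2^i\epsilon^2)+O(1))$ via the greedy disjoint-subsequence argument (converting weighted to unweighted mass at cost $2^i$), then use the cap (ii) to replace $2^i\epsilon^2$ by $\tilde\beta$. The only differences are cosmetic: you anchor the value scales at $\alpha$ rather than at $b$, and you use $+1$ instead of the paper's $+2$ in the count, which gives constants no larger than the stated $40$ and $2\tilde\beta$.
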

\begin{proof}
Epochs with $f_e(x_e)^2\le\tilde\beta/(2T)$ contribute at most
$\sum_e n_e\,\tilde\beta/(2T)\le\tilde\beta$ in total; set them aside. Bucket
the remaining epochs by the dyadic scales of value and weight. Bucket
$(j,\ell)$ contains the epochs with
$f_e(x_e)\in(\varepsilon_{j+1},\varepsilon_j]$, where $\varepsilon_j:=b2^{-j}$
and $j$ ranges over at most $\lceil\log_2(b/\alpha)\rceil$ values before
reaching the floor scale $\alpha$, and $n_e\in[2^\ell,2^{\ell+1})$ with
$\ell\le\log_2 2T$. Fix a bucket, list its epochs in time order, and write $N$ for the bucket
size.

\emph{Claim: $N\le d\,\bigl(R/(2^\ell\varepsilon_{j+1}^2)+2\bigr)$, where
$d:=\dimE(\mathcal F,\varepsilon_{j+1})$ and $\varepsilon_{j+1}\ge\alpha/2$ for
every bucket in range.} Process
the bucket's queries in order, maintaining a collection of disjoint
subsequences, as follows. Append the arriving query to some maintained
subsequence on whose current content it is
$\varepsilon_{j+1}$-\emph{in}dependent if one exists, and otherwise start a new
subsequence. Every query is placed, and by the definition of the Eluder
dimension no subsequence receives more than $d$ elements, so at the end there
are $M\ge\lceil N/d\rceil$ subsequences. Consider the query $x_{e}$ that opened
the $M$-th subsequence. At its arrival it was
$\varepsilon_{j+1}$-\emph{dependent} on each of the $M-1$ previously opened,
disjoint, nonempty subsequences. Let $f:=f_{e}$ be that epoch's function. Since
$f(x_{e})>\varepsilon_{j+1}$, dependence on each such set $S$ forces
$\sum_{z\in S}f(z)^2>\varepsilon_{j+1}^2$, and since every bucket predecessor
carries weight at least $2^\ell$,
$\sum_{k\in S}n_kf(x_k)^2>2^\ell\varepsilon_{j+1}^2$. Summing over the $M-1$
disjoint sets and invoking hypothesis (i) gives
$R\ge(M-1)\,2^\ell\varepsilon_{j+1}^2\ge(N/d-1)\,2^\ell\varepsilon_{j+1}^2$,
which rearranges to the claim.

\emph{Bucket damage.} Each bucket epoch has
$n_ef_e(x_e)^2\le\min\bigl(2^{\ell+1}\varepsilon_j^2,\ \tilde\beta\bigr)
=\min\bigl(8\cdot2^\ell\varepsilon_{j+1}^2,\ \tilde\beta\bigr)$, so the
bucket's total damage is at most
\[
N\cdot\min\bigl(8\cdot2^\ell\varepsilon_{j+1}^2,\tilde\beta\bigr)
\;\le\;
d\Bigl(\frac{R}{2^\ell\varepsilon_{j+1}^2}+2\Bigr)\cdot 8\cdot2^\ell\varepsilon_{j+1}^2
\;\le\;
d\,\bigl(8R+16\tilde\beta\bigr),
\]
using $2^\ell\varepsilon_{j+1}^2\le n_ef_e(x_e)^2\le\tilde\beta$ for any bucket
member in the last step. The value scales run from $b$ down to at least
$\alpha/2$ and the weight scales cover $1\le n_e\le 2T$, so there are at most
$\lceil\log_2(2b/\alpha)\rceil\cdot\lceil\log_2 4T\rceil$ buckets; summing the
per-bucket damage and adding the floor contribution proves the lemma
(constants consolidated).
\end{proof}

\begin{lemma}[Eluder Bound Across All Epochs]\label{lem:eluder-total}
Suppose Assumptions~\ref{ass:pl}, \ref{ass:reveal}, and~\ref{ass:reg} hold, and
condition on the high-probability events of Lemma~\ref{lem:confidence}
(strengthened as in Appendix~\ref{app:cap-switch}) and
Lemma~\ref{lem:epoch-cap} for all epochs. With the channel classes of
Section~\ref{sec:formulation}, $d_E = d^{W}_{\rm agg}+d^{G}_{\rm agg}$
evaluated at scale $1/(2T)$:
\begin{equation}\label{eq:eluder-agg}
\sum_{e} n_e\,\Delta_e^2
\;\leq\;
\tilde{O}\!\left(d_E\,H\beta_T/c_{\rm KL}\right)
\;=\;
\tilde{O}\!\left(d_E\,\bar\beta_T\right).
\end{equation}
Here and below, $\tilde O(\cdot)$ additionally absorbs constants depending on
$B_0$ and $c_{\rm KL}$, as reflected in the constant-dependence statement of
Theorem~\ref{thm:upper}.
\end{lemma}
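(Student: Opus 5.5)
The plan is to run Lemma~\ref{lem:eluder-summ} once for each channel class of Section~\ref{sec:eluder}. The two results are then recombined into a bound on $\sum_e n_e\Delta_e^2$ via the channel-triangle inequalities \eqref{eq:channel-triangle} and $\Delta_e^2\le H\mathcal E_e^2$.

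For the world channel, take $f_e:=F^W_{\theta_e}\in\mathcal F^{W,\xi^*}_{\rm agg}$ and query $x_e:=\pi_e$, so that $f_e(x_e)=\Delta^W_e$.

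Past consistency (hypothesis (i)) comes from Lemma~\ref{lem:conc}. Since $\xi_e\in\mathcal C_{e-1}$, we have $\sum_{k<e}n_k\mathrm{KL}(P^{\pi_k}_{\xi^*}\|P^{\pi_k}_{\xi_e})\le C\beta_e\le C\beta_T$. The world-channel clause of Assumption~\ref{ass:reg}(e), followed by Cauchy--Schwarz over $h$, gives
\[
F^W_{\theta_e}(\pi_k)^2\le \frac{H}{c_{\rm KL}}\,\mathrm{KL}(P^{\pi_k}_{\xi^*}\|P^{\pi_k}_{\xi_e}).
\]
Hence $R_W:=CH\beta_T/c_{\rm KL}$.

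The per-epoch cap (hypothesis (ii)) uses the same clause at the deployed policy together with Lemma~\ref{lem:epoch-cap}: $n_e(\Delta^W_e)^2\le Hn_e\mathrm{KL}_e/c_{\rm KL}\le H\tilde\beta_e/c_{\rm KL}=O(B_0H\beta_T/c_{\rm KL})$.

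Values lie in $[0,b]$ with $b$ polynomial in the problem counts, because predictive states have unit mass and the operators are stochastic-type. The lemma then yields $\sum_e n_e(\Delta^W_e)^2\le\tilde O(d^W_{\rm agg}\,H\beta_T/c_{\rm KL})$, provided the lemma's scale $\alpha/2=\tfrac12\sqrt{\tilde\beta/(2T)}$ is at least $1/(2T)$. It is, since $\tilde\beta\ge1$. Monotonicity of $\dimE$ in the scale therefore lets us replace the dimension by $d^W_{\rm agg}$.

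For the adversary channel, take $f_e:=F^G_{\Phi_e}$ evaluated at $\pi_e$. Consistency and the cap cannot be read off a dedicated KL clause, so we use the derived inequality from Assumption~\ref{ass:reg}(e),
\[
\mathrm{KL}\ge \frac{c_{\rm KL}}{4}\sum_h\mathbb E\bigl[\|(G^{\Phi,\pi}_h-G^{\Phi^*,\pi}_h)W^{\theta^*}_hq\|_1^2\bigr].
\]
This follows from the pointwise channel-triangle inequality combined with both KL clauses. Using it, the same two hypotheses hold with radii inflated by a factor $4$. Note that $F^G_{\Phi}$ depends on the query policy both through the trajectory law and through $G^{\Phi,\pi}$. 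This is harmless, because the class $\mathcal F^{G,\xi^*}_{\rm agg}$ is defined as a function of $\pi$ and its Eluder dimension $d^G_{\rm agg}$ already accounts for this dependence. We obtain $\sum_e n_e(\Delta^G_e)^2\le\tilde O(d^G_{\rm agg}H\beta_T/c_{\rm KL})$.

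Finally, $\Delta_e\le\Delta^W_e+\Delta^G_e$ together with $(a+b)^2\le 2a^2+2b^2$ gives $\sum_e n_e\Delta_e^2\le 2\sum_e n_e(\Delta^W_e)^2+2\sum_e n_e(\Delta^G_e)^2=\tilde O(d_E H\beta_T/c_{\rm KL})$. The summability lemma also needs $\sum_e n_e\le 2T$, which holds. The logarithmic factors $\lceil\log_2(2b/\alpha)\rceil\lceil\log_2 4T\rceil$ are absorbed into $\tilde O$.

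The main obstacle is verifying (i) and (ii) in the form the lemma needs. The KL conversions are stepwise squared quantities, whereas the class values are aggregate $\ell_1$ sums, and the Cauchy--Schwarz step costs exactly the factor $H$ appearing in $\bar\beta_T$. For the adversary channel the delicate point is deriving the squared-error bound from the joint and world clauses without the aggregate-versus-stepwise mismatch compounding. I would first check that the pointwise triangle inequality for each step, squared inside the expectation, suffices.
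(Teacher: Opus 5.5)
Your proposal is correct and matches the paper's proof essentially step for step. Both arguments obtain per-channel past consistency from Lemma~\ref{lem:conc}, the clauses of Assumption~\ref{ass:reg}(e), and Cauchy--Schwarz over $h$; take per-channel caps from Lemma~\ref{lem:epoch-cap}; apply Lemma~\ref{lem:eluder-summ} with $(R,\tilde\beta)$ and $(4R,4\tilde\beta)$ after checking $\alpha/2\ge 1/(2T)$; and recombine through $\Delta_e\le\Delta^W_e+\Delta^G_e$. The only difference is cosmetic: the paper transfers to the adversary channel at the aggregate level ($F^G_{\Phi}\le F_\xi+F^W_\theta$, then $(a+b)^2\le 2a^2+2b^2$), whereas you go through the stepwise bound $\mathrm{KL}\ge (c_{\rm KL}/4)\sum_h\mathbb E[\cdots]$ before Cauchy--Schwarz, and both routes give the same factor $4$.
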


\begin{proof}
We establish past consistency and per-epoch caps for each channel, apply the
capped summability lemma channel by channel, and combine.

\paragraph{Past consistency, per channel.}
Since \(\xi_e\in\mathcal C_{e-1}\), Lemma~\ref{lem:conc} gives
\[
\sum_{k<e}n_k\,
\mathrm{KL}\!\left(P^{\pi_k}_{\xi^*}\,\|\,P^{\pi_k}_{\xi_e}\right)
\le
C\beta_e
\le
C\beta_T .
\]
(The concentration argument is unaffected by the data-dependent epoch lengths,
because the martingale runs over episodes in temporal order, with each
episode's policy adapted to the filtration.) By the two clauses of
Assumption~\ref{ass:reg}(e) and Cauchy--Schwarz over $h$ (as in
Definition~\ref{def:errors}), writing $F_{\xi_e}$, $F^{W}_{\theta_e}$,
$F^{G}_{\Phi_e}$ for the composed and channel aggregate errors of the
optimistic model at past policies,
\[
\sum_{k<e} n_k\,F_{\xi_e}(\pi_k)^2 \;\leq\; \frac{CH\beta_T}{c_{\rm KL}} \;=:\; R ,
\qquad
\sum_{k<e} n_k\,F^{W}_{\theta_e}(\pi_k)^2 \;\leq\; R ,
\]
and, by the channel-triangle transfer \eqref{eq:channel-triangle} together with
$(a+b)^2\le 2a^2+2b^2$,
\[
\sum_{k<e} n_k\,F^{G}_{\Phi_e}(\pi_k)^2
\;\le\;
2\sum_{k<e} n_k F_{\xi_e}(\pi_k)^2 + 2\sum_{k<e} n_k F^{W}_{\theta_e}(\pi_k)^2
\;\le\; 4R .
\]

\paragraph{Per-epoch cap, per channel.}
Lemma~\ref{lem:epoch-cap} covers every data-collection episode of every epoch,
terminated or complete; via the two clauses of Assumption~\ref{ass:reg}(e) and
Cauchy--Schwarz over $h$,
\[
n_e\Delta_e^2
\le
H\tilde\beta_e/c_{\rm KL}
\;\le\;
\tilde O(H\beta_T/c_{\rm KL})
\;=:\;
\tilde\beta ,
\qquad
n_e(\Delta^W_e)^2 \le \tilde\beta,
\qquad
n_e(\Delta^G_e)^2 \le 4\tilde\beta,
\]
the last by the triangle transfer, and the middle estimates using
\(\tilde\beta_e=O(B_0\beta_T)\) with the \(B_0\)-dependent factor absorbed.

\paragraph{Summability, per channel.}
The functions $F^{W}_{\theta_e}$ belong to $\mathcal F^{W,\xi^*}_{\rm agg}$ and
the $F^{G}_{\Phi_e}$ to $\mathcal F^{G,\xi^*}_{\rm agg}$. The world-channel
class takes values in $[0,2|B|H]$, since each $b$-component of $W^\theta_h q$ carries
mass at most one, so $\|W^\theta_h q\|_{1,\mathcal V}\le|B|$ and each
$h$-summand is at most $2|B|$. The adversary-channel class takes values in
$[0,2H]$, since $G^{\Phi,\pi}_h\circ W^{\theta^*}_h = J^{(\theta^*,\Phi),\pi}_h$ is
an observable operator of the model $(\theta^*,\Phi)\in\Xi$, whose image of a
normalized state has mass at most one. The ranges enter
Lemma~\ref{lem:eluder-summ} only through $\lceil\log_2(2b/\alpha)\rceil$, so
the extra $\log|B|$ is absorbed into $\tilde O$. Apply
Lemma~\ref{lem:eluder-summ} to the world channel with $(R,\tilde\beta)$ and to
the adversary channel with $(4R,4\tilde\beta)$; the lemma's floor scale for the
adversary application is $2\alpha$ with $\alpha=\sqrt{\tilde\beta/(2T)}$, and
in both applications the dimension is evaluated at a scale at least
$\alpha/2\ge1/(2T)$. Noting that
$\tilde\beta\ge cH/c_{\rm KL}\ge 2/T$ (taking $c_{\rm KL}\le1$ without loss of
generality, since Assumption~\ref{ass:reg}(e) only weakens as $c_{\rm KL}$
decreases) gives $\alpha/2\ge1/(2T)$, so that by scale monotonicity
$\dimE(\mathcal F^{W}_{\rm agg},\alpha/2)\le d^{W}_{\rm agg}$ and
$\dimE(\mathcal F^{G}_{\rm agg},\alpha/2)\le d^{G}_{\rm agg}$, we obtain
\[
\sum_e n_e\,(\Delta^W_e)^2
\le
\tilde O\bigl(d^{W}_{\rm agg}\,(R+\tilde\beta)\bigr),
\qquad
\sum_e n_e\,(\Delta^G_e)^2
\le
\tilde O\bigl(d^{G}_{\rm agg}\,(R+\tilde\beta)\bigr).
\]

\paragraph{Combining.}
By $\Delta_e\le\Delta^W_e+\Delta^G_e$ and $(a+b)^2\le2a^2+2b^2$,
\[
\sum_e n_e\,\Delta_e^2
\le
2\sum_e n_e(\Delta^W_e)^2+2\sum_e n_e(\Delta^G_e)^2
\le
\tilde O\bigl((d^{W}_{\rm agg}+d^{G}_{\rm agg})\,H\beta_T/c_{\rm KL}\bigr),
\]
which is \eqref{eq:eluder-agg}.
\end{proof}

\subsection{Applying Cauchy--Schwarz and Summing Over Epochs}

Using the channel-sum Eluder dimension $d_E = d^{W}_{\rm agg}+d^{G}_{\rm agg}$,
Lemma~\ref{lem:eluder-total} gives (absorbing constant and problem-dependent
factors into $C_1$, and a $\log H$ into the $\log^2 T$ factor, using
$H\le\mathrm{poly}(T)$):
\[
\sum_{e} n_e\,\Delta_e^2 \;\leq\; C_1\,d_E\,\bar\beta_T\,\log^2 T.
\]
Applying Cauchy--Schwarz with $\sum_{e} n_e \le T$:
\begin{align*}
\sum_{e} n_e \cdot \Delta_e &\leq \sqrt{\left(\sum_{e} n_e\right) \cdot \left(\sum_{e} n_e \cdot \Delta_e^2\right)}
\;\leq\; O\!\left(\sqrt{C_1\,d_E\,T\,\bar\beta_T}\,\log T\right).
\end{align*}
The warm-up cost across all epochs is $(m-1)H\,S_T$, with the number of
deployed policies $S_T$ bounded in Lemma~\ref{lem:switch-count}. Combining with
the factor $H$ from $R_e \le n_e H\Delta_e + (m-1)H$:
\begin{align*}
PR(T) &\le H\sum_{e} n_e\,\Delta_e + (m-1)H\,S_T \\
&\leq C\!\left(H\sqrt{d_E\,\bar\beta_T\,T}\,\log T + mH\,S_T\right),
\end{align*}
which is the bound of Theorem~\ref{thm:upper}.
Lemma~\ref{lem:switch-count} gives, unconditionally,
\[
mH\,S_T
\;\le\;
O(mH\log T) + \tilde O\!\left(\frac{mB_0}{1+mB_0}\,H\sqrt{d_E\bar\beta_T T}\right)
\;\le\;
O(mH\log T) + \tilde O\!\bigl(H\sqrt{\bar\beta_T\,d_E\,T}\bigr),
\]
so the warm-up term is dominated by the leading term plus $mH\log T$, with no
condition relating $m$, $B_0$, and $T$; this is precisely what the $mB_0$
floor in $\beta_e$ ensures. The total failure probability is at most $2\delta$: $\delta$ for the
confidence events (Lemma~\ref{lem:confidence}, strengthened as in
Appendix~\ref{app:cap-switch}) and $\delta$ for the per-epoch events of
Lemmas~\ref{lem:epoch-cap} and~\ref{lem:switch-count}
($\sum_e\delta_e\le\delta$).

\begin{remark}[Sharper bound under aggregate KL conversion]\label{rem:aggregate-kl}
If Assumption~\ref{ass:reg}(e) is strengthened to the aggregate form
\[
\mathrm{KL}(P^\pi_{\xi^*}\|P^\pi_\xi) \;\geq\; c_{\rm agg}\left(\sum_{h=1}^H\mathbb{E}[\|J^{\xi,\pi}_h q^{\xi^*}_{h-1} - J^{\xi^*,\pi}_h q^{\xi^*}_{h-1}\|_1]\right)^{\!2},
\]
then the past-consistency step yields $\sum_{k<e}n_k F_{\xi_e}(\pi_k)^2 \leq C\beta_T/c_{\rm agg}$ directly (no Cauchy--Schwarz factor $H$), and the per-epoch cap of Lemma~\ref{lem:epoch-cap} similarly loses its $H$ factor; strengthening the world-channel clause in the same way transfers the improvement to both channels. The final theorem then holds with the sharper $\bar\beta_T = \beta_T$ in place of $H\beta_T$. This aggregate KL condition holds for tabular/finite classes with $c_{\rm agg}$ absorbing an $H$-dependent constant from the weak-revealing argument.
\end{remark}

\begin{remark}[Stepwise alternative]\label{app:stepwise}
One can instead run the summability argument on per-step channel classes,
whose queries are $(\pi,h,\tau_{h-1},o^A_h,a_h)$, converting back through
$\Delta_e^2\le H\,\mathcal E_e^2$ (Definition~\ref{def:errors}); this costs an
extra $\sqrt H$ in the final regret bound. We do not pursue it. For the
parametric classes of Proposition~\ref{prop:eluder}, the per-step and aggregate
channel dimensions admit the same span-dimension upper bounds
(Lemma~\ref{lem:sign-witness} and the proof of
Proposition~\ref{prop:eluder} in Appendix~\ref{app:eluder-lemma}), so the
aggregate route is never worse.
\end{remark}

\begin{remark}[Time accounting]\label{rem:time} Algorithm~\ref{alg:omle} executes $m-1$ warm-up episodes per epoch in addition to the data-collection episodes. If \(T\) denotes the number of data-collection episodes actually counted in the regret, then the total number of calendar episodes is \(T+O(m\,S_T)\). The regret bound $PR(T)$ is stated in terms of the $T$ data-collection episodes; warm-up episodes contribute the additional $O(mH\,S_T)$ term already included above (which the $mB_0$ floor reduces to $O(mH\log T)$ plus a dominated term). If one wishes to count $T$ as the total calendar budget, one should reduce each epoch's data-collection length by the warm-up length accordingly; this changes constants but not the asymptotic rate.

\end{remark}

\section{Proof of Theorem~\ref{thm:lower} (Lower Bound)}
\label{app:lower}

We prove that any algorithm must incur $\Omega(\sqrt{dT})$ policy regret by embedding the standard $d$-armed stochastic bandit lower bound into a two-step POMG ($H=2$) that fits within the POMG protocol.

\begin{proof}[Proof of Theorem~\ref{thm:lower}]
\leavevmode 
\paragraph{Construction.}
Fix $d \leq T$ and $\Delta = c_0\sqrt{d/T}$. The POMG has:
\begin{itemize}
\item Latent states $S = \{s_1,\ldots,s_d\}$; initial state $s_1$ (the identity of the initial state is irrelevant since step 1 is only a routing step).
\item Learner actions $A = \{a_1,\ldots,a_d\}$; single adversary action $B = \{b_{\text{coop}}\}$ (adversary is trivially memoryless, $m=1$, $L=0$).
\item Horizon $H=2$.
\item \textbf{Step $h=1$:} All states emit the same observation $o_0$. The learner
chooses $a_i$; the state transitions deterministically to $s_i$. Reward $r_1=0$.

\item \textbf{Step $h=2$:} In state $s_i$, the learner observes $o = (i, y)$, where $y \sim \mathrm{Bernoulli}(\mu_i)$. The reward is $r_2(i,y) = y$.
\end{itemize}

Although all states share the same step-1 observation, the initial state is fixed
and known to be \(s_1\), so there is no hidden-state ambiguity at step \(h=1\). 
The first coordinate \(i\) of the step-2 observation \(o=(i,y)\) reveals the latent
state at step \(h=2\); together, this makes the construction fully revealing with
\(\kappa=1\) and a constant signal parameter independent of \(\Delta\).
The reward is the second coordinate \(y\), so the learner both observes the routed
state and receives the corresponding Bernoulli reward sample.

\paragraph{Policy regret equals bandit regret.}
With $H=2$, $m=1$, and a fixed adversary, the expected  policy regret of any learner algorithm against the best fixed policy $\pi^* = a_{i^*}$ satisfies:
\begin{equation}
\label{eq:lower-bandit-regret}
\E[PR(T)]
=
\sum_{t=1}^T \!\left(\tfrac{1}{2}+\Delta - \E[r_2^{(t)}]\right)
=
\Delta \cdot \E\!\left[\sum_{t=1}^T \mathbf{1}\{A_t \neq i^*\}\right],
\end{equation}
which is exactly the expected cumulative regret of the learner against arm $i^*$ in a $d$-armed Bernoulli bandit with gap $\Delta$.

\paragraph{Policy class, ambient model class, and Eluder dimension.}
The learner's policy class for this instance is $\Pi=\{a_1,\ldots,a_d\}$, the
deterministic routing policies; the learner may still randomize its choice
across episodes, and the bandit lower bound below applies to arbitrary
algorithms. Define the ambient model class as $\mathcal{M} = \{\mu \in [1/2,\,
1/2+\Delta]^d\}$, parameterizing the step-2 emission means. The adversary
channel is trivial (a single constant response), so $d^{G}_{\rm agg}=0$ and
$d_E = d^{W}_{\rm agg}$. Since $\Pi$ has $d$ elements, any repeated query is
dependent on its predecessors at every scale, giving $d_E \le d$. Conversely,
for $dT \geq C_{\rm lb}$, with $C_{\rm lb}$ an absolute constant determined by $c_0$, the $d$
single-arm queries form a $1/(2T)$-independent sequence. The witness for query
$a_j$ sets $\mu_j = 1/2+\Delta$ and matches the reference elsewhere, so its
aggregate error at $a_j$ is exactly $\Delta$ (the step-1 operators coincide
across the class, and the step-2 emission difference carries $\ell_1$ mass
$|\mu_j-\tfrac12|=\Delta$ at the realized observation), while its error at
every other query is zero; and $\Delta = c_0\sqrt{d/T} > 1/(2T)$ exactly when
$dT > 1/(4c_0^2)$. Hence $d_E = \Theta(d)$ for
$dT\geq C_{\rm lb}:=\lfloor 1/(4c_0^2)\rfloor+1$. The
hard-instance prior uses the subclass where exactly one $\mu_{i^*} = 1/2+\Delta$
and the rest equal $1/2$; this subclass is contained in the ambient class
\(\mathcal M\), and the ambient class is the model class for which
\(d_E=\Theta(d)\).

\paragraph{Minimax lower bound.}
By the standard $d$-armed stochastic bandit minimax lower bound
\citep[\S15]{lattimore2020bandit}, in its Bernoulli form (the change-of-measure
step uses $\mathrm{KL}(\mathrm{Ber}(1/2)\,\|\,\mathrm{Ber}(1/2+\Delta))\le
4\Delta^2$ for $\Delta\le1/4$, and requires $d\ge2$), for $\Delta =
c_0\sqrt{d/T}$ with $c_0 > 0$ sufficiently small, every algorithm satisfies
\begin{equation}
\label{eq:lower-bandit-minimax}
\max_{i^*\in[d]}\,\mathbb{E}_{i^*}\!\left[\Delta\sum_{t=1}^T \mathbf{1}\{A_t \neq i^*\}\right] \geq c\sqrt{dT},
\end{equation}
for a universal constant $c > 0$. Since the policy regret of the constructed POMG equals the bandit regret in \eqref{eq:lower-bandit-regret}, the same lower bound holds for policy regret.

\paragraph{Assumption verification.}
\begin{itemize}
\item \textbf{Posterior-Lipschitz (Assumption~\ref{ass:pl}):} The adversary is constant ($|B|=1$), so $L=0$. 
\item \textbf{Weak revealing (Assumption~\ref{ass:reveal}):} $\kappa=1$; at step $h=1$ the next observation $o=(i,y)$ has first coordinate $i$ equal to the routed state index, so world-channel differences are detected with a constant signal parameter (independent of $\Delta$); at the final step the condition is not imposed, per the window convention of Assumption~\ref{ass:reveal}, and the operative conversion is Assumption~\ref{ass:reg}(e), verified directly in the next item. The $\Delta$-dependence belongs to the reward/emission mean estimation problem, not to state revelation. 
\item \textbf{Statistical regularity (Assumption~\ref{ass:reg}):} Bernoulli likelihoods are bounded, the model class $\mathcal{M}$ is compact and has finite covering number, predictive states are finite-dimensional probability vectors with unit $\ell_1$-norm, and both clauses of the KL-to-operator conversion hold exactly, via $\mathrm{KL}(\mathrm{Ber}(p)\|\mathrm{Ber}(q))\ge2(p-q)^2$ (the adversary channel is trivial). 
\end{itemize}

Since $\max_{i^*}\E[PR(T)] \geq c\sqrt{dT}$ and $d_E = \Theta(d)$, we have \(\sup_{\text{instance}\in\mathcal M}\E[PR(T)] \geq c'\sqrt{d_E T}\) for a universal constant \(c'>0\), completing the proof.
\end{proof}

% % % ============================================================================

\section{Proof of Theorem~\ref{thm:adaptive} (Adaptive Horizon)}
\label{app:adaptive}

The epoch-based structure of Algorithm~\ref{alg:omle} naturally provides adaptation to unknown time horizons. The key insight is that each epoch is self-contained. Epoch $e$ determines its behavior from its cap $T_e = 2^e$, the data collected so far, and the termination test, while building its confidence set from all historical data. The algorithm never needs to know the total horizon $T$ in advance.

\begin{proof}[Proof of Theorem~\ref{thm:adaptive}]
The algorithm operates through epochs \(e=1,2,\ldots\), where epoch \(e\) has
length at most \(T_e=2^e\) (the termination test may end it earlier). The important point is that the algorithm does not
need to know \(T\) in advance. All choices made in epoch \(e\) depend only on
the epoch index \(e\) and on the data collected up to the current episode.

At the start of epoch \(e\), the algorithm performs the following steps.
First, it computes the MLE \(\hat\xi_{e-1}\) from the cumulative dataset
\(\mathcal D_{e-1}\) containing all trajectories from epochs \(1,\ldots,e-1\).
Second, it forms the confidence set
\[
\mathcal C_{e-1}
=
\left\{
\xi:
L_{e-1}(\xi)
\ge
L_{e-1}(\hat\xi_{e-1})-\beta_e
\right\},
\]
where
\[
\delta_e=\frac{\delta}{2e^2},
\qquad
\epsilon_e = 2^{-\lceil\log_2(|\mathcal D_{e-1}|+2)\rceil},
\qquad
\beta_e
=
c\left(\log \mathcal{N}(\epsilon_e;\Xi)+\log(1/\delta_e)+mB_0\right).
\]
Thus \(\beta_e\) depends only on the current epoch index and the data collected
so far, and not on the eventual stopping time. Third, the algorithm selects an optimistic pair
\[
(\pi_e,\xi_e)
\in
\argmax_{(\pi,\xi)\in\Pi\times\mathcal C_{e-1}}
V^\pi(\xi).
\]
It then executes \(\pi_e\) for at most \(T_e=2^e\) data-collection episodes,
subject to the statistical termination test, after the \(m-1\) warm-up
episodes needed for the \(m\)-memory adversary to stabilize, and adds the
resulting trajectories to the cumulative dataset.

The confidence analysis is also horizon-free. By construction,
\(\sum_{e=1}^\infty \delta_e\le\delta\). Therefore, applying the likelihood
concentration and confidence-set validity arguments of Lemma~\ref{lem:confidence} 
with failure probability \(\delta_e\) in epoch \(e\), and union bounding over all
\(e\ge1\), gives a single high-probability event of probability at least \(1-\delta\)
on which all epoch confidence sets are valid simultaneously. This event is defined
without knowing how many epochs will eventually be run.

Now fix any final horizon \(T\). Each epoch \(e\) runs for \(n_e\)
data-collection episodes (ended by its cap, the termination test, or the
horizon), and
\[
\sum_{e}n_e \le T ,
\]
with at most \(\log_2(2T)\) scheduled completions and \(N_{\rm term}\)
terminations bounded as in Lemma~\ref{lem:switch-count}.

Define, as in Appendix~\ref{app:alg-structure},
\begin{align*}
\beta_T&:= c\bigl(\log\mathcal N(\epsilon_T;\Xi)+5\log(2T)+\log(1/\delta)+mB_0\bigr)
= O\bigl(\log\mathcal N(\epsilon_T;\Xi)+\log T+\log(1/\delta)+mB_0\bigr),
\\
\epsilon_T&:=1/(2T+4),
\qquad
\bar\beta_T:=H\beta_T ,
\end{align*}
so that \(\beta_e\le\beta_T\) and \(\beta_e(t)\le\beta_T\) for every epoch and
episode boundary reached by time \(T\).
On the simultaneous confidence event, the proof of Theorem~\ref{thm:upper} applies
verbatim to this prefix of epochs, with the final epoch truncated at the
horizon. 

In particular, if \(\Delta_e\) denotes the aggregate observable-operator prediction
error in epoch \(e\), the simulation lemma gives the epoch regret bound
\[
R_e
\le
n_e H\Delta_e+(m-1)H .
\]
The capped summability argument gives
\[
\sum_{e} n_e \Delta_e^2
\le
\tilde O(d_E\bar\beta_T).
\]
Since \(\sum_{e} n_e\le T\), Cauchy--Schwarz yields
\[
\sum_{e} n_e \Delta_e
\le
\sqrt{\left(\sum_{e} n_e \right)
      \left(\sum_{e} n_e \Delta_e^2\right)}
\le
\tilde O\!\left(\sqrt{d_E T\bar\beta_T}\right).
\]

Combining the data-collection regret with the \((m-1)H\,S_T\) warm-up cost
(Lemma~\ref{lem:switch-count}; all quantities involved are prefix-stable),
we obtain
\begin{equation}\label{eq:adaptive-anytime}
PR(T)
\le
\tilde O\!\left(
H\sqrt{\bar\beta_T d_E T}
+
mH\,S_T
\right).
\end{equation}

This bound holds for every final horizon \(T\) simultaneously on the same
high-probability event. 
The algorithm's adaptation comes from three features:
geometric epochs, epoch-wise confidence parameters defined without reference to the final
horizon, and a union bound over all possible future epochs. The per-epoch cap and switch-count arguments
(Lemmas~\ref{lem:epoch-cap} and~\ref{lem:switch-count}) are likewise
prefix-stable, since they treat each epoch separately; their per-epoch failure
probabilities consume a second budget \(\sum_{e\ge1}\delta_e\le\delta\), also
defined without reference to a final horizon, so all statements hold
simultaneously for every \(T\) with probability at least \(1-2\delta\). Finally, using the warm-up consequence of Lemma~\ref{lem:switch-count} we bound $(m-1)H\,S_T\le\tilde O\bigl(mH\log T+H\sqrt{d_E\bar\beta_T T}\bigr)$; substituting this into \textcolor{red}{\eqref{eq:adaptive-anytime}} gives the form stated in Theorem~\ref{thm:adaptive}.
\end{proof}

\section{Proof of Theorem~\ref{thm:fading} (Fading Memory Adversaries)}
\label{app:fading}

When the adversary has fading memory, meaning its response depends on an exponentially weighted average of past policies rather than a fixed window, we can still apply our epoch-based algorithm with one straightforward modification: extend the warm-up phase at the beginning of each epoch to allow sufficient time for the adversary's internal state to stabilize. The proof shows that (1) we can truncate the adversary's infinite memory to a finite window with negligible error, and (2) the warm-up cost of stabilizing this truncated adversary remains modest because scheduled epochs are logarithmic in number and every statistical termination is priced by evidence against the deployed model (Lemma~\ref{lem:switch-count}).

\begin{proof}[Proof of Theorem~\ref{thm:fading}]
Consider an adversary whose internal state evolves according to the \emph{normalized} recursion
\begin{equation}
\label{eq:fading-recursion}
z_t = \gamma z_{t-1} + (1-\gamma)\phi(\pi^t),
\end{equation}
where $\gamma \in (0,1)$, $\|\phi(\pi)\|_2 \leq 1$ for all $\pi$, and $g^t = g(z_t)$
for an $L_g$-Lipschitz map $g$. Whenever $\|z_0\|_2 \leq 1$, we have $\|z_t\|_2 \leq 1$
for all $t$ (the normalized recursion preserves the unit ball). Unrolling gives:
\[
z_t = (1-\gamma)\sum_{i=0}^{t-1} \gamma^i \phi(\pi^{t-i}) + \gamma^t z_0.
\]

\paragraph{Truncation analysis.} For truncation length $m_{\rm trunc} \geq 1$, define
\[
\tilde{z}_t^{(m_{\rm trunc})} := (1-\gamma)\sum_{i=0}^{m_{\rm trunc}-1} \gamma^i \phi(\pi^{t-i}).
\]
The approximation error is:
\[
\|z_t - \tilde{z}_t^{(m_{\rm trunc})}\|_2 \leq (1-\gamma)\sum_{i=m_{\rm trunc}}^\infty \gamma^i + \gamma^t\|z_0\|_2 \leq \gamma^{m_{\rm trunc}} + \gamma^t.
\]
For $t \geq m_{\rm trunc}$, we have $\gamma^t \leq \gamma^{m_{\rm trunc}}$, hence $\|z_t - \tilde{z}_t^{(m_{\rm trunc})}\|_2 \leq 2\gamma^{m_{\rm trunc}}$. Since the warm-up step runs $m_{\rm warmup}-1$ episodes, every data-collection episode is at least the $m_{\rm warmup}$-th consecutive episode of the deployed policy with $m_{\rm warmup} = m_{\rm trunc}$, so this is the only regime used in the regret analysis.

Choosing
\begin{equation}
\label{eq:fading-window}
m_{\rm trunc}=m_{\rm warmup}
=
\left\lceil \frac{\log(2T)}{1-\gamma}\right\rceil
\end{equation}
ensures \(\gamma^{m_{\rm trunc}}\le 1/(2T)\),
giving $\|z_t - \tilde{z}_t\|_2 \leq 1/T$ for all data-collection steps. Since \(g\) is \(L_g\)-Lipschitz, the per-step response error is
\(O(L_g/T)\) in total variation; over an episode this perturbs the trajectory law
by \(O(HL_g/T)\) and hence the episode value (which lies in \([0,H]\)) by
\(O(H^2L_g/T)\), for a total truncation contribution of \(O(H^2L_g)\) over \(T\)
episodes, which is absorbed into lower-order terms.

\paragraph{Stabilization under constant policy.} The stationary state under constant
policy $\pi$ is $z_\infty = \phi(\pi)$ (the unique fixed point of $z = \gamma z + (1-\gamma)\phi(\pi)$
is $z = \phi(\pi)$, and $\|\phi(\pi)\|_2 \leq 1$). After $k$ steps of constant $\pi$
from any $z_0$ with $\|z_0\|_2 \leq 1$:
\[
\|z_k - z_\infty\|_2 = \gamma^k \|z_0 - z_\infty\|_2 \leq 2\gamma^k.
\]
Choosing the warm-up length as in \eqref{eq:fading-window} gives
\[
\|z_{m_{\rm warmup}} - z_\infty\|_2 \leq 1/T,
\]
and hence an \(O(L_g/T)\) error in the adversary's response per step; as above,
this costs \(O(H^2L_g)\) in total value over \(T\) episodes, a lower-order term.

\paragraph{Robustness of the confidence analysis to residual non-stationarity.}
The concentration argument (Lemma~\ref{lem:conc}) is stated for data generated
exactly under the stationary response. During data collection the adversary's
response deviates from stationarity by at most
\(\delta^{\rm ns}_t \le 2L_g\gamma^{m_{\rm warmup}} \le L_g/T\) in total
variation per step, so each episode's trajectory law deviates by at most
\(H\delta^{\rm ns}_t\). Since log-likelihood increments are uniformly bounded by
\(B_0\) (Assumption~\ref{ass:reg}(c)), replacing each episode's law by its
stationary counterpart shifts every conditional mean in the martingale argument
by at most \(2B_0H\delta^{\rm ns}_t\), and hence all cumulative quantities by at
most \(\sum_{t\le T} 2B_0HL_g/T=O(B_0HL_g)\), a \(T\)-independent constant.
Enlarging \(\beta_e\) and the termination threshold \(\tau_e(t)\) by a
sufficiently large absolute multiple of this constant restores every statement
of Lemmas~\ref{lem:conc},
\ref{lem:confidence}, \ref{lem:epoch-cap}, and~\ref{lem:switch-count}
(their martingale arguments shift by the same conditional-mean bound, and the
multiple preserves the firing margins of Lemma~\ref{lem:switch-count}). This
enlargement is the \(B_0HL_g\) term inside \(\beta^\gamma_T\) in the theorem
statement; it enters the leading term through the radius, not as additive
regret.

\paragraph{Regret decomposition.} The total regret has three components:

\textbf{Component 1 (Truncation):} $O(H^2L_g)$, independent of $T$ and absorbed into lower-order terms; the residual non-stationarity is accounted inside the enlarged radius $\beta^\gamma_T$ (robustness paragraph above), not as additive regret.

\textbf{Component 2 (Warm-up):} Each epoch requires $m_{\rm warmup} = O(\log T/(1-\gamma))$
warm-up episodes. With $S_T$ epochs started (Lemma~\ref{lem:switch-count}):
\[
\text{Warm-up regret} = S_T \cdot m_{\rm warmup} \cdot H = O\!\left(\frac{H\,S_T\log T}{1-\gamma}\right),
\]
which is $O(H\log^2T/(1-\gamma))$ when only scheduled epoch endings occur.

\textbf{Component 3 (Exploration-exploitation):} Once stabilized, the analysis is identical
to Theorem~\ref{thm:upper} with the same \(d_E\) and the enlarged radius
$\bar\beta^\gamma_T = H\beta^\gamma_T$ (which carries the $B_0m_{\rm warmup}$
floor and the $B_0HL_g$ robustness constant):
\[
\text{Statistical regret}
=
\tilde O\!\left(H\sqrt{d_E\,\bar\beta^\gamma_T\,T}\right).
\]

Combining all three yields the result,
\[
PR(T) \;\leq\; \tilde O\!\left(H\sqrt{\bar\beta^\gamma_T\,d_E T}\right) + O\!\left(\frac{H\,S_T\log T}{1-\gamma}\right).
\]

\paragraph{Domination of the warm-up term.} The algorithm runs with memory
parameter \(m=m_{\rm warmup}\), so the floor in \(\beta_e\) is
\(B_0m_{\rm warmup}\) and Lemma~\ref{lem:switch-count} applies verbatim with
this \(m\). Every termination requires
\(n_e\,\mathrm{KL}_e\ge\tfrac{3}{16}\beta_e(n)\ge\tfrac{3c}{32}(1+B_0m_{\rm warmup})\), so
\(N_{\rm term}\le\tfrac{32}{3c(1+B_0m_{\rm warmup})}\sum_e n_e\mathrm{KL}_e\), and
since \(m_{\rm warmup}B_0/(1+B_0m_{\rm warmup})\le1\), the terminated-epoch
warm-up cost obeys
\(m_{\rm warmup}H\,N_{\rm term}\le\tilde O\bigl(H\sqrt{d_E\bar\beta^\gamma_T\,T}\bigr)\).
Scheduled completions and the final epoch contribute at most
\(m_{\rm warmup}H\cdot O(\log T)=O(H\log^2T/(1-\gamma))\). Hence the total
warm-up cost is unconditionally dominated by the leading statistical term at
the radius \(\bar\beta^\gamma_T\), plus \(O(H\log^2T/(1-\gamma))\), as discussed after Theorem~\ref{thm:fading}. Choosing the warm-up per epoch as
\(\lceil\log(2^{e+1})/(1-\gamma)\rceil\) instead removes the dependence on
\(T\) with the same rate; we state the fixed-\(T\) version for simplicity.
\end{proof}

The extreme cases illustrate the dependence on the effective memory.

\textbf{Case 1: $\gamma = 0$ (memoryless adversary).}
The state becomes $z_t = \phi(\pi^t)$, and the warm-up may be set to zero
(the stated algorithm still runs $\lceil\log(2T)\rceil-1$ warm-up episodes per
epoch; setting them to zero is valid at $\gamma=0$ since the state carries no
memory). With zero warm-up the bound reduces to
\[
PR(T) = \tilde{O}\!\left(H\sqrt{\bar\beta_T\,d_E\,T}\right),
\]
matching the memoryless case up to logarithmic factors.

\textbf{Case 2: $\gamma \to 1$ (long memory).}
The warm-up length grows as $m_{\rm warmup} = O(\log T/(1-\gamma))$, and the total warm-up cost over the $S_T$ epochs is
\[
O\!\left(\frac{H\,S_T\log T}{1-\gamma}\right),
\]
capturing the increasing cost of stabilizing an adversary with slowly decaying memory.

\textbf{Case 3: Intermediate $\gamma$.}
For fixed $\gamma\in(0,1)$, the additional fading-memory cost is $O(H\,S_T\log T/(1-\gamma))$; as we discuss above, it is $O(H\log^2 T/(1-\gamma))$ outright when no early terminations occur, and in general the $B_0m_{\rm warmup}$ floor makes it dominated by the leading statistical term $\tilde{O}(H\sqrt{\bar\beta^\gamma_T\,d_E\,T})$ plus the scheduled residual $O(H\log^2T/(1-\gamma))$.

\begin{remark}[On the $1/(1-\gamma)$ dependence]
We expect the $1/(1-\gamma)$ dependence to be unavoidable, because the
adversary's internal state converges to stationarity only at rate $\gamma^t$, so
distinguishing adversaries whose stationary responses differ by $\epsilon$
should require on the order of $\log(1/\epsilon)/(1-\gamma)$ episodes of play.
Turning this heuristic into a formal policy-regret lower bound, which must also
account for the learner's ability to exploit transient responses, is left open.
\end{remark}

\section{Proof of Theorem~\ref{thm:tabular} (Tabular Bounds)}
\label{app:tabular}

For finite POMGs, we can instantiate the abstract quantities in Theorem~\ref{thm:upper}, specifically, the Eluder dimension $d_E$ and confidence parameter $\beta$, in terms of the concrete problem parameters: state space size $|S|$, action spaces $|A|$ and $|B|$, observation spaces $|O_A|$ and $|O_B|$, and so forth. This gives us an explicit regret bound that practitioners can evaluate numerically for specific problem instances.

\begin{proof}[Proof of Theorem~\ref{thm:tabular}]
Consider a tabular POMG with $|S|$ states, $|A|$ learner actions, $|B|$ adversary actions, $|O_A|$ learner observations, $|O_B|$ adversary observations, horizon $H$, adversary memory $m$, and a linear adversary response function with dimension $d_{\text{adv}}$.

\paragraph{Parameterizing the World Model} The world model consists of two components,  transition dynamics and emission distributions. At each time step $h \in [H]$, we need to specify:

\begin{enumerate}
    \item 
\textbf{Transition kernels} $T_h(s'|s,a,b)$: For each combination of current state $s$, learner action $a$, and adversary action $b$, we have a distribution over next states $s'$. This requires specifying $|S|$ probabilities for each of the $|S| \times |A| \times |B|$ combinations (with normalization removing one degree of freedom per distribution). The total number of transition parameters is thus $\Theta(|S|^2 |A| |B| H)$.

\item \textbf{Emission kernels} $E^A_h(o|s)$ and $E^B_h(o^B|s)$: For each state $s$, we have distributions over $O_A$ and over $O_B$. This requires $|S|(|O_A|+|O_B|)$ parameters per time step, giving $\Theta(|S|(|O_A|+|O_B|) H)$ total emission parameters.

\end{enumerate}

The world model thus carries $\Theta\bigl((|S|^2|A||B|+|S|(|O_A|+|O_B|))H\bigr)$ parameters; under the standard tabular scaling $|O_A|+|O_B| = O(|S||A||B|)$ the transition parameters dominate.

  \paragraph{Bounding the World Model Eluder Dimension.} In the finite-window representation invoked by
  Proposition~\ref{prop:eluder} (the exact factorization of
  Theorem~\ref{thm:decomp} lives on the full-history space), the observable
  operators act on the reduced augmented state space
  $\bar{S}^+_{\text{red}} = S \times O_A^{\kappa-1} \times O_B^{\kappa-1}$ of dimension
  $d_W = |S| \cdot |O_A|^{\kappa-1} \cdot |O_B|^{\kappa-1}$, which tracks the information needed from the last \(\kappa-1\) learner and
adversary observations. The \(|O_B|^{\kappa-1}\) factor is needed because the
adversary aggregation operator \(G^{\Phi,\pi}_h\) must be expressible without
referring to the world parameter \(\theta\).   The world model must also parameterize the adversary's emission kernel $E^B_h(o^B|s)$.

Following the span-dimension route of Proposition~\ref{prop:eluder} (cf.\ the
covering-dimension calculation of \citet{liu2022partially} for the single-agent
analogue), and summing the per-step operator complexity over
\(h\in[H]\), the tabular world-channel contribution satisfies: 
  \[
  d^{[\kappa]}_\Theta = O\!\left(H \cdot |S|^2 |A| |B| \cdot |O_A|^\kappa \cdot |O_B|^\kappa
  \cdot \log(|S| |A| |B| |O_A| |O_B| H T)\right),
  \]
  where the logarithm is the sign-witness scale factor at the evaluation scale
  $1/(2T)$. 
  The factors
\(O_A^\kappa\) and \(O_B^\kappa\) come from the \(\kappa\)-step observation
windows used by the weak-revealing representation, while the factor \(H\) comes
from summing over the \(H\) time-dependent operator classes.

\paragraph{Parameterizing and Bounding the Adversary Model.} For a linear adversary response model, the response function takes the form:
\[
g_h(b|\tau_B, \pi) = [\Phi^* w_h(\tau_B, \pi)]_b,
\]
where $\Phi^*_h \in \mathbb{R}^{|B| \times d_{\text{adv}}}$ is a column-stochastic matrix (each column sums to 1 and has non-negative entries; one matrix per step, a shared matrix only tightening the bounds) and $w_h: \mathcal{T}^B_{h-1} \times O_B \times \Pi^m \to \mathbb{R}^{d_{\text{adv}}}$ is a feature function mapping adversary decision histories and recent learner policies to features. The parameter space for $\Phi^*$ has dimension $(|B|-1) \cdot d_{\text{adv}}$ due to column-stochasticity (one degree of freedom per column is fixed by the normalization constraint).

For fixed \(\pi\) and feature map \(w_h\), the adversary aggregation operator
\(G^{\Phi,\pi}_h\) is linear in \(\Phi\). The Eluder dimension of a bounded linear
class is controlled by its parameter dimension up to logarithmic factors
\citep{russo2013eluder}. Summing over \(h\in[H]\) gives
  \[
  d^{[\kappa]}_\Psi = O(H \cdot d_{\text{adv}} \cdot |B|).
  \]
  
This reflects the fact that identifying $\Phi^*_h$ requires observing its
effect on $O(d_{\text{adv}} \cdot |B|)$ linearly independent feature vectors.

\paragraph{Joint Eluder dimension.}
The channel-sum Eluder dimension of Section~\ref{sec:formulation} is the sum of
the world-channel and adversary-channel contributions:
\[
d_E^{\rm tab}
=
d^{[\kappa]}_\Theta+d^{[\kappa]}_\Psi
=
\tilde O\!\left(
H\bigl(|S|^2|A||B||O_A|^\kappa |O_B|^\kappa+d_{\rm adv}|B|\bigr)
\right).
\]
This agrees with Proposition~\ref{prop:eluder}. The factor \(H\) reflects the
sum of the per-step operator dimensions over \(h\in[H]\). Typically, the
world-channel term dominates unless the adversary response model has very large
feature dimension.

\paragraph{Confidence radius.}
The confidence radius \(\beta_T^{\rm tab}\) must be chosen so that the true
parameter remains in the confidence set with high probability. This requires
\(\beta_T^{\rm tab}\) to scale with the logarithm of the covering number of the
joint model class.

For tabular models, the parameters are the probabilities in the transition and
emission kernels, together with the adversary-response parameters. We discretize
the corresponding probability simplices to precision \(\epsilon\), chosen small
enough that the discretization error is absorbed into the confidence radius. Taking
\(\epsilon=1/(2T+4)\), the horizon-level scale \(\epsilon_T\) matching the
dyadic schedule of Algorithm~\ref{alg:omle}, suffices for the stated bound. The world-model
covering number satisfies
\[
\log \mathcal N(\epsilon;\Theta)
=
O\!\left(
H(|S|^2|A||B|+|S||O_A|+|S||O_B|)\log(1/\epsilon)
\right)
=
\tilde O\bigl(H(|S|^2|A||B|+|S|(|O_A|{+}|O_B|))\bigr),
\]
where the emission terms, \(|S||O_A|\) and \(|S||O_B|\), are retained; under
the standard tabular scaling \(|O_A|+|O_B|=O(|S||A||B|)\) they are dominated by
the transition term. Similarly, for
the adversary response class,
\[
\log \mathcal N(\epsilon;\Psi)
=
O\!\left(
Hd_{\rm adv}|B|\log(1/\epsilon)
\right)=
\tilde O\!\left(Hd_{\rm adv}|B|\right).
\]

Combining these covering terms with the union bound over
all epochs and the failure probability \(\delta\) gives
\[
\beta_T^{\rm tab}
=
\tilde O\!\left(
H(|S|^2|A||B|+|S|(|O_A|{+}|O_B|)+d_{\rm adv}|B|)+\log(1/\delta)+mB_0
\right).
\]
Substituting this tabular confidence radius into the general effective radius
\(\bar\beta_T=H\beta_T\) gives
\[
\bar\beta_T^{\rm tab}
=
\tilde O\!\left(
H^2(|S|^2|A||B|+|S|(|O_A|{+}|O_B|)+d_{\rm adv}|B|)+HmB_0
\right),
\]
which is the quantity used in Theorem~\ref{thm:tabular}, with the additive
$H\log(1/\delta)$ from $H\beta_T^{\rm tab}$ absorbed into the $\tilde O$; under the standard
tabular scaling \(|O_A|+|O_B|=O(|S||A||B|)\) the emission term is absorbed and
the familiar two-term form is recovered.

\paragraph{Applying the Main Theorem.} Substituting \(d_E^{\rm tab}\) and \(\bar\beta_T^{\rm tab}\) into
Theorem~\ref{thm:upper} gives
\begin{align*}
PR(T)
&\le
\tilde O\!\left(
H\sqrt{\bar\beta_T^{\rm tab}\,d_E^{\rm tab}\,T}
+
mH\log T
\right)  \\
&=
\tilde O\!\Bigl(
H\bigl(H^2\bigl(|S|^2|A||B|{+}|S|(|O_A|{+}|O_B|){+}d_{\rm adv}|B|\bigr){+}HmB_0\bigr)^{1/2}
\\
&\qquad\cdot
\bigl(H\bigl(|S|^2|A||B||O_A|^\kappa |O_B|^\kappa+d_{\rm adv}|B|\bigr)\bigr)^{1/2}
\sqrt{T}
+
mH\log T
\Bigr).
\end{align*}

\paragraph{Understanding the scaling.}
The tabular bound reflects several sources of difficulty. The state and action
spaces enter through both the transition parameterization and the aggregate Eluder
dimension. In particular, the transition model contributes factors of
\(|S|^2|A||B|\), while the adversary-response class contributes
\(d_{\rm adv}|B|\), and the emission kernels contribute the additive
\(|S|(|O_A|+|O_B|)\) term in the confidence radius, absorbed under the
standard tabular scaling. The observation-window factors
\(|O_A|^\kappa |O_B|^\kappa\) appear in \(d_E^{\rm tab}\) and capture the cost of
partial observability. Larger \(\kappa\) means that longer observation windows are
needed to reveal the predictive state, increasing the effective dimension.

The horizon appears in several places. The outer factor \(H\) comes from the
simulation lemma, the factor \(H\) inside \(d_E^{\rm tab}\) comes from summing the
time-dependent operator dimensions over \(h\in[H]\), and the effective confidence
radius \(\bar\beta_T^{\rm tab}\) also carries horizon dependence. The adversary
memory \(m\) appears through the warm-up cost \((m-1)H\,S_T\), which the
\(mB_0\) floor in \(\beta_e\) makes unconditionally dominated by the leading
term plus \(O(mH\log T)\) (Lemma~\ref{lem:switch-count}), and through the
additive \(mB_0\) term in \(\beta_T^{\rm tab}\). Thus longer-memory adversaries
require longer stabilization periods, but \(m\) does not increase the Eluder
dimension, and its effect on the statistical term is only the additive
\(mB_0\) floor in the confidence radius.

\paragraph{Computational implications.}
The theorem is primarily a statistical guarantee. The optimistic planning step
requires solving
\[
\max_{(\pi,\xi)\in \Pi\times \mathcal C_{e-1}} V^\pi(\xi),
\]
which can be computationally expensive in tabular POMGs. In the worst case, the
history-dependent policy class has size exponential in \(H\), and planning in a
known finite-horizon POMG is generally intractable in \(H\). The advantage of the
epoch structure is that this optimistic planning problem is solved only once
per deployed policy, \(S_T\) times in total (Lemma~\ref{lem:switch-count}),
rather than after every episode; the per-episode work of the termination test
is a single MLE re-fit, not a new planning problem. Computationally efficient
implementations for special tabular or structured subclasses are an important
direction beyond the statistical scope of this theorem.

\paragraph{Comparison with POMDP results.}
It is instructive to compare our POMG bound with the corresponding single-agent
POMDP scaling. When the adversary component is absent, equivalently when the
adversary has a single trivial action and no strategic response, the model reduces
to a POMDP. Existing weakly revealing POMDP analyses give regret bounds of the form
\(\tilde O(\sqrt{d_{\rm POMDP}T})\), with
\[
d_{\rm POMDP}
=
\tilde O\!\left(H|S|^2|A||O_A|^\kappa\right),
\]
under the same convention of summing the operator dimension over time steps.
Our POMG bound adds:
\begin{itemize}
\item an additional factor of \(|B|\) in the world-model Eluder dimension, since
adversary actions affect transitions;
\item an adversary-observation factor \(|O_B|^\kappa\) in the predictive
representation;
\item the adversary model dimension
\(d^{[\kappa]}_\Psi=\tilde O(Hd_{\rm adv}|B|)\);
\item the warm-up cost \((m-1)H\,S_T\) from adversary stabilization, dominated by the leading term plus \(O(mH\log T)\).
\end{itemize}

Thus the POMG bound reduces to the familiar weakly revealing POMDP scaling when
the adversary component is absent, while separating the additional costs due to
adversary actions, adversary observations, response complexity, and memory.
\end{proof}

\section{Empirical Validation}
\label{app:experiments}

We report simulations that exercise the paper's three main predictions that  
\begin{enumerate}
\item data-dependent termination is necessary (Experiment 1), 
\item regret and the number of deployed policies scale as the theory says they should (Experiment
2), and 
\item the full algorithm behaves as intended against an adaptive
adversary whose environment statistics come from real data (Experiment 3).
\end{enumerate}

The code, a self-contained Python notebook, and the exact seeds are
available at \url{https://github.com/ramanarora/pomg-policy-regret}; the full
suite runs in a few CPU minutes. Following standard
practice, the schedule constants are set to practical values, with $c=1$ and
termination threshold $2\beta_e(t)$ in Experiments 1 and 2; Experiment 3 uses
the variant stated in Section~\ref{app:exp-real}. The theoretical constants are
pessimistic and affect none of the qualitative predictions.

\paragraph{Metric.} Policy regret is the quantity defined in
Section~\ref{sec:adapt}:
$PR(T)=\sum_{t\le T}\bigl[V^{\pi^*}(R_\infty(\pi^*))-v_t\bigr]$, where $v_t$
is the expected reward of the policy played in episode $t$ under the
adversary's current response and $\pi^*$ is the best fixed policy under its
own stationary response. In Experiments 1 and 2 the environment is the
bandit-embedded POMG family of Appendix~\ref{app:lower}, whose value function
is available in closed form, so both terms are computed exactly rather than
estimated from realized rewards. Figure~\ref{fig:exp-necessity} plots the
per-episode average $PR(T)/T$; all other figures plot cumulative quantities,
as their axis labels indicate. Every curve averages independent seeds (20 in
Experiments 1 and 2, 10 in Experiment 3) and shaded bands show two standard
errors of the mean, i.e., $\pm2\hat\sigma/\sqrt{n_{\rm seeds}}$
with $\hat\sigma$ the across-seed standard deviation, approximately a 95\%
confidence band for the mean curve.

\paragraph{Exact implementation of the algorithm.} For a product Bernoulli
class the optimistic MLE step of Algorithm~\ref{alg:omle} is computable in
closed form, so Experiments 1 and 2 run the algorithm itself rather than an
approximation. The MLE is the clipped empirical mean of each arm. The
optimistic value of arm $i$ is its KL upper confidence value, the largest $q$
in the class with $n_i\,\mathrm{kl}(\hat\mu_i,q)\le\beta_e$, computed by
bisection; an arm with no data attains the top of the class. Ties among
unplayed arms are broken in favor of fresh arms, which is exactly the valid
tie-breaking under which Proposition~\ref{prop:fixed-fails} operates. The
termination test also follows Algorithm~\ref{alg:omle}. The
epoch records the
deployed arm's promised optimistic value $\mu_{\rm dep}$ at selection time,
and after every data-collection episode it recomputes the running radius
$\beta_e(t)$ and ends the epoch as soon as the arm's own data certify
refutation, $n_{\rm arm}\,\mathrm{kl}(\hat\mu_{\rm arm},\mu_{\rm dep})>
2\beta_e(t)$.

\subsection{Experiment 1: Necessity of the Termination Test}

\emph{Setting.} This experiment runs the exact construction behind
Proposition~\ref{prop:fixed-fails}. There are $d=\lceil\log_2T\rceil+1$ arms. One
arm has mean $3/4$ and all others have mean $1/2$. The model class is 
the widened product family $[1/8,7/8]^d$. The horizon ranges over
$T\in\{2^8,\ldots,2^{15}\}$, so the instance itself grows with $T$. We
compare two algorithms that differ in exactly one line: the epoch-based
optimistic MLE with the termination test, and the identical algorithm with
the test removed, so that every epoch runs to its scheduled cap $2^e$.

\emph{Why the instance is hard.} The widened class is what drives the failure
mode. An arm that has never been played can optimistically be a $7/8$ arm,
which is better than anything already observed, so an optimistic learner
always prefers a fresh arm. Without the test, each such preference commits an
entire cap-$2^e$ epoch to a bad arm. With
$d=\lceil\log_2T\rceil+1$ arms, an unplayed arm exists at the start of every
epoch, so every episode is spent on an arm of true mean $1/2$ while the best
arm has mean $3/4$. The per-episode regret is therefore pinned at the gap
$1/4$, which is exactly the proposition's bound $\E[PR(T)]\ge T/4$.

\emph{Findings.} Figure~\ref{fig:exp-necessity} shows the per-episode average
$PR(T)/T$. The fixed schedule sits at $0.250$ across all horizons, matching
the proposition's $T/4$ exactly. The identical algorithm with the termination
test refutes each bad commitment within a few episodes and its average regret
falls to $0.100$ by $T=2^{15}$. The blue curve is not monotone because the
instance changes with $T$, as $d$ grows with the horizon.

\begin{figure}[h]
\centering
\includegraphics[width=0.4\textwidth]{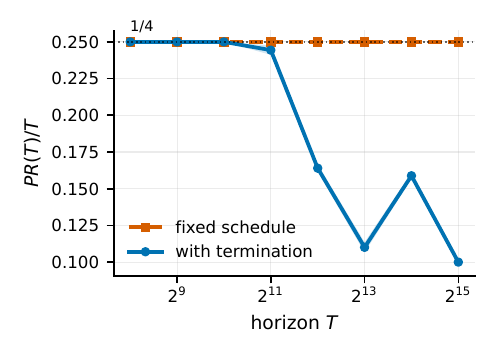}
\caption{Experiment 1: necessity of the termination test on
the hard instance of Proposition~\ref{prop:fixed-fails}, with
$d=\lceil\log_2 T\rceil+1$ arms and 20 seeds. The fixed schedule stays pinned
at average regret $1/4$; the termination test drives the average down.}
\label{fig:exp-necessity}
\end{figure}

\subsection{Experiment 2: Scaling in the Horizon, the Dimension, and the Memory}
\label{app:exp-scaling}

\emph{Setting.} All four panels of Figure~\ref{fig:exp-scaling} use the
bandit-embedded family with the full algorithm. In panels (a) and (b) there
are $d=8$ arms and the horizon sweeps $T\in\{2^8,\ldots,2^{17}\}$. The best
arm's gap is set to $0.6\sqrt{d/T}$, shrinking with the horizon. The reason
is that a fixed gap would eventually let the algorithm identify the best arm,
after which regret grows only logarithmically; shrinking the gap like
$\sqrt{d/T}$ keeps every horizon in the worst-case regime that the $\sqrt T$
theory describes. Panel (c) fixes $T=2^{14}$ and sweeps the number of arms
$d\in\{2,4,8,16,32\}$ with the same gap rule. Panels (a)--(c) use log-log axes, on which a power law
$y\propto x^k$ is a straight line of slope $k$; the fitted slopes are
least-squares fits in these coordinates. Panel (d) fixes $d=8$,
$T=2^{14}$, and a gap of $0.15$, and sweeps the adversary memory
$m\in\{1,2,4,8,16\}$. After every policy switch the next
$m-1$ episodes are warm-up and are excluded from the dataset, exactly as in
Algorithm~\ref{alg:omle}. During these episodes the simulated adversary is
still re-stabilizing, and the instance models that transient as a fixed
reduction of the learner's reward by $0.3$; the value $0.3$ is a design
parameter of the simulation (twice the gap), chosen so that warm-up episodes
carry a visible cost and the warm-up component of the regret can be measured. 

\begin{figure*}[h]
\centering
\includegraphics[width=0.9\textwidth]{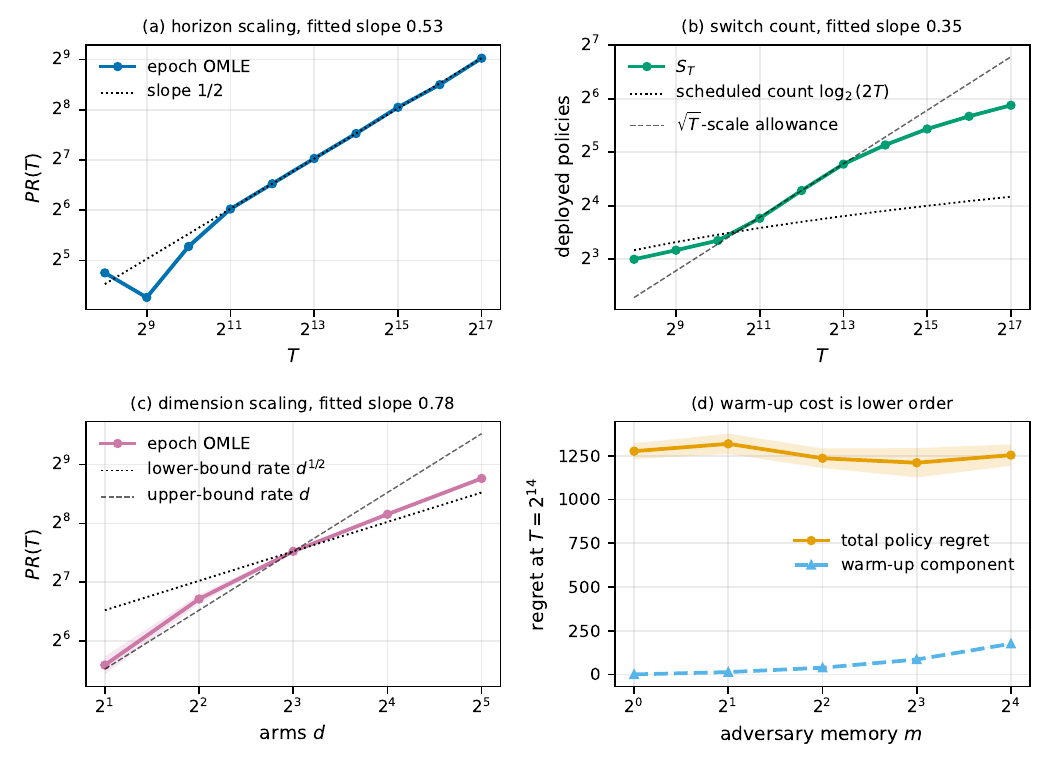}
\caption{Experiment 2: scaling behavior of the epoch-based
optimistic MLE on the bandit-embedded family (20 seeds; bands are two
standard errors). (a) Cumulative policy regret against the horizon on
minimax-regime instances with gap $0.6\sqrt{d/T}$; the fitted slope $0.53$
matches the $\sqrt T$ theory. (b) Deployed-policy count $S_T$ between the
scheduled count $\log_2(2T)$ and the $\sqrt T$-scale allowance of
Lemma~\ref{lem:switch-count}. (c) Cumulative policy regret against the number
of arms at $T=2^{14}$; the fitted slope $0.78$ lies between the reference
rates $\sqrt d$ and $d$. (d) Total policy regret and its warm-up component
against the adversary memory $m$ at $T=2^{14}$.}
\label{fig:exp-scaling}
\end{figure*}

\paragraph{Horizon scaling (panel a).} We plot cumulative policy regret
against the horizon on log-log axes, where a straight line of slope $1/2$
corresponds to $\sqrt T$ growth. The least-squares slope over the full range
is $0.53$, in close agreement with Theorem~\ref{thm:upper}.

\paragraph{Number of deployed policies (panel b).} This panel counts the
policies $S_T$ the algorithm deploys. Two reference curves frame the
measurement. The lower one is the scheduled count $\log_2(2T)$, which would
be exact if every epoch ran to its cap. The upper one grows at the $\sqrt T$
scale and is the allowance of Lemma~\ref{lem:switch-count}, which prices
every early termination in statistical evidence. The measured count exceeds
$\log_2(2T)$ once $T\ge2^{11}$, because on these shrinking-gap instances the
test keeps firing, and it grows with fitted slope $0.35$, well inside the
allowance. Both regimes described by the lemma are visible at once: scheduled
switching at small horizons, evidence-priced termination at large ones.

\paragraph{Dimension scaling (panel c).} Cumulative regret grows with the
number of arms at fitted slope $0.78$. The two natural reference rates are
$\sqrt d$, the dependence of the lower bound in Theorem~\ref{thm:lower}, and
$d$, the dependence the upper bound implies for this class, whose covering
radius satisfies $\beta_T=\Theta(d)$. The measurement falls between the two.
The gap between the reference rates reflects the known looseness of
covering-based radii in the dimension, not a defect of either bound.

\paragraph{Warm-up cost (panel d).} Total policy regret is flat as the memory
$m$ grows, while the warm-up component, the regret incurred during warm-up
episodes and tallied separately, grows linearly in $m$ and remains lower
order, about $14\%$ of the total at $m=16$. This is the
unconditional-domination behavior that the $mB_0$ floor guarantees through
Lemma~\ref{lem:switch-count}.

\subsection{Experiment 3: An Adaptive Adversary on Real Data}
\label{app:exp-real}

\emph{Task.} The learner repeatedly classifies handwritten digits against an
opponent that controls which digits it sees. In every episode the adversary
chooses a distribution over the ten digit classes, a class is drawn from it,
an image of that class is observed through a deliberately coarse channel, and
the learner's currently deployed decision rule predicts the class. The
learner is rewarded when the prediction is correct. Its choice is which of
$K=6$ fixed decision rules to deploy, and its goal is to compete with the
best single rule under the class distribution that this rule itself would
eventually induce, which is exactly policy regret against the stationary
response.

\emph{Data and observation channel.} Latent states are the ten digit classes
of the scikit-learn handwritten digits corpus (1{,}797 images of $8\times8$
pixels). The observation is deliberately coarse. Each image is
projected onto the first two principal components of the corpus, each of the
two coordinates is split into four bins at its empirical quartiles, and the
learner observes only the index of the resulting grid cell, an integer in
$\{1,\ldots,16\}$. The learner never sees the image itself. The emission
$P(o\mid s)$ is the empirical distribution of grid cells within class $s$. Partial observability therefore
comes from the genuine confusion structure of the corpus, overlapping classes
in the projection, rather than from injected noise. Panel (a) of
Figure~\ref{fig:exp-real-setup} shows the projected data and the grid.

\emph{Decision rules.} Each rule is a fixed lookup table
$h_k:\{1,\ldots,16\}\to\{0,\ldots,9\}$ from observed cells to predicted
classes; on observing cell $o$, the deployed rule predicts $h_k(o)$. The
tables are built once, before learning begins, and never change. Rule $k$ is
fit on its own random half of the corpus by assigning to each cell the
majority class among the training images falling in it, except that with
probability $0.15\,(k-1)$ the cell's assignment is replaced by a uniformly
random class (a cell with no training images also receives a random class).
Rule $1$ is thus a plain majority-vote prototype classifier and later rules
are progressively corrupted copies, so the six rules have genuinely different
accuracy profiles across classes, with mean accuracies from $0.57$ down to
about $0.24$. Panel (b) of Figure~\ref{fig:exp-real-setup} shows the full
accuracy matrix $\mathrm{acc}(k,s)$, the probability that rule $k$ correctly
classifies an image of class $s$ drawn through the channel. 

\begin{figure*}[h]
\centering
\includegraphics[width=6.2in]{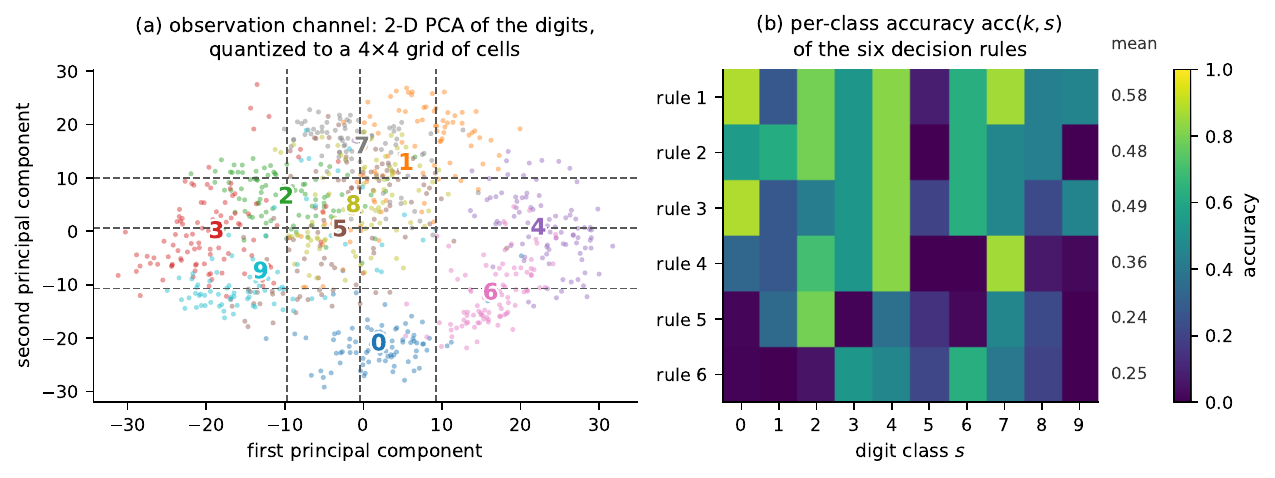}
\caption{The setup of Experiment 3. (a) The observation
channel: the digits corpus projected onto its first two principal components,
with points colored and labeled by class and the dashed lines showing the
$4\times4$ quantization grid; the learner observes only the
index of the cell containing the projected image, so
overlapping classes are genuinely confusable. (b) The per-class accuracy
matrix of the six decision rules (row means at right); the adversary tilts
the class distribution toward the columns where the learner's recent rules
are weakest.}
\label{fig:exp-real-setup}
\end{figure*}

\emph{The adversary.} A memory-$4$ adversary watches the learner's last four
deployed rules and shifts the class distribution toward the classes those
rules misclassify most, through a smooth softmax tilt,
$\rho(s)\propto\exp(\lambda\,\overline{\mathrm{err}}(s))$ with $\lambda=2$,
where $\overline{\mathrm{err}}(s)$ is the average error of the four recent
rules on class $s$. The tilt is a smooth function of the recent policies, an
instance of the posterior-Lipschitz reaction rules of
Section~\ref{sec:model}. Whenever the four recent rules are not identical the
adversary is in transit between tilts, and the learner additionally pays a
re-stabilization penalty of $0.15$; this is the transient that warm-up
exists to absorb.

\emph{Metric.} Regret is accumulated in expectation rather
than from realized rewards. Let $v^*$ be the best rule's value under its own
stationary tilt. In episode $t$ the deployed rule earns its expected accuracy
under the adversary's current class distribution, reduced by the $0.15$
penalty whenever the recent window is not constant, and the episode's regret
is $v^*$ minus this quantity. 

\emph{Algorithms.} Four algorithms are compared over $T=2^{14}$ episodes:
\begin{itemize}
\item \textbf{Termination}: the epoch-based optimistic MLE of
Algorithm~\ref{alg:omle}. Epochs are capped at $2^e$; the optimistic value of
each rule is its KL upper confidence value computed from realized rewards;
the deployed rule is abandoned by a one-sided refutation test that fires,
after at least four stable episodes, when
$n\,\mathrm{kl}(\hat\mu,\mu_{\rm dep})>2\beta_e$ with $\hat\mu$ the rule's
in-epoch mean reward and $\mu_{\rm dep}$ its promised optimistic value.
\item \textbf{Fixed}: the identical epoch algorithm with the test disabled,
so every epoch runs to its cap.
\item \textbf{$\epsilon$-greedy}: plays the rule with the best empirical mean
and explores a uniformly random rule with probability $\epsilon=0.1$ in every
episode. It has no epoch structure, so it switches constantly.
\item \textbf{Uniform}: plays a uniformly random rule in every episode, as a
floor.
\end{itemize}
The practical schedule for this experiment uses $c=0.5$ with a unit covering
term and no warm-up discard; the fixed baseline uses the identical radius, so
the comparison isolates the termination rule.

\vspace*{-10pt}
\begin{figure}[h]
\centering
\includegraphics[width=0.9\textwidth]{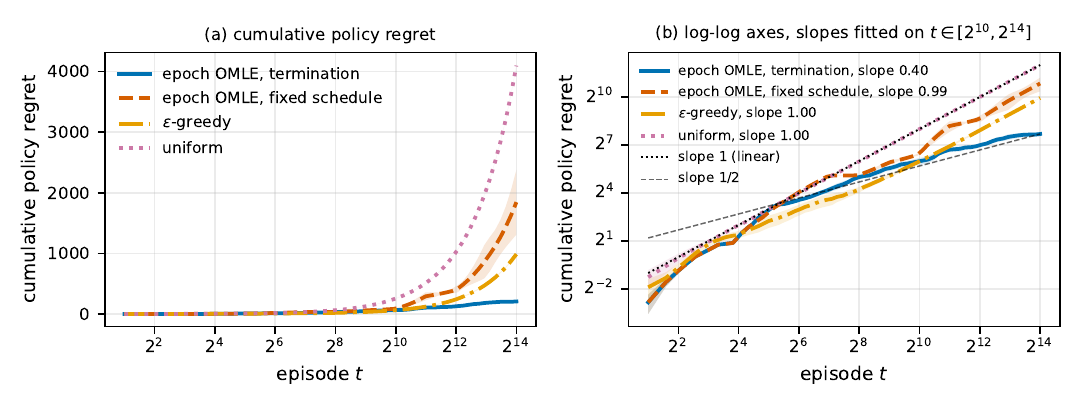}
\vspace*{-15pt}
\caption{Experiment 3: policy regret on the digits POMG, with
a memory-$4$ softmax-tilt adversary and re-stabilization transients (10
seeds; bands are two standard errors of the mean). (a) Cumulative policy
regret of the four algorithms. (b) The same curves on log-log axes, with
least-squares slopes fitted over $t\in[2^{10},2^{14}]$; the three baselines
grow linearly (slope $\approx1$), while the epoch algorithm with termination
grows at fitted slope $0.40$ and flattens as it settles on the best rule.}
\label{fig:exp-real}
\end{figure}

\emph{Findings.} Panel (a) of Figure~\ref{fig:exp-real} shows cumulative policy regret.
The epoch algorithm with termination ends at about $207$, against $1844$ for
the fixed schedule, $987$ for $\epsilon$-greedy, and $4097$ for uniform play.
The unit of policy regret here is one expected correct
classification, so the numbers have a concrete reading. Over the $16{,}384$
episodes, the termination algorithm forgoes about $207$ expected correct
predictions relative to always playing the best fixed rule under its own
stationary tilt, about $1.3\%$ of the horizon; the fixed schedule forgoes
about $11\%$, $\epsilon$-greedy about $6\%$, and uniform play about $25\%$. 
The ordering matches the theory. The fixed schedule overcommits to poorly
fitting rules for entire epochs. The $\epsilon$-greedy learner switches in
almost every episode, so the adversary never settles and the learner pays the
transient penalty nearly continuously. The epoch structure with termination
stabilizes the opponent, and still abandons refuted commitments quickly.

\emph{Trend.} Panel (b) of Figure~\ref{fig:exp-real} repeats the comparison on
log-log axes, where a power law is a straight line whose slope is the
exponent, with least-squares slopes fitted over the last four octaves
$t\in[2^{10},2^{14}]$. The three baselines grow at slope $\approx 1$, that
is, linearly, because each keeps paying a per-episode cost that never decays:
refuted commitments for the fixed schedule, transient penalties for
$\epsilon$-greedy, and both for uniform play. The termination algorithm's
fitted slope over the same window is $0.40$, below the $1/2$ of the
$\sqrt T$ theory, and its curve visibly flattens once the algorithm settles
on the best rule.

\paragraph{Scope.} Two limitations are inherent. Strategic-response data does
not exist in logged form, so the adversary is necessarily simulated even when
the environment statistics come from real data. And the experiments
instantiate the algorithm exactly only where the optimistic MLE is computable
in closed form; scaling the oracle steps to rich model classes is the
computational open problem discussed in Section~\ref{sec:extensions}.

\section{Technical Comparison with Concurrent Work}
\label{app:prior_work}

In this appendix, we provide a detailed technical analysis of the concurrent work by \citet{sang2025pomg} on policy regret minimization in POMGs. We identify fundamental issues in their problem formulation and analysis. Most critically, we show that their flat-batching analysis allows the historical
transport term to grow polynomially with \(T\), which obstructs a direct
\(\tilde O(\sqrt T)\) guarantee via the Eluder/Cauchy--Schwarz route. We explain how our work resolves these issues.

% ============================================================================

\subsection{Issue 1: Circularity in the Posterior-Lipschitz Condition}
\label{app:circularity}

The posterior-Lipschitz condition in \citet{sang2025pomg} is intended to identify
adversary responses that vary smoothly with the learner's policy. In their
formulation, for policies \(\pi,\nu\) and adversary history \(\tau_B\), the
condition takes the form
\begin{equation}
\label{eq:sang-circular-lipschitz}
\norm{g(\cdot\mid \tau_B,\pi)-g(\cdot\mid \tau_B,\nu)}_1
\le
L\max_h
\norm{S_{\tau_B}(\pi_h)-S_{\tau_B}(\nu_h)}_1,
\end{equation}
where the posterior-predictive learner policy is defined by
\begin{equation}
\label{eq:sang-circular-posterior}
S_{\tau_B}(\pi_h)
:=
\E_{\tau_A\sim P^{\pi,g,\theta}(\cdot\mid \tau_B)}
\left[
\pi_h(\cdot\mid \tau_A)
\right].
\end{equation}
The difficulty is that the right-hand side of
\eqref{eq:sang-circular-lipschitz} is computed using the same adversary response
\(g\) that the condition is trying to constrain. Thus the metric with respect to
which \(g\) is required to be Lipschitz is itself induced by \(g\). This makes the adversary class circular because, before one can check whether
\(g\) is admissible, one must already use \(g\) to define the posterior-predictive quantities appearing in
the admissibility condition.

This circularity is not merely cosmetic. The posterior-predictive quantity
\(S_{\tau_B}(\pi_h)\) is used to compare the adversary responses induced by
different learner policies, and this comparison is then invoked in the Lipschitz
transfer and constraint-propagation steps of the analysis. If the comparison
metric itself depends on the candidate adversary response, the resulting
regularity condition does not define a fixed model class over which one can form
confidence sets and perform uniform concentration.

Our formulation removes this dependence by computing posterior-predictive learner
policies under a fixed reference measure \(\muref\). Specifically, we use
\begin{equation}
\label{eq:our-reference-posterior}
S_{\tau_B,h}^{\mathrm{ref},\theta}(\pi)
:=
\E_{\tau_A\sim P^{\pi,\muref,\theta}(\cdot\mid \tau_B)}
\left[
\pi_h(\cdot\mid \tau_A)
\right],
\end{equation}
and state posterior-Lipschitzness with
\(S_{\tau_B,h}^{\mathrm{ref},\theta}\) in place of \(S_{\tau_B}\). The reference
measure \(\muref\) is fixed independently of the adversary response being
constrained, so the right-hand side of the Lipschitz condition is well-defined
before choosing \(g\). The adversary class is therefore specified non-circularly,
while still measuring how the learner's policy appears from the adversary's
information state.
% ============================================================================

\subsection{Issue 2: A $T^{3/4}$-Scale Obstruction from the Batched Structure}
\label{app:degraded_bound}

Beyond the circularity issue, the flat-batching structure in
\citet{sang2025pomg} allows the historical transport term to grow polynomially
with \(T\). The point is not merely that many batches are used, but that the
posterior-Lipschitz transport term accumulates over previously deployed policies.
When this term is carried through the same weighted Eluder/Cauchy--Schwarz route
used to convert prediction errors into regret, it creates a \(T^{3/4}\)-scale
contribution.

Their algorithm uses a flat batching schedule with
\begin{equation}
\label{eq:sang-batch-schedule}
K=\sqrt{d_E T},
\qquad
n=\frac{T}{K}=\sqrt{\frac{T}{d_E}},
\end{equation}
where \(K\) is the number of batches, \(n\) is the number of data-collection
episodes per batch, and we write \(d_E\) for the corresponding complexity
parameter of their analysis.

A key quantity in their transport analysis is the historical accumulation term
\begin{equation}
\label{eq:sang-gamma-def}
\Gamma_{j-1}
:=
\sum_{k=1}^{j-1}
\sum_{h=1}^H
\E\left[
\norm{q^{\mathrm{mid},*}_h(\pi_k)}_1^2
\right].
\end{equation}
This is the analogue of the transport term in our analysis. It measures how much
historical predictive mass must be carried when an adversary-channel error is
transported from the historical policies \(\pi_k\) to the current batch policy.
In their Lemma 10, this term appears in the adversary-channel constraint as
\begin{equation}
\label{eq:sang-transport}
\sum_{k=1}^{j-1}
\sum_{h=1}^H
\E\left[
\norm{
\left[
G^{\Phi_j,\pi}_h
-
G^{\Phi^*,\pi}_h
\right]
q^{\mathrm{mid},*}_h
}_1^2
\right]
\le
C\beta
+
C'\Delta_\sigma(\pi,\nu_j)^2\Gamma_{j-1}.
\end{equation}
Since \(q^{\mathrm{mid},*}_h\) is a predictive state, its \(\ell_1\)-norm is
\(O(1)\). Therefore
\begin{equation}
\label{eq:sang-gamma-growth}
\Gamma_j
=
\sum_{k=1}^{j}
\sum_{h=1}^H
\E\left[
\norm{q^{\mathrm{mid},*}_h(\pi_k)}_1^2
\right]
=
O(jH).
\end{equation}
At the final batch, using \eqref{eq:sang-batch-schedule},
\begin{equation}
\label{eq:sang-gamma-final}
\Gamma_K
=
O(KH)
=
O\!\left(H\sqrt{d_E T}\right).
\end{equation}
Thus the historical transport term grows as \(\sqrt{T}\), rather than
polylogarithmically.

In the final assembly of their analysis, this transport contribution is absorbed
into a constant described as depending on ``the covering radius used to select a
nearest historical policy''; no argument is given that optimistically selected
policies form such a net, so we track the term explicitly below.

We next spell out how this growth affects the regret conversion. For batch \(j\),
define the per-episode adversary-channel prediction error
\begin{equation}
\label{eq:sang-Aj-def}
A_j^2
:=
\sum_{h=1}^H
\E\left[
\norm{
\left[
G^{\Phi_j,\pi_j}_h
-
G^{\Phi^*,\pi_j}_h
\right]
q^{\mathrm{mid},*}_h
}_1^2
\right].
\end{equation}
Equivalently, the batch-weighted squared error is \(nA_j^2\). This matches the
episode-weighted quantities used in the regret analysis, since batch \(j\)
contributes \(n\) data-collection episodes under the same deployed policy
\(\pi_j\).

The adversary-channel part of the value error in one episode of batch \(j\) is
bounded by \(H\sqrt H\, A_j\), where the factor \(H\) is the simulation-lemma
horizon factor and the \(\sqrt H\) is the Cauchy--Schwarz conversion over the
\(H\) steps, since \(A_j\) is the root of a stepwise \emph{squared} sum. Hence the
adversary-channel contribution to regret satisfies
\begin{equation}
\label{eq:sang-regret-before-cs}
\mathrm{Regret}_G
\le
H^{3/2}\sum_{j=1}^K nA_j .
\end{equation}
Applying Cauchy--Schwarz in the weighted form gives
\begin{equation}
\label{eq:sang-weighted-cs}
\sum_{j=1}^K nA_j
\le
\sqrt{\sum_{j=1}^K n}\,
\sqrt{\sum_{j=1}^K nA_j^2}
=
\sqrt{T}\,
\sqrt{\sum_{j=1}^K nA_j^2},
\end{equation}
where we used \(Kn=T\).

Thus the relevant quantity for regret is the weighted square-error sum
\(\sum_j nA_j^2\). Granting their weighted Eluder summability step (which, as
Proposition~\ref{prop:fixed-fails} shows, itself requires a per-batch cap that
a fixed batch length does not provide) and carrying the transport term in
\eqref{eq:sang-transport} through it gives a bound of the form
\begin{equation}
\label{eq:sang-weighted-eluder}
\sum_{j=1}^K nA_j^2
\le
\widetilde O\!\left(d_E(\beta+\Gamma_K)\right).
\end{equation}
Substituting \eqref{eq:sang-gamma-final} into
\eqref{eq:sang-weighted-eluder}, and ignoring the lower-order confidence term
\(\beta\), yields
\begin{equation}
\label{eq:sang-weighted-error-final}
\sum_{j=1}^K nA_j^2
\le
\widetilde O\!\left(d_E\cdot H\sqrt{d_E T}\right).
\end{equation}
Combining \eqref{eq:sang-regret-before-cs},
\eqref{eq:sang-weighted-cs}, and \eqref{eq:sang-weighted-error-final}, we obtain
\begin{align}
\mathrm{Regret}_G
&\le
H^{3/2}\sqrt{T}\,
\sqrt{
\widetilde O\!\left(d_E\cdot H\sqrt{d_E T}\right)
}
\label{eq:sang-regret-degraded-start}
\\
&=
\widetilde O\!\left(
H^{2}d_E^{3/4}T^{3/4}
\right).
\label{eq:sang-regret-degraded-final}
\end{align}
Therefore, once the historical transport term \(\Gamma_K\) is retained in the
weighted square-error bound, the final Cauchy--Schwarz step produces a
\(T^{3/4}\)-scale contribution.

To summarize, the calculation above isolates the obstruction caused by flat batching. With
\(K=\sqrt{d_E T}\) batches, the transport term satisfies
\(\Gamma_K=O(H\sqrt{d_E T})\) by \eqref{eq:sang-gamma-final}. Substituting this
growth into the weighted Eluder bound \eqref{eq:sang-weighted-eluder} and then
applying the regret conversion \eqref{eq:sang-weighted-cs} gives the
\(T^{3/4}\)-scale contribution in \eqref{eq:sang-regret-degraded-final}. Thus the
difficulty is not a logarithmic artifact. The flat-batching structure allows the
historical transport term to grow polynomially with \(T\), which obstructs a
direct \(\widetilde O(\sqrt T)\) guarantee by this analysis route.

\subsection{Our Solution: One-Policy-Per-Epoch Achieves $\tilde{O}(\sqrt{T})$}
\label{app:our_solution}

We now summarize how our algorithmic design avoids the two difficulties discussed
above. The first issue is handled at the modeling level by defining the
posterior-predictive learner policy with respect to the fixed reference measure
\(\muref\), as in \eqref{eq:our-reference-posterior}. This makes the
posterior-Lipschitz condition a non-circular condition on the adversary response
class. The second issue is handled algorithmically by using epochs rather than a
flat batch schedule.

Our algorithm caps epochs at geometrically increasing lengths
\(T_e=2^e\) and, crucially, terminates an epoch early as soon as the collected
data refute the deployed optimistic model. At the beginning of each epoch, the
learner forms a confidence set from all previously collected data and selects
one optimistic policy \(\pi_e\), which it plays until the cap or the
termination test ends the epoch. Concretely, after every data-collection episode the learner recomputes the running MLE and terminates the epoch as soon as the deployed model's log-likelihood falls more than \(\tau_e(t)=2\beta_e(t)+C_0B_0\log(2t(t{+}1)/\delta_e)\) below it, so a refuted commitment is abandoned within the episodes needed to certify the refutation. Two consequences follow. First, under the
joint KL-to-operator conversion (Assumption~\ref{ass:reg}(e)) our analysis
requires no policy-transport term at all. Past consistency is established
directly at the data-collection policies, the termination test caps every
epoch's contribution at \(\tilde O(H\beta_T)\)
(Lemma~\ref{lem:epoch-cap}), and the Eluder summability argument
(Lemma~\ref{lem:eluder-total}) yields
\begin{equation}
\label{eq:our-eluder-total}
\sum_{e} n_e\Delta_e^2
\le
\widetilde O(d_E\bar\beta_T),
\qquad
\bar\beta_T := H\beta_T .
\end{equation}
Applying Cauchy--Schwarz with \(\sum_e n_e\le T\),
\begin{equation}
\label{eq:our-cs}
\sum_{e} n_e\Delta_e
\le
\sqrt{
\left(\sum_e n_e\right)
\left(\sum_e n_e\Delta_e^2\right)}
\le
\widetilde O\!\left(\sqrt{d_E T\bar\beta_T}\right).
\end{equation}
The simulation lemma contributes the outer factor \(H\), and the warm-up
episodes contribute \((m-1)H\,S_T\), which the \(mB_0\) floor in \(\beta_e\)
makes unconditionally dominated by the leading term plus \(O(mH\log T)\)
(Lemma~\ref{lem:switch-count}). Therefore,
\begin{equation}
\label{eq:our-final-regret}
PR(T)
\le
\widetilde O\!\left(
H\sqrt{d_E T\bar\beta_T}
+
mH\log T
\right).
\end{equation}
Second, even from the transport viewpoint relevant to the flat-batching route
(Remark~\ref{rem:transport-role}), our \emph{scheduled} switches number only
\(O(\log T)\), so any transport accounting over them stays logarithmic; the
data-driven terminations are not free in that accounting, but our proof never
needs that accounting, since the per-epoch cap replaces transport entirely.
This is the substantive difference from a flat schedule with
\(K=\sqrt{d_ET}\) batches, where every one of the \(K\) switches is structural
and the transport term necessarily accumulates to
\(\Gamma_K=O(H\sqrt{d_ET})\).

The essential difference is that a flat schedule with polynomially many
deployed policies both forgoes the per-epoch cap (each batch runs its full
length regardless of what the data say) and, on the transport route, lets the
historical term grow polynomially in \(T\), as shown in
Appendix~\ref{app:degraded_bound}.

In summary, our reference posterior-predictive formulation removes the
circularity in the posterior-Lipschitz condition, and our epoch design, with
geometric caps and statistical termination, supplies the per-epoch cap that
makes infrequently updated optimistic planning sound while keeping policy
switches rare. These changes are
what allow the optimistic MLE analysis to recover the
\(\widetilde O(\sqrt T)\) policy-regret rate.

\end{document}